\newtheorem{theorem}{Theorem} 
\newtheorem{lemma}      [theorem]{Lemma}
\newtheorem{definition} [theorem]{Definition}
\newcommand{\bigO}[1]{\mathcal{O}\left(#1\right)}
\newcommand{\ccoef}{\ensuremath{\rho}} 
\newcommand{\onemax}{\text{\sc OneMax}\xspace}           
\newcommand{\binval}{\text{\sc BinVal}\xspace}           
\newcommand{\leadingones}{\text{\sc LeadingOnes}\xspace} 
\newcommand{\bvleadingones}{\text{\sc BVLeadingOnes}\xspace} 
\newcommand{\umda}   {\text{UMDA}\xspace}            
\newcommand{\pbil}   {\text{PBIL}\xspace}            
\newcommand{\eda}    {\text{EDA}\xspace}             
\newcommand{\edas}   {\text{EDAs}\xspace}            
\newcommand{\ea}     {\text{EA}\xspace}              
\newcommand{\eas}    {\text{EAs}\xspace}             
\newcommand{\cga}    {\text{cGA}\xspace}             
\newcommand{\scga}   {\text{scGA}\xspace}            
\newcommand{\tmmasib}{\text{$2$-MMAS$_{\text{ib}}$}\xspace} 
\DeclarePairedDelimiter\floor{\lfloor}{\rfloor}
\DeclareMathOperator{\Ber}{Bernoulli}       
\newcommand{\prob}[1]{\Pr\left(#1\right)}
\newcommand{\expect}[1]{\mathbb{E}\left[#1\right]}
\newcommand{\var}[1]{\mathrm{Var}\left[#1\right]}
\newcommand{\ab}{\hspace{0.125em}}                        
\newcommand{\ie}{\hbox{i.\ab e.}\xspace}                  
\begin{document}

\title{Level-Based Analysis of the
Univariate Marginal Distribution Algorithm  
\thanks{Preliminary versions of this work appeared 
in the Proceedings of the 2015 and 2017 Genetic and 
Evolutionary Computation Conference (GECCO 2015 \& 2017)} 
}
\author{Duc-Cuong~Dang \and                                                     %
        Per~Kristian~Lehre \and 
        Phan~Trung~Hai~Nguyen}
        
\institute{D.-C. Dang
\at Independent researcher \\
\email{duc-cuong.dang@hds.utc.fr}
\and
P.K. Lehre \and P.T.H. Nguyen
\at School of Computer Science, 
University of Birmingham\\
Birmingham B15 2TT, 
United Kingdom\\
\email{\{p.k.lehre, p.nguyen\}@cs.bham.ac.uk}
}

\titlerunning{Level-Based Analysis of the Univariate Marginal Distribution Algorithm} 
\authorrunning{Duc-Cuong Dang, Per Kristian Lehre \& Phan Trung Hai Nguyen} 


\maketitle


\begin{abstract}
  Estimation of Distribution Algorithms 
  (\edas) are stochastic
  heuristics that search for optimal solutions by learning and
  sampling from probabilistic models. Despite their popularity in
  real-world applications, there is little rigorous understanding of
  their performance. Even for the Univariate Marginal Distribution
  Algorithm (UMDA) -- a simple population-based \eda assuming independence between
  decision variables -- the optimisation time on the linear
  problem \onemax was until recently undetermined. The incomplete
  theoretical understanding of \edas is mainly due to lack of
  appropriate analytical tools.

  We show that the recently developed \emph{level-based theorem} for
  non-elitist populations combined with anti-concentration results
  yield upper bounds on the expected optimisation time of the \umda.
  This approach results in the bound $\bigO{n\lambda\log \lambda+n^2}$
  on two problems, \leadingones and \binval, for population sizes
  $\lambda>\mu=\Omega(\log n)$, where
  $\mu$ and $\lambda$ are parameters of the algorithm. We also prove that the
  \umda with population sizes
  $\mu\in \bigO{\sqrt{n}} \cap \Omega(\log n)$ optimises \onemax in
  expected time $\bigO{\lambda n}$, and for larger population
  sizes $\mu=\Omega(\sqrt{n}\log n)$, in expected time
  $\bigO{\lambda\sqrt{n}}$. The facility and
  generality of our arguments suggest that this is a promising
  approach to derive bounds on the expected optimisation time of \edas.
\end{abstract}


\keywords{Estimation of distribution algorithms \and
          Runtime analysis \and
          Level-based analysis \and
          Anti-concentration }
          

\section{Introduction}\label{sec:intro}

Estimation of Distribution Algorithms (\edas) are a class 
of randomised search heuristics with many practical applications
\cite{Ducheyne2005,bib:Gu2015,KOLLAT2008828,Yu2006,Zinchenko2002}. 
Unlike traditional Evolutionary Algorithms (\eas)
which search for 
optimal solutions using genetic operators such as mutation or crossover, 
\edas build and maintain a probability distribution of the current 
population over the search space, 
from which the next generation of individuals 
is sampled.  Several EDAs have been developed over the last
decades. The algorithms differ in how
they capture interactions among decision variables,
as well as in how they build and update their probabilistic models.
\edas are often classified as either \textit{univariate} or
\textit{multivariate}; the former 
treat each variable independently, while the latter also consider
variable dependencies \cite{bib:Shapiro2005}. 
Well-known univariate
\edas include the compact Genetic Algorithm (\cga \cite{bib:Harik}),
the Population-Based  Incremental  Learning  Algorithm (\pbil 
\cite{bib:Baluja1994}),
and the Univariate Marginal Distribution Algorithm
(\umda \cite{bib:Muhlenbein1996}). 
Given a problem instance size $n$, univariate \edas 
represent probabilistic models as an $n$-vector, where each vector component 
is called a \textit{marginal}.
Some Ant Colony Optimisation (ACO) algorithms and even certain
single-individual \eas can be 
cast in the same framework as univariate
\edas (or $n$-$\Ber$-$\lambda$-\eda, see, e.g.,
\cite{bib:Friedrich2016,bib:Sudholt2016,bib:Hauschild,Krejca2018survey}).
Multivariate \edas, such as the
Bayesian Optimisation Algorithm,  which builds a 
Bayesian network with nodes and edges representing 
variables and conditional dependencies respectively, 
attempt to learn relationships between 
decision variables  \cite{bib:Hauschild}. 
The surveys \cite{bib:Armananzas2008,bib:Hauschild,Santana2016}
describe further variants and applications of \edas.

Recently \edas have drawn a growing attention from the 
theory community of evolutionary computation 
\cite{bib:Dang2015a,bib:Friedrich2016,Lehre2017,bib:Witt2017,bib:Wu2017,bib:Krejca,Witt2018Domino,NguyenPbil2018,DoerrSigEDA2018,bib:Lengler2018}. The aim of the theoretical analyses of \edas 
in general is to gain insights 
into the behaviour of the algorithms when optimising an objective function, 
especially in terms of the  optimisation time, that is 
the number of function evaluations, required 
by the algorithm until an optimal solution has been found for the 
first time.
Droste \cite{bib:Droste2006} provided the first rigorous runtime
analysis of an \eda, specifically the 
\cga.
Introduced in \cite{bib:Harik}, 
the \cga samples two individuals in each generation and updates the
probabilistic model according to the fittest of these individuals. A quantity of $\pm 1/K$   
is added to the marginals for each bit position where the two 
individuals differ. 
The reciprocal $K$ of this quantity is often referred to as the abstract
\emph{population size} of a genetic algorithm that the \cga is supposed to model.
Droste  showed
a lower bound  $\Omega(K\sqrt{n})$ on the expected optimisation time of
the \cga for any pseudo-Boolean function \cite{bib:Droste2006}.
He also proved the upper bound $\mathcal{O}(nK)$ for any 
linear function, where $K=n^{1/2+\varepsilon}$ 
for any small constant $\varepsilon>0$. 
Note that each marginal 
of the \cga considered in \cite{bib:Droste2006} 
is allowed to reach the extreme values zero and one. Such an
algorithm is referred to as 
an \eda \emph{without margins}, since in contrast it is 
possible to reinforce some margins (also called \emph{borders})
on the range of values for each marginal to keep it 
away from the extreme probabilities, often
within the interval $[1/n,1-1/n]$. An \eda without margins can 
prematurely converge to a sub-optimal solution; thus, the runtime bounds 
of \cite{bib:Droste2006} were in fact conditioned on the event that 
early convergence never happens.
Very recently, 
Witt~\cite{Witt2018Domino} 
 studied an effect called \textit{domino convergence} on \edas,
where bits with heavy weights
tend to be optimised before bits with light weights. By deriving a
lower bound of $\Omega(n^2)$ 
on the expected optimisation time of the \cga on 
\binval for any value of $K>0$, 
Witt confirmed the claim made  earlier  by 
Droste \cite{bib:Droste2006} that \binval is a harder problem for the \cga
than the \onemax problem. Moreover,
Lengler et al. \cite{bib:Lengler2018} considered 
$K=\bigO{\sqrt{n}/\log^2 n}$, 
which was not covered by Droste in \cite{bib:Droste2006}, 
and obtained
a lower bound of $\Omega(K^{1/3}n+n\log n)$ 
on the expected optimisation time
of the \cga on \onemax. Note that if 
$K=\Theta(\sqrt{n}/\log^2 n)$, the above lower bound will be 
$\Omega(n^{7/6}/\log^2 n)$, which further tightens the bounds 
on the expected optimisation time of the \cga.


An algorithm closely related to the \cga with (reinforced) margins is 
the $2$-Max Min Ant System with iteration best (\tmmasib). The two
algorithms differ only
slightly  in the update procedure of the model, and \tmmasib is 
parameterised by an evaporation factor
$\rho\in(0,1)$. 
Sudholt~and~Witt~\cite{bib:Sudholt2016} proved the lower bounds 
$\Omega(K\sqrt{n}+n\log{n})$ and $\Omega(\sqrt{n}/\rho+n\log{n})$
for the two algorithms on \onemax under any setting, and upper bounds 
$\mathcal{O}(K\sqrt{n})$ and $\mathcal{O}(\sqrt{n}/\rho)$ when $K$ and 
$\rho$ are in $\Omega(\sqrt{n}\log{n})$. Thus, the optimal expected 
optimisation time $\Theta(n\log{n})$ of the \cga and the 
\tmmasib on \onemax is 
achieved by setting these parameters to $\Theta(\sqrt{n}\log{n})$. 
The analyses revealed that choosing lower parameter values result in
strong fluctuations that may cause  many marginals
(or \textit{pheromones} in the context of ACO) to fix early at
the lower margin, which then need
to be repaired later. On the other hand,
choosing higher parameter values resolve the issue
but may slow down the learning process.

Friedrich et al. \cite{bib:Friedrich2016} pointed out two 
behavioural properties of 
univariate \edas at each bit position: a 
\emph{balanced} \eda would be sensitive to signals in the fitness, 
while a \emph{stable} one would remain 
uncommitted under a biasless fitness function. 
During the optimisation of \leadingones, 
when some bit positions are 
temporarily neutral, while the others are not, both properties appear
useful to avoid commitment to wrong decisions. Unfortunately, 
many univariate 
\edas without margins, including 
the \cga, the \umda, the \pbil and some 
related algorithms  are balanced but not 
stable \cite{bib:Friedrich2016}. 
A more stable version of the \cga \---
the so-called stable \cga (or \scga) -- was 
then introduced in \cite{bib:Friedrich2016}. 
Under appropriate settings, it yields an expected optimisation time 
 of $\mathcal{O}(n\log n)$ on \leadingones 
 with a probability polynomially 
close to one. 
Furthermore, a recent study by Friedrich et al. \cite{bib:Friedrich2017}
showed that \cga can cope with 
higher levels of noise more efficiently than
mutation-only heuristics do.

Introduced by Baluja \cite{bib:Baluja1994}, 
the \pbil is another univariate \eda. Unlike the \cga that 
samples two solutions in each generation, the \pbil samples a population
of $\lambda$ individuals, from which the $\mu$ 
fittest individuals
are selected to update 
the probabilistic model, i.e., \emph{truncation selection}. The new probabilistic model
is obtained using a convex 
combination with a \textit{smoothing parameter} 
$\rho\in (0,1]$ of the current model and the frequencies 
of ones among all selected
individuals at that bit position.
The \pbil can be seen as a special case of the 
\emph{cross-entropy method} \cite{bib:Rubinstein2004} 
on the binary hypercube $\{0,1\}^n$. 
Wu et al. \cite{bib:Wu2017} analysed the runtime of the 
\pbil on \onemax and \leadingones. The 
authors argued that due to the use of a sufficiently large
population size, it is possible to
prevent the marginals from reaching the lower border
early even when a large smoothing parameter $\rho$ is used. 
Runtime results were proved for the \pbil without margins on \onemax and the
\pbil with margins on \leadingones, and were
then compared to the runtime of some
Ant System approaches. However, the required 
population size is large, 
\ie $\lambda=\omega(n)$. Very recently,
Lehre and Nguyen~\cite{NguyenPbil2018} obtained an upper bound of 
$\mathcal{O}(n\lambda\log \lambda+n^2)$ on 
the expected optimisation time 
for the \pbil with margins 
 on \binval and \leadingones, which improves
 the previously known upper bound
 in \cite{bib:Wu2017}  
by a factor of $n^{\varepsilon}$, 
where $\varepsilon$ is some positive constant, 
for smaller population 
sizes $\lambda=\Omega(\log n)$. 
%

The \umda is a special case of the \pbil with the largest 
smoothing parameter $\rho = 1$, that is, the 
probabilistic model for the next generation 
 depends solely on the selected individuals 
in the current  population. This characteristic distinguishes  
the \umda from the \cga and \pbil in general. 
The algorithm has 
a wide range of applications, not only 
in computer science, but also in 
other areas like population genetics and 
bioinformatics \cite{bib:Gu2015,Zinchenko2002}.
Moreover, the \umda is related to the notion of \textit{linkage
equilibrium} \cite{bib:Slatkin2008,bib:muhlenbein2002}, 
which is a popular model assumption in 
population genetics. Thus, studies of the \umda can contribute to the 
understanding of population dynamics in population genetics.

Despite an increasing momentum in the runtime analysis of \edas 
over the last few years,
our understanding of the \umda 
in terms of runtime is still limited. 
The algorithm was early
analysed in a series of papers 
\cite{bib:Chen2009a,bib:Chen2007,bib:Chen2009b,bib:Chen2010}, 
where time-complexities of the \umda 
on simple uni-modal functions were derived. 
These results showed
that the \umda with margins often outperforms 
the \umda without margins, 
especially on functions like \bvleadingones, 
which is a uni-modal problem.
The possible reason behind the failure of the \umda without margins
is due to fixation, causing no further progression for 
the corresponding decision variables. 
The \umda with margins is able 
to avoid this by ensuring that each 
search point always has a positive chance to be sampled.
Shapiro  investigated the \umda with a 
different selection mechanism than 
truncation selection \cite{bib:Shapiro2005}. In
particular, this variant of the \umda selects individuals whose
fitnesses are no less than the mean fitness of all individuals in the 
current population when
updating the probabilistic model. 
By representing the \umda as 
a Markov chain, the paper showed that 
the population size has to be at least $\sqrt{n}$ 
for the \umda to prevent the probabilistic model
from quickly converging to the corner of the hypercube on \onemax. 
This phenomenon is well-known as \textit{genetic drift} \cite{GeneticDrift}.
%
A decade later, the first upper 
bound on the expected optimisation time of the \umda on 
\onemax was revealed \cite{bib:Dang2015a}. 
Working on the standard \umda 
using truncation selection,
Dang~and~Lehre~\cite{bib:Dang2015a} proved an upper 
bound of $\mathcal{O}(n\lambda\log \lambda)$ 
on the expected optimisation time 
of the \umda on \onemax, 
assuming a population size $\lambda=\Omega(\log n)$.
If $\lambda = \Theta(\log n)$, then 
the upper bound is $\mathcal{O}(n\log n\log \log n)$.
%
Inspired by the previous work of \cite{bib:Sudholt2016} 
on \cga/\tmmasib, Krejca~and~Witt \cite{bib:Krejca} obtained 
a lower bound of $\Omega(\mu\sqrt{n}+n\log n)$ for the \umda on 
\onemax via  \textit{drift analysis}, where 
$\lambda = (1+\Theta(1))\mu$.
Compared to \cite{bib:Sudholt2016}, the analysis is much more 
involved since, unlike in \cga/\tmmasib where each change of marginals
between consecutive generations is small and limited by to the 
smoothing parameter, large changes are always possible in the \umda.
From these results, we observe that the latest upper and lower 
bounds for the \umda on \onemax  still differ by 
$\Theta(\log \log n)$. This
raises the question of whether this gap could be 
closed.

\begin{table*}[!ht]
\begin{threeparttable}
	\centering
		\caption{Expected optimisation time (number of fitness evaluations)
        of univariate \edas on the three problems \onemax, 
        \leadingones
        and \binval.}
        \label{tab:summary-of-runtime}
		\begin{tabular}{@{}p{3cm} p{3cm} p{6cm} p{4cm}@{}}
			\toprule
			Problem& Algorithm & Constraints & Runtime\\
			\midrule 
			\onemax & \umda & $\lambda=\Theta(\mu), \lambda=\bigO{\text{poly(n)}}$& $\Omega(\lambda\sqrt{n}+n\log n)$ \cite{bib:Krejca}\\
			
			& &$\lambda=\Theta(\mu),~ \mu=\Omega(\log n)\cap o(n)$& $\bigO{\lambda n}$ \cite{bib:Witt2017}\\
			
			&&$\lambda=\Theta(\mu), ~ \mu= \Omega(\sqrt{n}\log n)$&$\bigO{\lambda\sqrt{n}}$ \cite{bib:Witt2017}\\
			
			&&$\lambda=\Omega(\mu), ~\mu = \Omega(\log n) \cap \bigO{\sqrt{n}}$& $\bigO{\lambda n}$ [Thm.~\ref{onemax-small-mu}] \\
			&&$\lambda=\Omega(\mu), ~\mu =\Omega(\sqrt{n}\log n)$& $\bigO{\lambda \sqrt{n}}$ [Thm.~\ref{onemax-large-mu}] \\
            &\pbil\tnote{*} &$\mu=\omega(n), 
            \lambda=\omega(\mu)$&$\omega(n^{3/2})$ \cite{bib:Wu2017}\\
			&\cga&$K=n^{1/2+\epsilon}$&$\Theta(K\sqrt{n})$ \cite{bib:Droste2006}\\	
            &&$K=\bigO{\sqrt{n}/\log^2 n}$&$\Omega(K^{1/3}n+n\log n)$ \cite{bib:Lengler2018}\\
            
			&$\textsc{scGA}$&$\rho=\Omega(1/\log n), a=\Theta(\rho), c>0$&
			$\Omega(\min\{2^{\Theta(n)},2^{c/\rho}\})$ \cite{DoerrSigEDA2018}\\
			\midrule
			
			\leadingones& \umda & $\mu= \Omega(\log n), \lambda=\Omega(\mu)$ &$\bigO{n\lambda \log \lambda + n^2}$ [Thm.~\ref{thm:leadingones}]\\
            
            &\pbil& $\lambda=n^{1+\epsilon}, \mu= \bigO{n^{\epsilon/2}}, \epsilon\in (0,1)$&$\bigO{n^{2+\epsilon}}$ \cite{bib:Wu2017}\\
			&& $\lambda= \Omega(\mu), \mu=\Omega(\log n)$&$\bigO{n\lambda\log \lambda+n^2}$ \cite{NguyenPbil2018}\\
           		
			&$\textsc{scGA}$&$\rho=\Theta(1/\log n), a=\bigO{\rho}$&
			$\bigO{n\log n}$ \cite{bib:Friedrich2016}\\
			
			\midrule
			
			\binval&\umda& $\mu = \Omega(\log n), \lambda=\Omega(\mu)$&$\bigO{n\lambda\log \lambda+n^2}$ [Thm.~\ref{thm:leadingones}]\\
            
            &\pbil& $\lambda= \Omega(\mu), \mu=\Omega(\log n)$ & $\bigO{n\lambda\log \lambda+n^2} $ \cite{NguyenPbil2018}\\          
			
			&\cga&$K=n^{1/2+\epsilon}$&$\Theta(Kn)$ \cite{bib:Droste2006}\\
			&&$K>0$&$\Omega(n^2)$ \cite{Witt2018Domino}\\
				\bottomrule
		\end{tabular}
        \begin{tablenotes}\footnotesize
\item[*] without margins
\end{tablenotes}
\end{threeparttable}
	\end{table*}

This paper derives upper bounds on the expected optimisation time 
of the \umda on the following 
problems: \onemax, \binval, and \leadingones. 
The preliminary versions of this work appeared in \cite{bib:Dang2015a}
and \cite{Lehre2017}.
Here we use the improved version of the \emph{level-based 
analysis} technique \cite{bib:Corus2017}. The analyses for 
\leadingones and \binval are straightforward and similar to 
each other, \ie yielding the same runtime $\mathcal{O}(n\lambda\ln\lambda+n^2)$;
hence, they will serve the purpose of introducing the technique
in the context of \edas. 
Particularly, we only require population sizes $\lambda = \Omega(\log{n})$ 
for \leadingones
which is much smaller than previously thought \cite{bib:Chen2007,bib:Chen2009b,bib:Chen2010}.
For \onemax, we give a more detailed analysis 
so that an expected optimisation time 
of $\mathcal{O}(n\log n)$ is derived if 
the population size is chosen appropriately. This 
significantly improves the results in 
\cite{bib:Corus2017,bib:Dang2015a}
and matches the recent lower bound of \cite{bib:Krejca} as well as the performance 
of the (1+1)~\ea. More specifically, we assume $\lambda \geq b \mu$
for a sufficiently large constant $b>0$,
and separate two regimes of small and large selected populations:
the upper bound $\mathcal{O}(\lambda n)$ 
is derived for $\mu = \Omega(\log n) \cap \mathcal{O}(\sqrt{n})$,
and the upper bound $\mathcal{O}(\lambda\sqrt{n})$ 
is shown for $\mu = \Omega(\sqrt{n}\log n)$. These results 
exhibit the applicability of the
level-based technique in the runtime
analysis of (univariate) \edas. 
Table~\ref{tab:summary-of-runtime} summarises
the latest results about the runtime
analyses of univariate \edas on simple benchmark problems; 
see \cite{Krejca2018survey} for a recent survey on the theory of
\edas.

\textit{Related independent work:}
Witt~\cite{bib:Witt2017} 
independently obtained the upper bounds of 
$\mathcal{O}(\lambda n)$ and $\mathcal{O}(\lambda\sqrt{n})$ 
on the expected optimisation 
time of the \umda  on \onemax for $\mu = \Omega(\log n)\cap o(n)$
and $\mu = \Omega(\sqrt{n}\log n)$, 
respectively, and $\lambda=\Theta(\mu)$ using an involved drift analysis. 
While our results do not hold for 
$\mu =\Omega(\sqrt{n})\cap \bigO{\sqrt{n}\log n}$,
our methods yield significantly easier proofs. 
Furthermore, our analysis also holds when the parent 
population size $\mu$ is not proportional 
to the offspring population size $\lambda$, 
which is not covered in \cite{bib:Witt2017}.

This paper is structured as follows. 
Section~\ref{sec:umda-algorithm} introduces the notation 
used throughout the paper and 
the \umda  with margins. We also introduce the techniques 
used, including the 
level-based theorem, which is central in the paper, and 
an important sharp bound on the sum of 
Bernoulli random variables. 
Given all necessary tools, Section~\ref{sec:umda-leadingones} presents 
upper bounds on the expected optimisation 
time of the \umda  on both \leadingones
and \binval, followed by
the derivation of the upper bounds on the 
expected optimisation time of the \umda 
on \onemax. The latter consists of two smaller 
subsections according to two different ranges of
values of the parent population size. 
Section~\ref{sec:empirical} presents 
a brief empirical analysis of the \umda  
on \leadingones, \binval and \onemax to support 
the theoretical findings in 
Sections~\ref{sec:umda-leadingones} and \ref{sec:umda-onemax}.
Finally, our concluding remarks are 
given in Section~\ref{sec:conclusion}.




\section{Preliminaries}\label{sec:umda-algorithm}

This section describes the three standard benchmark problems, 
the algorithm under investigation
and the level-based theorem, which is 
a general method to derive upper bounds on the 
expected optimisation time of non-elitist 
population-based algorithms. 
Furthermore, a sharp upper bound on  
the sum of independent Bernoulli trials, 
which is essential in the runtime analysis of the \umda on 
\onemax for a small population size, is presented, followed by 
Feige's inequality.

We use the following notation throughout the paper. 
The natural logarithm is denoted as $\ln(\cdot)$, and
$\log(\cdot)$ denotes the logarithm with base 2. 
Let $[n]$ be the set $\{1,2,\dots,n\}$.
The floor and ceiling functions are $\lfloor x\rfloor$ 
and $\lceil x\rceil$, respectively, for $x \in \mathbb{R}$.
For two random variables $X, Y$, we use $X \preceq Y$ to indicate that $Y$ 
stochastically dominates $X$, that is 
$\prob{X \geq k} \leq \prob{Y \geq k}$ for 
all $k \in \mathbb{R}$. 

We often consider a partition of the \textit{finite} 
search space $\mathcal{X}=\{0,1\}^n$ into $m$  
ordered subsets $A_1,\dots,A_m$ called \emph{levels}, \ie $A_i \cap A_j = 
\emptyset$ for any $i \neq j$ and $\cup_{i=1}^{m}A_i = \mathcal{X}$. 
The union of all levels above $j$ inclusive is denoted
 $A_{\geq j}:=\cup_{i=j}^m A_i$. 
An optimisation problem on $\mathcal{X}$ is assumed, without loss of 
generality, to be the maximisation of some  function 
$f\colon \mathcal{X} \rightarrow \mathbb{R}$.
A partition is called \emph{fitness-based} (or $f$-based) 
if for any $j \in [m-1]$ and all $x \in A_{j}$, 
$y \in A_{j+1} \colon f(y)>f(x)$. 
An $f$-based partitioning is 
called \textit{canonical} when $x,y \in A_j$ if and only if $f(x)
=f(y)$. 

Given the search space $\mathcal{X}$, each $x\in \mathcal{X}$
is called a \textit{search point} (or \textit{individual}), and 
a \textit{population} is a vector of 
search points, i.e. $P \in \mathcal{X}^{\lambda}$. 
For a finite population $P= 
\left(x^{(1)},\ldots,x^{(\lambda)}\right)$, we define $|P \cap A_j| := |\{i \in 
[\lambda] \mid x^{(i)} \in A_j\}|$, \ie the number of individuals in 
population $P$ which are in level $A_j$. 
\emph{Truncation selection}, denoted as 
\emph{$(\mu,\lambda)$-selection} for some 
$\mu<\lambda$, applied to population 
$P$ transforms it into a vector $P'$ (called 
\emph{selected} population) with $|P'|=\mu$ by discarding the $\lambda - \mu$ 
worst search points of $P$ with respect to some fitness function $f$,
were ties are broken uniformly at random.

\subsection{Three Problems}

We consider the three pseudo-Boolean functions: \onemax,
\leadingones and \binval, which are
defined over the finite binary search space 
$\mathcal{X}=\{0,1\}^n$ and
widely used  as
theoretical benchmark problems in runtime
analyses of \edas 
\cite{bib:Droste2006,bib:Dang2015a,NguyenPbil2018,bib:Krejca,bib:Witt2017,bib:Wu2017}.
Note in particular that
these problems are only required to describe and compare 
the behaviour of the \edas on problems with well-understood structures.
The first problem, as its name may suggest, 
simply counts the number of ones in the bitstring
and is widely used to test the performance of
\edas as a hill climber \cite{Krejca2018survey}. 
While the bits in \onemax have the same contributions to the overall fitness, 
\binval, which aims at maximising the 
binary value of the bitstring,
has exponentially scaled weights relative to
bit positions. In contrast, \leadingones counts the number of leading ones in the bitstring. 
Since bits in this particular 
problem are highly correlated, it is often used 
to study the ability of \edas to cope 
with dependencies among decision variables
\cite{Krejca2018survey}.

The global optimum for all functions are the all-ones bitstring, i.e. $1^n$.
For any bitstring $x=(x_1,\ldots,x_n) \in \mathcal{X}$, these functions 
are defined as follows:
\begin{definition} \label{def:onemax}
$\onemax(x) := \sum_{i=1}^{n}x_i$.
\end{definition}
\begin{definition}\label{def:leadingones}
$\leadingones(x) := \sum_{i=1}^{n}\prod_{j=1}^{i}x_j$.
\end{definition}
\begin{definition}\label{def:binval}
$\binval(x) := \sum_{i=1}^{n}2^{n-i}x_i$.
\end{definition}

\subsection{Univariate Marginal Distribution Algorithm}
Introduced  by M{\"u}hlenbein
and Paa{\ss} \cite{bib:Muhlenbein1996}, the Univariate 
Marginal Distribution Algorithm (\umda; see Algorithm~\ref{umda-algor})
is one of the simplest \edas, which assume independence between 
decision variables.
To optimise a pseudo-Boolean function
$f\colon \{0,1\}^n\rightarrow\mathbb{R}$,
the algorithm follows an iterative process: 
\textit{sample} independently and identically a population of $\lambda$ offspring from the current
probabilistic model and 
\textit{update} the model using the $\mu$ fittest 
individuals in the current population.
Each sample-and-update cycle 
is called a \textit{generation} (or \textit{iteration}).
The probabilistic model in generation $t\in \mathbb{N}$
is represented as a vector
$p_t=\left(p_t(1),\ldots,p_t(n)\right)\in [0,1]^n$, where 
each component (or \textit{marginal}) $p_t(i)\in[0,1]$ 
for $i\in[n]$ and $t\in \mathbb{N}$ is
the probability of sampling a one at the $i$-th bit position of an
offspring in generation $t$.
Each individual $x=(x_1,\ldots,x_n)\in \{0,1\}^n$ is therefore
sampled from the joint probability distribution

\begin{equation}\label{eq:prob-distribution}
\Pr\left(x \mid p_t\right)
=\prod_{i=1}^{n}p_t(i)^{x_i}\left(1-p_t(i)\right)^{(1-x_i)}.
\end{equation}

Note that the probabilistic model is initialised as
$p_0(i)\coloneqq 1/2$ for each $i\in [n]$.
Let $x_t^{(1)},\ldots,x_t^{(\lambda)}$ 
be $\lambda$ individuals that are 
sampled from the probability 
distribution (\ref{eq:prob-distribution}), then $\mu$ of which with
the fittest fitness 
are selected to obtain the next
model $p_{t+1}$. Let $x_{t,i}^{(k)}$ 
denote the value of the $i$-th bit 
position of the $k$-th individual in the current sorted population $P_t$. For
each $i \in [n]$, the corresponding marginal of the next model is
\begin{displaymath}
p_{t+1}(i)\coloneqq \frac{1}{\mu}\sum_{k=1}^{\mu}x_{t,i}^{(k)},
\end{displaymath}
which can be interpreted as the frequency of ones among the $\mu$ fittest 
individuals at bit-position $i$.

The extreme probabilities -- zero and one --
must be avoided for each marginal $p_t(i)$;
otherwise, the bit in position $i$ would remain fixed 
forever at either zero or one, obstructing
some regions of the search space. To avoid this, all marginals
$p_{t+1}(i)$ are usually 
restricted within the closed interval 
$[\frac{1}{n},1-\frac{1}{n}]$, and
such values $\frac{1}{n}$ and 
$1-\frac{1}{n}$ are called 
\textit{lower} and \textit{upper borders}, respectively. The 
algorithm in this case is known as the \umda \textit{with margins}.

\begin{algorithm}[t]
	\DontPrintSemicolon	
    \SetKwInOut{Input}{input}
    \SetKwInOut{Param}{parameter}
    \Param{offspring population size $\lambda$, 
    		parent population size $\mu$, 
            maximising $f$}
    	$t\leftarrow 0$\;
        initialise $p_0(i) \leftarrow 1/2$ for each $i \in [n]$\;
		 \Repeat{termination condition is fulfilled}{
			\For{$k=1,2,\ldots,\lambda$}{
				sample $x_{t,i}^{(k)} \sim \Ber(p_t(i))$ for each $i\in [n]$
               \;
			}		
            sort $P_t\leftarrow \{x_t^{(1)},x_t^{(2)},\ldots,x_t^{(\lambda)}\}$ s.t.
            $f(x_t^{(1)})\ge f(x_t^{(2)})\ge \ldots\ge f(x_t^{(\lambda)})$\;
			\For{$i=1,2,\ldots,n$}{
				$X_i\leftarrow \sum_{k=1}^{\mu} x_{t,i}^{(k)}$\;
                $p_{t+1}(i)\leftarrow \max\big\{\frac{1}{n}, \min\big\{1-\frac{1}{n}, \frac{X_i}{\mu}\big\}\big\}$\;	
			}
            $t \leftarrow t+1$\;
            }
	\caption{\umda with margins \label{umda-algor}}
\end{algorithm}


\subsection{Level-Based Theorem}\label{level-based-theorem-section}

We are interested in the optimisation time
of the \umda, which is a non-elitist algorithm; thus,
tools for analysing runtime for this class of algorithms are of 
importance. Currently in the literature, 
\textit{drift theorems} 
have often been used to derive
upper and lower bounds on the expected 
optimisation time of the \umda, see, e.g., \cite{bib:Witt2017,bib:Krejca} 
because they allow us to examine the dynamics
of each marginal in the vector-based probabilistic model.
In this paper, we take another perspective where we consider the 
population of individuals. To do this, we make use of the
so-called level-based theorem, which has been previously used 
to derive the first upper bound of $\bigO{n\lambda\log \lambda}$ on the 
expected optimisation time of the \umda on \onemax \cite{bib:Dang2015a}.

Introduced by Corus et al. \cite{bib:Corus2017},
the level-based theorem is a general tool that provides 
upper bounds on the expected optimisation time of 
many non-elitist population-based algorithms on a wide range of 
optimisation problems \cite{bib:Corus2017}. It has been 
applied to analyse the expected optimisation time of 
 Genetic Algorithms with or without crossover 
on various pseudo-Boolean functions and 
combinatorial optimisation problems
\cite{bib:Corus2017}, self-adaptive EAs \cite{DangLehre2016SelfAdaptation}, 
the \umda with margins on \onemax and 
\leadingones \cite{bib:Dang2015a}, and very recently
the \pbil with margins on \leadingones and \binval 
\cite{NguyenPbil2018}. 

The theorem assumes that the algorithm to be analysed can be described
in the form of Algorithm \ref{abstract-algor}.  The population $P_t$
at generation $t\in \mathbb{N}$ of $\lambda$ individuals is
represented as a vector
$(P_t(1),\ldots,P_t(\lambda))\in \mathcal{X}^\lambda$.  The theorem is
general because it does not assume specific fitness functions,
selection mechanisms, or generic operators like mutation and
crossover. Rather, the theorem assumes that there exists, possibly
implicitly, a mapping $\mathcal{D}$ from the set of populations
$\mathcal{X}^\lambda$ to the space of probability distributions over
the search space $\mathcal{X}$. The distribution $\mathcal{D}(P_t)$
depends on the current population $P_t$, and all individuals in
population $P_{t+1}$ are sampled identically and independently from
this distribution~\cite{bib:Corus2017}. The assumption of independent
sampling of the individual holds for the UMDA, and many other
algorithms.

\begin{algorithm}[t]
	\DontPrintSemicolon
    $t \leftarrow 0$\;  
    initialise population $P_0$\;
    \Repeat{termination condition is fulfilled}{
      \For{$i=1,\ldots,\lambda$}{
				sample $P_{t+1}(i) \sim \mathcal{D}(P_t)$ independently\;
              }
                $t \leftarrow t+1$\;
			}
	\caption{Non-elitist population-based algorithm}
    \label{abstract-algor}
\end{algorithm}

The theorem assumes a partition
$A_1,\ldots,A_m$ of the finite search space $\mathcal{X}$
into $m$ subsets, which we call \textit{levels}. 
We assume that the last level $A_m$ 
consists of all optimal solutions. 
Given a partition of the search space $\mathcal{X}$, 
we can state the level-based theorem as follows:

\begin{theorem}[\cite{bib:Corus2017}]\label{thm:levelbasedtheorem}
	Given a partition $(A_1,\ldots,A_m)$ 
    of $\mathcal{X}$, define 
	\begin{math}
		T\coloneqq\min\{t\lambda \mid |P_t\cap A_m|>0\},
              \end{math}
              where for all $t\in\mathbb{N}$,
              $P_t\in\mathcal{X}^\lambda$ is the population of
              Algorithm~\ref{abstract-algor} in generation $t$.
        If there exist 
	$z_1,\ldots,z_{m-1}, \delta \in (0,1]$, and $\gamma_0\in (0,1)$ 
	such that for any population $P_t \in \mathcal{X}^\lambda$,
	\begin{itemize}
		\item \textbf{(G1)} for each level $j\in[m-1]$, if $|P_t\cap A_{\geq j}|\geq \gamma_0\lambda$ then 
			\begin{displaymath}
							\Pr\nolimits_{y \sim \mathcal{D}(P_t)}\left(y \in A_{\geq j+1}\right) \geq z_j.
			\end{displaymath}
\item \textbf{(G2)} for each level $j\in[m-2]$ and all 
       		 $\gamma \in (0,\gamma_0]$, if $|P_t\cap A_{\geq j}|\geq 	
        	\gamma_0\lambda$ and $|P_t\cap A_{\geq j+1}|\geq \gamma\lambda$ then
\begin{displaymath}
\Pr\nolimits_{y \sim \mathcal{D}(P_t)}\left(y \in A_{\geq j+1}\right) \geq \left(1+\delta\right)\gamma.
\end{displaymath}			

\item \textbf{(G3)} and the population size $\lambda \in \mathbb{N}$ satisfies
\begin{displaymath}
\lambda \geq \left(\frac{4}{\gamma_0\delta^2}\right)\ln\left(\frac{128m}{z_*\delta^2}\right), 
\end{displaymath}
where $z_* \coloneqq \min_{j\in [m-1]}\{z_j\}$, then
\begin{displaymath}
\mathbb{E}\left[T\right] \leq \left(\frac{8}{\delta^2}\right)\sum_{j=1}^{m-															1}\left[\lambda\ln\left(\frac{6\delta\lambda}{4+z_j\delta\lambda}\right)+\frac{1}{z_j}\right].
\end{displaymath}
\end{itemize}
\end{theorem}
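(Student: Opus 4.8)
The plan is to prove the bound by a drift analysis on a real-valued potential that measures the population's progress through the levels, reducing every one-step estimate to a binomial tail bound. The crucial structural fact is that the $\lambda$ members of $P_{t+1}$ are sampled i.i.d.\ from $\mathcal{D}(P_t)$, so whenever $\Pr\nolimits_{y\sim\mathcal{D}(P_t)}(y\in A_{\geq j+1})\geq p$ we have $|P_{t+1}\cap A_{\geq j+1}|\succeq\mathrm{Bin}(\lambda,p)$, and Chernoff bounds apply. Call $j$ the \emph{current level} of a population $P$ if $j$ is the largest index with $|P\cap A_{\geq j}|\geq\gamma_0\lambda$, and let $\gamma$, defined by $|P\cap A_{\geq j+1}|=\gamma\lambda$, record the foothold already established in the next level. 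The picture is: starting from current level $j$, the algorithm first places a single individual in $A_{\geq j+1}$ using \textbf{(G1)}, then amplifies this foothold until a $\gamma_0$-fraction lies in $A_{\geq j+1}$ using \textbf{(G2)}, at which moment the current level has advanced to $j+1$.

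I would then carry out the one-step analysis in two regimes. When $\gamma=0$, \textbf{(G1)} guarantees that each offspring independently lands in $A_{\geq j+1}$ with probability at least $z_j$, so the first foothold appears after a geometric number of generations that, multiplied by $\lambda$, contributes the term $1/z_j$ to $T$ (together with a term absorbed by the $\lambda\ln(\cdot)$ summand). When $0<\gamma\leq\gamma_0$, \textbf{(G2)} gives $|P_{t+1}\cap A_{\geq j+1}|\succeq\mathrm{Bin}(\lambda,(1+\delta)\gamma)$, and a multiplicative Chernoff bound shows that, except with probability $e^{-\bigOmega{\gamma_0\delta^2\lambda}}$, the foothold grows by a factor close to $1+\delta$; after $\bigO{\delta^{-1}\ln(\gamma_0\lambda)}$ such steps it reaches $\gamma_0$ and the current level increments. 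Combining the two regimes yields a per-level expected cost of order $\delta^{-2}\bigl(\lambda\ln(\delta\lambda/(1+z_j\delta\lambda))+1/z_j\bigr)$, and summing over $j\in[m-1]$ reproduces the claimed bound up to the explicit constants.

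The \textbf{main obstacle}, and the reason for the population-size hypothesis \textbf{(G3)}, is that the current level is not monotone: a single unlucky generation can pull $|P_t\cap A_{\geq j}|$ back below $\gamma_0\lambda$ and erase progress. To control this I would fold the one-step estimates into a single potential $g(P_t)$ — a weighted combination of the level index and a logarithmic term in $1/\gamma$ — and show it has positive additive (or variable) drift as long as the rare \emph{fall-back} events are avoided; each such event has probability at most $e^{-\bigOmega{\gamma_0\delta^2\lambda}}$ by the Chernoff estimates, and \textbf{(G3)} makes $\lambda$ large enough that this bound lies below $z_*\delta^2/(128m)$, so a union bound over the at most $m$ levels and over the $\bigO{\delta^{-2}\lambda\ln\lambda}$ steps of the drift argument shows that fall-backs cost only a lower-order term. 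Applying the drift theorem to $g$ and adding up the per-level costs then gives $\mathbb{E}[T]\leq(8/\delta^2)\sum_{j=1}^{m-1}\bigl[\lambda\ln(6\delta\lambda/(4+z_j\delta\lambda))+1/z_j\bigr]$. I expect the careful bookkeeping of the failure probabilities and the precise constants inside the logarithm, rather than any single conceptual step, to be the most delicate part of the argument.
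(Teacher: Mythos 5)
The paper does not actually prove this theorem: it is quoted verbatim from Corus et al.\ \cite{bib:Corus2017}, so there is no in-paper proof to compare against; your outline nevertheless follows essentially the same strategy as the proof in that reference --- additive drift applied to a potential that combines the current level (the largest $j$ with $|P_t\cap A_{\geq j}|\geq\gamma_0\lambda$) with a logarithmic function of the foothold $|P_t\cap A_{\geq j+1}|$, binomial stochastic domination from the i.i.d.\ sampling, Chernoff bounds for the $(1+\delta)$-amplification of the foothold, and condition (G3) to make the back-sliding events cheap enough that the drift stays positive. Your sketch is a plan rather than a proof (the original handles the fall-back events inside the drift computation rather than by a union bound over steps, and the deferred ``bookkeeping of failure probabilities and constants'' is where essentially all of the work lies), but the architecture matches the published argument.
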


Informally, the first condition (G1) requires that the 
probability of sampling an individual in level $A_{\ge j+1}$ 
 is at least $z_j$ given that at least $\gamma_0\lambda$ 
individuals in the current population are in level $A_{\ge j}$. 
Condition (G2) further 
requires that given that $\gamma_0\lambda$ individuals 
of the current population belong to levels $A_{\ge j}$, 
and, moreover, $\gamma\lambda$ of them are lying at 
levels $A_{\ge j+1}$, the probability of sampling an 
offspring in levels $A_{\ge j+1}$ is at least
$(1+\delta)\gamma$. The last condition (G3) 
sets a lower limit on the population size $\lambda$. 
As long as the three conditions are satisfied, 
an upper bound on the expected 
time to reach the last level $A_m$ 
of a population-based algorithm is guaranteed. 

To apply 
the level-based theorem, it is recommended to 
follow the five-step procedure in \cite{bib:Corus2017}: 
1) identifying a partition of 
the search space 2) finding appropriate parameter settings 
such that condition (G2) is met 3) 
estimating a lower bound $z_j$ to satisfy 
condition (G1) 4) ensuring the
the population size is large 
enough and 5) derive the upper bound on the expected 
time to reach level $A_m$.

Note in particular that 
Algorithm~\ref{abstract-algor} 
assumes a mapping 
$\mathcal{D}$ from the space of populations 
$\mathcal{X}^{\lambda}$ to the space of 
probability distributions over the search space. 
The mapping $\mathcal{D}$ is often said to depend 
on the current population only 
\cite{bib:Corus2017}; however, this is not strictly necessary.
Very recently, 
Lehre and Nguyen \cite{NguyenPbil2018} 
applied Theorem~\ref{thm:levelbasedtheorem}  
to analyse the expected optimisation time of the \pbil
with a sufficiently large offspring 
population size $\lambda=\Omega(\log n)$
on \leadingones and \binval, when the population for the next generation
in the \pbil is sampled using a mapping that depends on
the previous probabilistic model $p_t$ 
in addition to the current population $P_t$.
The rationale behind this is that, in each generation, 
the \pbil draws $\lambda$ samples from the probability 
distribution (\ref{eq:prob-distribution}), 
that correspond to $\lambda$ individuals 
in the current population. If
the number of samples $\lambda$ is 
sufficiently large, it is highly likely that 
the empirical distributions 
for all positions among the entire population 
cannot deviate too far from the true 
distributions, i.e. marginals $p_t(i)$ \cite{NguyenPbil2018},
due to the Dvoretzky--Kiefer--Wolfowitz inequality 
\cite{Massart-DKW}.

\subsection{Feige's Inequality}
In order to verify conditions (G1) and (G2) of 
Theorem~\ref{thm:levelbasedtheorem} for the 
\umda on \onemax using a canonical $f$-based 
partition $A_1,\ldots,A_m$,
we later need a lower bound on 
the probability of sampling an offspring
in given levels, that is  $\Pr_{y\sim p_t}(y \in A_{\ge j})$,
where $y$ is the offspring sampled from the 
probability distribution (\ref{eq:prob-distribution}).
Let $Y$ denote the number of ones in the offspring $y$. 
It is well-known that the random variable $Y$ 
follows a
Poisson-Binomial distribution with expectation
$\mathbb{E}\left[Y\right]=\sum_{i=1}^{n}p_t(i)$ and 
variance $\sigma_n^2=\sum_{i=1}^{n}p_t(i)\left(1-p_t(i)\right)$. 
A general result due to 
Feige \cite{bib:Feige2004} 
provides such a lower bound when $Y<\expect{Y}$; however, 
for our purposes, it will be more
convenient to use the following variant \cite{bib:Dang2015a}.
\begin{theorem}[Corollary~3 in \cite{bib:Dang2015a}]\label{cor:feige-ineq}
  Let $Y_1,\dots,Y_n$ be $n$ independent random variables with support
  in $[0,1]$, define $Y = \sum_{i=1}^{n} Y_i$
  and $\mu = \expect{Y}$. It holds for every $\Delta>0$ that
\begin{displaymath}
  \prob{Y > \mu - \Delta} \geq \min\left\{\frac{1}{13},\frac{\Delta}{1+\Delta}\right\}.
\end{displaymath}
\end{theorem}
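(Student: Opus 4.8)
The plan is to reduce the statement to Feige's original inequality and then handle the two regimes of $\Delta$ separately. Recall that Feige's inequality (from \cite{bib:Feige2004}) states that for independent nonnegative random variables $Y_1,\dots,Y_n$ with $\expect{Y_i}\le 1$ (or, in the form most convenient here, with support in $[0,1]$), writing $Y=\sum_i Y_i$ and $\mu=\expect{Y}$, one has $\prob{Y > \mu - 1} \ge c$ for an absolute constant $c$; in Feige's paper the constant is $1/13$. So the case $\Delta \ge 1$ is essentially immediate: since $\{Y > \mu-1\}\subseteq\{Y>\mu-\Delta\}$ when $\Delta\ge 1$, monotonicity of probability gives $\prob{Y>\mu-\Delta}\ge \prob{Y>\mu-1}\ge 1/13 = \min\{1/13,\Delta/(1+\Delta)\}$, where the last equality holds because $\Delta/(1+\Delta)\ge 1/2 > 1/13$ for $\Delta\ge 1$. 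So the only real work is the regime $0<\Delta<1$, where we must produce the bound $\Delta/(1+\Delta)$.

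For $0 < \Delta < 1$, the idea is a rescaling trick so that Feige's inequality can be applied with its deviation parameter equal to $1$. First I would dispose of a trivial sub-case: if $\mu \le \Delta$, then $\{Y > \mu-\Delta\}\supseteq\{Y\ge 0\}$ has probability $1$, which certainly exceeds $\Delta/(1+\Delta)$. So assume $\mu>\Delta$. Now set $Z_i := Y_i/\Delta$ if this keeps things in range --- but $Y_i/\Delta$ need not lie in $[0,1]$, so instead I would use the standard device of conditioning/truncating, or more cleanly: apply Feige's inequality to a \emph{scaled} family. Here is the cleaner route. Define $\tilde Y_i := \Delta\, Y_i$; these are independent with support in $[0,\Delta]\subseteq[0,1]$, and $\tilde\mu := \expect{\tilde Y} = \Delta\mu$. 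Feige's inequality applied to the $\tilde Y_i$ yields $\prob{\tilde Y > \tilde\mu - 1}\ge 1/13$. But $\{\tilde Y > \tilde\mu - 1\} = \{\Delta Y > \Delta\mu - 1\} = \{Y > \mu - 1/\Delta\}$, and since $\Delta<1$ we have $1/\Delta > 1 > \Delta$, so this event is \emph{smaller} than the one we want --- the inequality goes the wrong way. This tells me the scaling must be set up in the opposite direction, which is exactly where the bound $\Delta/(1+\Delta)$ rather than $1/13$ comes from, and where the genuine content lies.

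The step I expect to be the main obstacle, therefore, is getting the right quantitative dependence on $\Delta$ in the small-$\Delta$ regime; the clean monotonicity argument only delivers the constant $1/13$. The natural fix is to split each $Y_i$ via an independent thinning: write $Y_i = B_i Y_i$ where... no --- rather, introduce independent Bernoulli "switches'' $S_i\sim\Ber(\Delta/(1+\Delta))$ or similar, couple $Y$ with a sub-sampled version whose mean drops by the right amount, and apply Feige to the sub-sampled sum; alternatively one invokes the sharper form of Feige's inequality already stated with an explicit $\Delta$-dependent bound (Feige proves $\prob{Y>\mu-\Delta}\ge\tfrac{1}{13}\cdot\tfrac{\Delta}{1+\Delta}$ type statements, or variants thereof in \cite{bib:Feige2004}), and then simply notes $\min\{1/13,\Delta/(1+\Delta)\}$ absorbs both regimes. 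Since the excerpt explicitly labels this as a corollary of Feige's result and cites \cite{bib:Dang2015a} for the exact form, I anticipate the actual proof is short: quote Feige's inequality in the form $\prob{Y>\mu-\Delta}\ge \Delta/(1+\Delta)$ valid for $\Delta\le 1$ (and $\ge 1/13$ for larger $\Delta$ by monotonicity as above), then combine the two bounds into the single $\min$ expression, checking only the boundary $\Delta=1$ where both pieces agree up to the constant. I would write it this way: state the two regimes, verify $\Delta\ge 1$ by monotonicity reducing to Feige's base case, verify $\Delta<1$ by citing the scaled form of Feige's bound, and conclude.
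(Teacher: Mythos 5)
Your proposal has a genuine gap: the case $0<\Delta<1$, which you correctly identify as the only case with real content, is never actually proved. You try a rescaling ($\tilde Y_i=\Delta Y_i$), observe yourself that it moves the deviation in the wrong direction, and then fall back on ``invoke the sharper form of Feige's inequality already stated with an explicit $\Delta$-dependent bound'' --- but that sharper form \emph{is} the statement to be proved, so the argument is circular. Moreover, your starting point misstates Feige's result. Feige's inequality (Theorem~1 in \cite{bib:Feige2004}) is an \emph{upper}-deviation bound: for independent nonnegative $X_i$ with $\expect{X_i}\le 1$ and $X=\sum_i X_i$, it gives $\prob{X<\expect{X}+\Delta}\ge\min\{1/13,\Delta/(1+\Delta)\}$. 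The lower-deviation form you quote as the base case, $\prob{Y>\mu-1}\ge 1/13$ for nonnegative variables with mean at most one, is false without the $[0,1]$-support assumption (take a single $Y_1$ equal to $M$ with probability $1/M$ and $0$ otherwise: $\mu=1$ but $\prob{Y>0}=1/M\to 0$), so it cannot serve as the foundation from which the rest follows by monotonicity.

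The missing idea is a one-line complementation, which is how Corollary~3 of \cite{bib:Dang2015a} is obtained. Set $X_i:=1-Y_i$; these are independent, lie in $[0,1]$ (hence are nonnegative with $\expect{X_i}\le 1$), and satisfy $X:=\sum_{i=1}^n X_i=n-Y$ with $\expect{X}=n-\mu$. Feige's theorem applied to the $X_i$ gives $\prob{X<\expect{X}+\Delta}\ge\min\{1/13,\Delta/(1+\Delta)\}$ for every $\Delta>0$, and the event $\{X<\expect{X}+\Delta\}=\{n-Y<n-\mu+\Delta\}$ is exactly $\{Y>\mu-\Delta\}$. This handles all $\Delta>0$ uniformly, with no case split and no need for your sub-case $\mu\le\Delta$ (where, incidentally, the containment $\{Y>\mu-\Delta\}\supseteq\{Y\ge 0\}$ fails at the boundary $\mu=\Delta$).
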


\subsection{Anti-Concentration Bound}
In addition to Feige's inequality, 
it is also necessary to compute an upper bound on the
probability of sampling an offspring in a given level, that is 
$\Pr_{y\sim p_t}\left(y \in A_j \right)$ for any $j \in [m]$, where 
$y\sim \Pr(\cdot \mid p_t)$ as defined in (\ref{eq:prob-distribution}). 
Let $Y$ be the random variable that follows a Poisson-Binomial distribution
as introduced in the previous subsection.
Baillon et al. \cite{bib:Baillon} derived the following
sharp upper bound on the probability 
$\Pr_{y\sim p_t}\left(y \in A_j \right)$. 

\begin{theorem}
[Adapted from Theorem~2.1 in \cite{bib:Baillon}]
\label{thm:anticoncentration}
	Let $Y$ be an integer-valued random 
    variable that follows a Poisson-Binomial
    distribution with parameters $n$ and $p_t$, and let  
    $\sigma_n^2=\sum_{i=1}^{n}p_t(i)(1-p_t(i))$ be the variance of $Y$. 
    For all $n$, $y$ and $p_t$, it then holds that
	\begin{displaymath}
		\sigma_n\cdot\Pr\left(Y=y\right)\leq \eta,	
	\end{displaymath}
	where $\eta$ is an absolute constant being
    \begin{displaymath}
    \eta =\max_{x\geq 0}\sqrt{2x} 
    e^{-2x}\sum_{k=0}^{\infty}\left(\frac{x^k}{k!}\right)^2 
    \approx 0.4688.
    \end{displaymath}
\end{theorem}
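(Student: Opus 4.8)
\textit{Proof proposal.} The plan is to prove the stated inequality by a characteristic-function (Fourier inversion) argument; the constant $\eta$ will emerge at the end from a single one-dimensional maximisation over the variance. Fix $n$, write $q_i := p_t(i)$ for $i \in [n]$, let $y$ be an integer, and let $\phi(s) := \expect{e^{\mathrm{i}sY}} = \prod_{i=1}^{n}\bigl(1 - q_i + q_i e^{\mathrm{i}s}\bigr)$ be the characteristic function of $Y$. Since $Y$ is supported on $\{0,1,\dots,n\}$, Fourier inversion gives $\prob{Y=y} = \frac{1}{2\pi}\int_{-\pi}^{\pi} e^{-\mathrm{i}sy}\phi(s)\,\dd{s}$, and since the left-hand side is a nonnegative real, $\prob{Y=y} \le \frac{1}{2\pi}\int_{-\pi}^{\pi} |\phi(s)|\,\dd{s}$. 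The whole argument now reduces to a sufficiently tight pointwise bound on $|\phi(s)|$.

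First I would compute, one factor at a time, that $|1 - q_i + q_i e^{\mathrm{i}s}|^2 = 1 - 2q_i(1-q_i)(1-\cos s)$, which lies in $[0,1]$ because $q_i(1-q_i) \le 1/4$ and $1 - \cos s \le 2$. Applying the elementary inequality $1 - u \le e^{-u}$ to each of the $n$ factors gives
\[
  |\phi(s)|^2 \;\le\; \exp\!\Bigl(-2(1-\cos s)\sum_{i=1}^{n}q_i(1-q_i)\Bigr) \;=\; \exp\bigl(-2\sigma_n^2(1-\cos s)\bigr),
\]
hence $|\phi(s)| \le \exp(-\sigma_n^2(1-\cos s))$. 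Substituting this into the inversion bound and invoking the classical integral representation $\frac{1}{2\pi}\int_{-\pi}^{\pi} e^{z\cos s}\,\dd{s} = \sum_{k \ge 0} (z/2)^{2k}/(k!)^2$ of the modified Bessel function $I_0$, one obtains
\[
  \prob{Y=y} \;\le\; e^{-\sigma_n^2}\cdot\frac{1}{2\pi}\int_{-\pi}^{\pi} e^{\sigma_n^2\cos s}\,\dd{s} \;=\; e^{-\sigma_n^2}\sum_{k=0}^{\infty}\frac{(\sigma_n^2/2)^{2k}}{(k!)^2}.
\]
Putting $x := \sigma_n^2/2$, so that $\sigma_n = \sqrt{2x}$ and the right-hand side equals $e^{-2x}\sum_{k\ge0}(x^k/k!)^2$, and multiplying through by $\sigma_n$, we get $\sigma_n\cdot\prob{Y=y} \le \sqrt{2x}\,e^{-2x}\sum_{k\ge0}(x^k/k!)^2 \le \max_{x \ge 0}\sqrt{2x}\,e^{-2x}\sum_{k\ge0}(x^k/k!)^2 = \eta$, which is exactly the claim. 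The final step of enlarging the bound to the maximum over all $x \ge 0$ is of course legitimate, and the degenerate case $\sigma_n = 0$ (deterministic $Y$) is trivial and likewise covered.

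The value $\eta \approx 0.4688$ is then a one-variable exercise: $g(x) := \sqrt{2x}\,e^{-2x}\sum_{k\ge0}(x^k/k!)^2$ is continuous on $[0,\infty)$ with $g(0)=0$, and since $\sum_{k\ge0}(x^k/k!)^2 = I_0(2x) \sim e^{2x}/\sqrt{4\pi x}$ as $x\to\infty$ we have $g(x)\to 1/\sqrt{2\pi}\approx 0.3989$, so the supremum is attained at an interior point and a short numerical computation pins it down. I do not expect a real obstacle here: the per-factor modulus identity and the Bessel integral are the only mildly technical ingredients, and both are standard. The genuinely delicate point — \emph{not} needed for the inequality, but addressed in \cite{bib:Baillon} — is that $\eta$ cannot be improved, which requires exhibiting Poisson--Binomial sequences whose recentred point masses converge to those of a difference of two independent Poisson variables (equal numbers of marginals near $0$ and near $1$), the distribution for which the bound $1-u\le e^{-u}$ becomes asymptotically tight throughout the chain above.
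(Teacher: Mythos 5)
Your proof is correct, but it cannot be compared against a proof in the paper because there is none: Theorem~\ref{thm:anticoncentration} is imported verbatim (``adapted from Theorem~2.1 in \cite{bib:Baillon}'') and the authors give no argument, so you have supplied a self-contained derivation where the paper only cites. I checked the steps: the per-factor identity $|1-q_i+q_ie^{\mathrm{i}s}|^2=1-2q_i(1-q_i)(1-\cos s)$ is right (each factor lies in $[0,1]$, so $1-u\le e^{-u}$ applies legitimately to the product), the resulting bound $|\phi(s)|\le \exp(-\sigma_n^2(1-\cos s))$ combined with Fourier inversion for lattice variables gives $\prob{Y=y}\le e^{-\sigma_n^2}I_0(\sigma_n^2)$, and the substitution $x=\sigma_n^2/2$ turns $\sigma_n\,e^{-\sigma_n^2}I_0(\sigma_n^2)$ into exactly $\sqrt{2x}\,e^{-2x}\sum_{k\ge 0}(x^k/k!)^2$, which is at most $\eta$ by definition; the degenerate case $\sigma_n=0$ is trivial as you say. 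The one thing worth making explicit is that the supremum defining $\eta$ is finite and attained: your observation that $g(0)=0$ and $g(x)\to 1/\sqrt{2\pi}\approx 0.3989$ as $x\to\infty$ shows boundedness, and a numerical evaluation (e.g.\ $g(0.4)\approx 0.4688>1/\sqrt{2\pi}$) confirms an interior maximiser consistent with the stated value. Your closing remark is also accurate in spirit: the substantive content of \cite{bib:Baillon} is the matching lower bound showing $\eta$ is optimal (via sequences converging to a difference of independent Poissons), which is not needed for the inequality used in this paper; the upper bound itself follows from the elementary characteristic-function argument you gave.
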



\section{Runtime of the \umda on LeadingOnes and BinVal}
\label{sec:umda-leadingones}\label{sec:umda-binval}

As a warm-up example, and to illustrate the method of level-based 
analysis, we consider the 
two functions -- \leadingones and \binval -- as defined in
Definitions~\ref{def:leadingones} and~\ref{def:binval}.
It is well-known that the expected optimisation time of
the ($1$+$1$)~\ea on \leadingones is $\Theta(n^2)$, and that this is
optimal for the class of \emph{unary unbiased} black-box algorithms 
\cite{lehreblackbox2012}. Early analysis of 
the \umda  on \leadingones \cite{bib:Chen2010} 
required an excessively large 
population, \ie $\lambda = \omega(n^2\log n)$. 
Our analysis below 
 shows that  a population size 
$\lambda = \Omega(\log n)$  suffices to achieve
the expected optimisation time $\mathcal{O}(n^2)$.

\binval is a linear function with exponentially decreasing weights
relative to the bit position.
Thus, the function is often regarded as an extreme linear 
function (the other one is \onemax) \cite{bib:Droste2006}. 
Droste \cite{bib:Droste2006} was the first to prove
an upper bound of $\bigO{nK}=\bigO{n^{2+\varepsilon}}$
on the expected optimisation time of the \cga on \binval, assuming 
that $\varepsilon>0$ is a constant. Regardless of the abstract 
population size $K$, 
Witt recently
derived a  lower bound of $\Omega(n^2)$
on the expected optimisation time of the \cga on \binval 
\cite[Corollary~3.5]{Witt2018Domino} and
verified the claim made earlier by 
Droste \cite{bib:Droste2006} that \binval harder problem than \onemax
for the \cga.
We now give our runtime bounds for the \umda on 
\leadingones and \binval with a sufficiently large 
population size $\lambda$.

\begin{theorem}\label{thm:leadingones}
	The \umda (with margins) with parent population size $\mu \geq c \log{n}$ for a 
	sufficiently large constant $c>0$, and offspring population size 
	$\lambda \geq (1+\delta)e\mu$ for any constant $\delta>0$, has expected optimisation time 
	$\mathcal{O}(n\lambda\log{\lambda}+n^2)$ on \leadingones and \binval.
\end{theorem}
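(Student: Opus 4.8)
The plan is to apply the level-based theorem (Theorem~\ref{thm:levelbasedtheorem}) with the canonical fitness-based partition. For \leadingones, I would set $A_j := \{x \in \mathcal{X} \mid \leadingones(x) = j-1\}$ for $j \in [n+1]$, so that $m = n+1$ and $A_m = \{1^n\}$. For \binval, I would use the analogous canonical $f$-based partition; since \binval is injective the levels are singletons, but the key structural fact I will exploit is the same in both cases: if at least $\gamma_0\lambda$ individuals lie in $A_{\geq j}$, then the best $\mu$ individuals (which get selected, provided $\gamma_0 \lambda \geq \mu$, i.e. $\gamma_0 \geq \mu/\lambda$) all agree on a common prefix of length at least $j-1$ that is all ones. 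Hence the corresponding marginals are set to the upper border $1-1/n$, and the first $j-1$ bits of a new sampled offspring are all ones with probability at least $(1-1/n)^{j-1} \geq (1-1/n)^{n} \geq 1/(e) \cdot (1-1/n) \geq \tfrac{1}{4}$ say (or more carefully $\geq 1/(2e)$), uniformly in $j$.

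Next I would verify the three conditions. For \textbf{(G1)}: assuming $|P_t \cap A_{\geq j}| \geq \gamma_0 \lambda$, the first $j-1$ marginals are at the upper border; the remaining marginals are at least $1/n$ each, so bit $j$ is a one with probability at least $1/n$, and the offspring lands in $A_{\geq j+1}$ with probability at least $z_j := (1-1/n)^{j-1}\cdot \tfrac{1}{n} \geq \tfrac{1}{en}$ (for \binval the argument is identical because bit $j$ has higher weight than all lower bits combined, so flipping it from $0$ to $1$ strictly increases fitness regardless of the suffix). Thus I may take $z_* = \Omega(1/n)$. For \textbf{(G2)}: suppose additionally $|P_t \cap A_{\geq j+1}| \geq \gamma\lambda$ with $\gamma \leq \gamma_0$. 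Then among the selected $\mu$ individuals, at least $\mu - (1-\gamma_0/\gamma_0)\cdots$ — more precisely at least $(\gamma_0 - (\gamma_0-\gamma))\lambda/\text{stuff}$ — the clean way is: the number of selected individuals in $A_{\geq j+1}$ is at least $\gamma\lambda - (\lambda - \mu) \cdot 0$; actually since all of $A_{\geq j}$'s top $\mu$ are selected and at least $\gamma\lambda$ individuals sit in $A_{\geq j+1}$, at least $\min\{\gamma\lambda,\mu\} \geq \gamma\lambda$ (as $\gamma \leq \gamma_0 \leq \mu/\lambda$ would give $\gamma\lambda\le\mu$) of the selected are in $A_{\geq j+1}$, so marginal $j+1$ is at least $\gamma\lambda/\mu \geq \gamma \lambda/\mu$. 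Choosing $\gamma_0 := \mu/\lambda \leq 1/((1+\delta)e)$, the probability an offspring reaches $A_{\geq j+1}$ is at least $(1-1/n)^{j-1} \cdot (\gamma\lambda/\mu) \geq \tfrac{1}{e}\cdot(1+\delta)e\cdot\gamma \cdot(1-1/n) \geq (1+\delta')\gamma$ for a suitable constant $\delta' > 0$, using $\lambda \geq (1+\delta)e\mu$ and absorbing the $(1-1/n)$ factor. For \textbf{(G3)}: with $\gamma_0 = \Theta(1)$, $\delta' = \Theta(1)$, $z_* = \Omega(1/n)$, $m = n+1$, the requirement $\lambda \geq (4/(\gamma_0\delta'^2))\ln(128m/(z_*\delta'^2)) = \Theta(\log n)$ is met by $\mu \geq c\log n$ for $c$ large enough, since $\lambda \geq (1+\delta)e\mu = \Omega(\log n)$.

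Finally I would read off the runtime bound: the theorem gives
\begin{displaymath}
\expect{T} \leq \frac{8}{\delta'^2}\sum_{j=1}^{m-1}\left(\lambda\ln\!\left(\frac{6\delta'\lambda}{4+z_j\delta'\lambda}\right) + \frac{1}{z_j}\right) = \bigO{\sum_{j=1}^{n}\left(\lambda\ln\lambda + n\right)} = \bigO{n\lambda\log\lambda + n^2},
\end{displaymath}
using $z_j = \Omega(1/n)$ so that $1/z_j = \bigO{n}$ and $\ln(6\delta'\lambda/(4+z_j\delta'\lambda)) = \bigO{\ln\lambda}$. The main obstacle I anticipate is not any single inequality but getting the selection bookkeeping in \textbf{(G2)} exactly right: one must argue carefully that when $|P_t\cap A_{\geq j+1}| \geq \gamma\lambda$ with $\gamma\le\gamma_0=\mu/\lambda$, \emph{all} of those individuals survive $(\mu,\lambda)$-truncation selection (they are among the fittest, being in a higher level), so the empirical frequency of ones at position $j+1$ among selected individuals is genuinely at least $\gamma\lambda/\mu$ — and then to check that the constant slack from $\lambda \geq (1+\delta)e\mu$ survives multiplication by the prefix-survival probability $(1-1/n)^{j-1}$, which is where the constant $e$ and the choice $\lambda = (1+\delta)e\mu$ are tuned to exactly cancel. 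A secondary subtlety is confirming that for \binval the canonical partition really is fitness-based and that increasing bit $j$ (the first zero in the common prefix region) dominates all lower-order bits — immediate from the geometric weights $2^{n-i}$ — so that the \leadingones argument transfers verbatim.
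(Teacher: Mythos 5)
Your proposal follows essentially the same route as the paper's proof: the \leadingones-level partition with $m=n+1$, the choice $\gamma_0=\mu/\lambda$, the common all-ones prefix argument giving $z_j=1/(en)$ for (G1) and $p_{t+1}(j)\geq\gamma\lambda/\mu$ hence $(1+\delta)\gamma$ for (G2), and the same final summation. Two points need repair before it is fully correct. First, for \binval you announce the \emph{canonical} $f$-based partition, whose levels are singletons because \binval is injective; that partition has $m=2^n$ levels, so (G3) would force $\lambda=\Omega(n)$ and the sum $\sum_{j=1}^{m-1}(\cdot)$ in the conclusion would be exponential. What you actually compute with afterwards (common all-ones prefix of length $j-1$, $m=n+1$, $z_j=1/(en)$) is the \leadingones-induced partition applied to \binval; this is exactly what the paper does, verifying that this partition is still $f$-based for \binval via the geometric weights $2^{n-i}$, and observing that although the within-level ranking under \binval depends on less significant bits, the argument never uses those bits. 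You should state that partition explicitly instead of the canonical one. Second, in (G3) you assert $\gamma_0=\Theta(1)$, which is not guaranteed: only $\lambda\geq(1+\delta)e\mu$ is assumed, so $\lambda$ may be polynomially larger than $\mu$ and $\gamma_0=\mu/\lambda=o(1)$. The correct observation is that $\gamma_0$ cancels, since $\lambda=\mu/\gamma_0$ turns the condition $\lambda\geq(4/(\gamma_0\delta^2))\ln(128m/(z_*\delta^2))$ into $\mu\geq(4/\delta^2)\ln(128m/(z_*\delta^2))=\Theta(\log n)$, which is exactly what the hypothesis $\mu\geq c\log n$ supplies. With these two corrections (and the harmless off-by-one where you write ``marginal $j+1$'' for the marginal at position $j$), your argument coincides with the paper's.
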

\begin{proof}
	We apply Theorem \ref{thm:levelbasedtheorem} 
    by following the guidelines from \cite{bib:Corus2017}.
	
	\textbf{Step 1}: For both functions, we define the levels
	\begin{displaymath}
			A_j := \{ x\in\{0,1\}^n \mid \leadingones(x)=j-1\}.
	\end{displaymath}
    
	Thus, there are $m = n+1$ levels ranging from $A_1$ to $A_{n+1}$. 
    Note that a constant 
    $\gamma_0$ appearing later in this proof is set to
    $\gamma_0 := \mu/\lambda$, that coincides with the selective pressure
    of the \umda.
	
	For \leadingones, the partition is clearly $f$-based as it is canonical 
	to the function. 
	For \binval, however, note that since all the
	$j-1$ leading bits of any $x \in A_j$ are ones, then the contribution of these
	bits to $\binval(x)$ is $\sum_{i=1}^{j-1}2^{n-i}$. On the other hand, the contribution 
	of bit position $j$ is $0$, and that of the last $n-j$ bits is between $0$ 
	and $\sum_{i=j+1}^n 2^{n-i} = \sum_{i=0}^{n-j-1} 2^i = 2^{n-j}-1$, 
	so in overall 
	\begin{displaymath}
		\sum_{i=1}^{j}2^{n-i} - 1 
		\geq \binval(x) 
		\geq \sum_{i=1}^{j-1}2^{n-i}.
	\end{displaymath}

Therefore, for any $j \in [n]$ and all $x \in A_j$, and all $y \in A_{j+1}$ 
	we have that 
\begin{displaymath}
\binval(y) 
		\geq \sum_{i=1}^{j}2^{n-i} 
		> \sum_{i=1}^{j}2^{n-i}-1 
		\geq \binval(x);
\end{displaymath}
thus, the partition is also $f$-based for \binval.
	This observation allows us to carry 
      over the proof arguments of \leadingones to 
	  \binval.
	
	\textbf{Step 2}: In (G2), for any level 
    $j \in [n-1]$ satisfying $|P_{t} \cap 
	A_{\geq j}| \geq \gamma_0\lambda = \mu$ and 
    $|P_{t} \cap A_{\geq j+1}| \geq \lambda 
	\gamma$ for some $\gamma \in (0,\gamma_0]$, 
    we seek a lower bound $(1+\delta)\gamma$ 
	for $\prob{y \in A_{\geq j+1}}$ where 
    $y \sim \mathcal{D}(P_t)$. 
    The given conditions 
	on $j$ imply that the $\mu$ fittest individuals of
    $P_t$ have at least $j-1$ leading $1$-bits 
	and among them at least $\lceil \gamma\lambda \rceil$ have at least $j$ leading 
	$1$-bits. Hence, $p_{t+1}(i)=1-1/n$ for $i \in [j-1]$ and $p_{t+1}(j) \geq \max(\min(1-1/n,
	\gamma\lambda/\mu),1/n)\geq \min(1-1/n,\gamma/\gamma_0)$, so
	\begin{align*}
		\prob{y \in A_{\geq j+1}}
		&\geq \prod_{i=1}^{j} p_{t+1}(i)\\
		&\geq \min\left\{
		\left(1 - \frac{1}{n}\right)^{j},
		\left(1 - \frac{1}{n}\right)^{j-1} \cdot \frac{\gamma\lambda}{\mu}\right\}\\
		&\geq \min\bigg\{\frac{1}{e}, \frac{\gamma}{e\gamma_0}\bigg\} \\
        &= \frac{\gamma}{e\gamma_0}
        = \frac{\lambda\gamma}{e\mu} \geq (1 +\delta)\gamma,
	\end{align*}
	due to $\gamma\leq\gamma_0$ and $\lambda\ge (1+\delta)e\mu$
    for any constant $\delta>0$. Therefore, condition (G2)  is now satisfied.
	
	\textbf{Step 3}: In (G1), 
    for any level $j \in [n]$ satisfying $|P_{t} \cap A_{\geq j}
	| \geq \gamma_0\lambda = \mu$ we need a lower bound $\prob{y \in 
		A_{\geq j+1}}\ge z_j$. 
        Again the condition on level $j$ gives that the $\mu$ fittest individuals 
	of $P_t$ have at least $j-1$ leading $1$-bits, or $p_{t+1}(i)=1-\frac{1}{n}$ for $i \in [j-1]$. 
	Due to the imposed lower margin, we can assume pessimistically that $p_{t+1}(j) = \frac{1}{n}$. 
	Hence,
	\begin{align*}
		\prob{y \in A_{\geq j+1}}
		&\geq \prod_{i=1}^{j} p_{t+1}(i)\\
		&\geq \left(1 - \frac{1}{n}\right)^{j-1} \cdot\frac{1}{n} 
		= \frac{1}{en} =: z_j.
	\end{align*}
	So, (G1) is satisfied for $z_j:=\frac{1}{en}$.
	
	\textbf{Step 4}: Considering (G3), because $\delta$ is a constant, and
	both $1/z_*$ and $m$ are $\mathcal{O}(n)$, 
    there must exist a constant $c>0$ such that 
	$\mu \geq c \log n \geq (4/\delta^2)\ln(128 m / (z_* \delta^2))$. Note that 
	$\lambda = \mu/\gamma_0$, so (G3) is satisfied.
	
	\textbf{Step 5}: 
    All conditions of Theorem~\ref{thm:levelbasedtheorem} are 
	satisfied, so the expected optimisation time of the \umda on \leadingones is
	\begin{align*}
		\expect{T} 
		&= \mathcal{O}\left(\sum_{j=1}^{n} \left( \lambda \ln\left(\frac{\lambda}{1 + \lambda/n}\right) + n \right)\right)\\
		&= \mathcal{O}\left(\lambda\log\lambda + n^2\right).
	\end{align*}
    
       We now consider \binval. In both problems, all that matters to determine 
the level of a  bitstring is the position of the first zero-bit.
Now consider two bitstrings in the same level for \binval, 
their rankings after the population is sorted are also determined 
by some other less significant bits; however, the proof thus far
never takes these bits into account.
Hence, the expected optimisation time of the \umda 
on \leadingones can be carried over to \binval 
for the \umda with margins using truncation selection.    
\end{proof}


\section{Runtime of the UMDA on OneMax}\label{sec:umda-onemax}

We consider the problem in Definition~\ref{def:onemax}, i.e.,
maximisation of  the number of ones in a 
bitstring.
It is well-known that \onemax can be optimised in expected
time
$\Theta(n\log n)$ using the simple $(1+1)$~\ea. 
The level-based theorem yielded the first upper bound 
on the expected optimisation time of the \umda on \onemax, which is 
$\mathcal{O}(n\lambda\log \lambda)$, assuming that
$\lambda=\Omega(\log n)$ \cite{bib:Dang2015a}. This leaves open 
whether the \umda is slower than the $(1+1)~\ea$ 
and other traditional \eas on \onemax.

We now introduce additional notation used throughout the section.
The following random variables related to the sampling of a Poisson Binomial 
distribution with the parameter vector 
$p_t = \left(p_t(1),\dots,p_t(n)\right)$ are
often used in the proofs.
\begin{itemize}
	\item Let $Y:=(Y_1,Y_2,\ldots,Y_n)$ denote
	an offspring sampled from the probability distribution
    (\ref{eq:prob-distribution}) in generation $t$, where
	$\Pr(Y_i=1)=p_t(i)$ for each $i\in [n]$.
	\item Let $Y_{i,j}:=\sum_{k=i}^j Y_k$ denote the number 
    of ones sampled from the  sub-vector 
    $\left(p_t(i),p_t(i+1),\ldots,p_t(j)\right)$ of the model $p_t$ where 
    $1\leq i\leq j\leq n$.
\end{itemize}

\subsection{Small parent population size}

Our approach refines the
analysis in \cite{bib:Dang2015a} by considering 
anti-concentration properties of the random variables involved.  As
already discussed in subsection~\ref{level-based-theorem-section}, we
need to verify the 
three conditions (G1), (G2) and (G3) of Theorem~\ref{thm:levelbasedtheorem} 
to derive an
upper bound on the expected optimisation time. 
The range of values of the  marginals are
\begin{displaymath}
	p_t(i) \in \left\{\frac{k}{\mu}\mid k\in[\mu-1]\right\}\cup
        \left\{1-\frac{1}{n}, \frac{1}{n} \right\}.
\end{displaymath}
When $p_t(i)=1-1/n$ or $1/n$, we say that 
the marginal is at the upper
or lower border (or margin), respectively. Therefore, 
we can categorise values for $p_t(i)$ into three groups: those at the upper
  margin $1-1/n$, those at the lower margin $1/n$, and those within the
  closed interval $[1/\mu, 1-1/\mu]$. 
  For \onemax, all bits have the same weight and the fitness 
  is just the sum of these bit values, so the re-arrangement 
  of bit positions will  have no impact on the 
  sampling distribution. 
  Given the current sorted population, 
  recall that $X_i:=\sum_{k=1}^{\mu} x_{t,i}^{(k)}$, and 
  without loss of generality, we
  can re-arrange the bit-positions so that for two integers
  $k,\ell\geq 0$, it holds
  \begin{itemize}
  \item for all $i\in[1,k],$ $1\leq X_i \leq \mu-1$ and $p_t(i)=X_i/\mu$, 
  \item for all $i\in(k,k+\ell]$, $X_i = \mu$ and $p_t(i)=1-1/n$, and
  \item for all $i\in(k+\ell,n]$, $X_i =0$ and $p_t(i)=1/n$.
  \end{itemize}	
We define the levels using the  
canonical $f$-based partition
\begin{align}
A_j & \coloneqq \left\{x\in \{0,1\}^n \mid \onemax(x)=j-1\right\}.
\label{eq:level-def}
\end{align}
 Note that the probability appearing in conditions (G1) and (G2) of
  Theorem~\ref{thm:levelbasedtheorem} 
  is the probability of sampling 
  an offspring in levels $A_{\ge j+1}$, 
  that is $\prob{Y_{1,n}\geq j}$.

We aim at obtaining an upper bound of $\mathcal{O}(n\lambda)$ 
on the expected optimisation time 
of the \umda on \onemax using the level-based theorem. 
The logarithmic factor $\mathcal{O}(\log\lambda)$ in the previous upper 
bound $\mathcal{O}(n\lambda\log \lambda)$ in 
\cite{bib:Dang2015a} stems from the lower bound $\Omega(1/\mu)$
on the parameter $z_j$ in the condition (G1) of 
Theorem~\ref{thm:levelbasedtheorem}.
We aim for the
stronger bound $z_j=\Omega(\frac{n-j+1}{n})$. 
Note that in the following proofs, we choose the parameter
$\gamma_0:=\mu/\lambda$.

Assume that the current level is $A_j$, that is 
$|P_t\cap A_{\ge j}|\ge \gamma_0\lambda=\mu$, 
which, together with the two variables $k$ and $\ell$, 
implies that there are at least $j-\ell-1$ 
ones from the first $k$ bit positions. 
To verify conditions (G1) and (G2) of Theorem~\ref{thm:levelbasedtheorem}, we 
need to calculate the probability of sampling an offspring with at least 
$j$ ones (in levels $A_{\ge j+1}$). It is thus more likely for
the algorithm to maintain the 
$\ell$ ones for all bit positions 
$i\in (k,k+\ell]$ (actually this happens with
probability at least $1/e$),
and also sample at least $j-\ell$ 
ones from the remaining $n-\ell$ bit positions.
This lead
us to consider three distinct cases according to different 
configurations of the current population with respect to 
the two parameters $k$ and $j$ in Step 3 of 
Theorem~\ref{onemax-small-mu} below.  
\begin{enumerate}
\item $k\geq \mu$. In this situation, the variance of 
$Y_{1,k}$ is not too small. By the result of 
Theorem~\ref{thm:anticoncentration},
the distribution of $Y_{1,k}$ cannot be
too concentrated on its mean 
$\mathbb{E}\left[Y_{1,k}\right]=j-\ell-1$, and  
with probability at least $\Omega(1)$, 
the algorithm can sample at least $j-\ell$ ones
from the first $k$ bit positions 
to obtain an offspring with at least 
$(j-\ell)+\ell=j$ ones. Thus, 
the probability of sampling at least 
$j$ ones is bounded from below by 
\begin{align*}
\Pr(Y_{1,n}\geq j)
&\geq \Pr(Y_{1,k}\geq j-\ell) \Pr(Y_{k+1, k+\ell}=\ell)\\
&=\Omega(1).
\end{align*}

\item $k<\mu$ and $j\geq n+1-\frac{n}{\mu}$. In this case, 
the current level is very close to the optimal $A_{n+1}$, 
and the bitstring has few zeros. 
As already obtained from \cite{bib:Dang2015a}, 
the probability of sampling an offspring in 
$A_{\ge j+1}$ in this case is $\Omega(\frac{1}{\mu})$. 
Since the condition can be rewritten as 
$\frac{1}{\mu}\geq \frac{n-j+1}{n}$, it ensures that 
$z_j=\Omega(\frac{1}{\mu})=\Omega(\frac{n-j+1}{n})$.  

\item The remaining cases. Later will we prove that if
  $\mu\leq \sqrt{n(1-c)}$ for some constant $c\in (0,1)$, and excluding the 
  two cases above, imply
  $0\leq k<(1-c)(n-j+1)$.  In this case, $k$ is relatively small, and
  $\ell$ is not too large since the current level is not very close to
  the optimal $A_{n+1}$. This implies that most 
  zeros must be located among bit positions 
  $i\in (k+\ell,n]$, and it suffices to sample an extra one from
  this region to get at least $(j-\ell-1)+\ell+1=j$ ones. 
  The probability of sampling an
  offspring in levels $A_{\geq j+1}$ is then 
  $z_j=\Omega(\frac{n-j+1}{n})$.
\end{enumerate} 

We now present our detailed runtime analysis 
for  the \umda on \onemax, when the 
population size is small, that is,  
$\mu = \Omega(\log n) \cap \mathcal{O}(\sqrt{n})$.
     
\begin{theorem}\label{onemax-small-mu}
  For some constant $a>0$ and any constant $c\in(0,1)$,
  the \umda (with margins) with parent population size
  $a\ln(n)\leq \mu\leq \sqrt{n(1-c)}$, and
  offspring population size $\lambda\geq (13e/(1-c))\mu$, has
  expected optimisation time $\mathcal{O}\left(n\lambda\right)$ on \onemax.
\end{theorem}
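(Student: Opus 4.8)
The plan is to apply the level-based theorem (Theorem~\ref{thm:levelbasedtheorem}) through the five-step recipe of~\cite{bib:Corus2017}, with the canonical $f$-based partition $A_j := \{x \in \{0,1\}^n : \onemax(x) = j-1\}$ (so $m = n+1$) and $\gamma_0 := \mu/\lambda$. The only real work is to verify (G1) and (G2) with $z_j = \Omega((n-j+1)/n)$. Granting that, (G3) reduces --- since $z_* = \Omega(1/n)$ and $m = O(n)$ --- to requiring $\mu = \Omega(\log n)$, which is exactly the hypothesis $\mu \ge a\ln n$ with $a$ a sufficiently large constant; and the sum in Theorem~\ref{thm:levelbasedtheorem} evaluates to $O(\lambda n)$, using $\sum_{j=1}^{n} 1/z_j = O(n\log n)$, $\sum_{j=1}^{n}\ln(n/(n-j+1)) = O(n)$, and the fact that only $O(n/\lambda)$ levels have $z_j\delta\lambda < 4$ (each contributing $O(\lambda\log\lambda)$), so everything is $O(\lambda n)$ because $\lambda = \Omega(\log n)$ and $\log\lambda \le \lambda$.

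\textbf{Condition (G2).} If $|P_t\cap A_{\ge j}|\ge\mu$ and $|P_t\cap A_{\ge j+1}|\ge\gamma\lambda$ with $\gamma\le\gamma_0$, then the $\mu$ selected individuals carry at least $\mu(j-1)+\lceil\gamma\lambda\rceil$ ones in total, of which exactly $\mu\ell$ sit at the $\ell$ positions pushed to the upper border; hence $\mathbb{E}[Y_{1,k}] = \sum_{i=1}^{k}p_{t+1}(i) \ge (j-1-\ell)+\gamma/\gamma_0$. Feige's inequality (Theorem~\ref{cor:feige-ineq}) applied to $Y_{1,k}$ then gives $\Pr(Y_{1,k}\ge j-\ell) \ge \min\{1/13,\ \Omega(\gamma/\gamma_0)\}$, and multiplying by $\Pr(Y_{k+1,k+\ell}=\ell) = (1-1/n)^\ell \ge e^{-1}$ (independent bits) yields $\Pr(Y_{1,n}\ge j) \ge (1+\delta)\gamma$, the hypothesis $\lambda \ge (13e/(1-c))\mu$ being precisely what makes both branches of the minimum beat $(1+\delta)\gamma$ for a small constant $\delta>0$.

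\textbf{Condition (G1).} This is the heart. Fix a level $j$ with $|P_t\cap A_{\ge j}|\ge\mu$ and apply the bit re-arrangement described above, with $k$ marginals strictly inside $[1/\mu,1-1/\mu]$ and $\ell$ at the upper border. If $j$ lies strictly below the highest level still populated by $\ge\mu$ individuals, then $\mathbb{E}[Y_{1,k}]$ is at least the target $j-\ell$, so Feige gives $\Pr(Y_{1,k}\ge j-\ell)\ge 1/13$ and hence $\Pr(Y_{1,n}\ge j)=\Omega(1)$; otherwise one may pessimistically assume the selected carry exactly $\mu(j-1)$ ones, forcing $\mathbb{E}[Y_{1,k}] = j-1-\ell$ and the zero-counting identity $\sum_{i=1}^{k}(1-p_{t+1}(i)) + (n-k-\ell) = n-j+1$. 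Then split into the three cases of the preamble. (i) $k\ge\mu$: here $\mathrm{Var}(Y_{1,k})\ge k\cdot\tfrac{1}{\mu}(1-\tfrac{1}{\mu})=\Omega(1)$, so the anti-concentration bound (Theorem~\ref{thm:anticoncentration}) gives $\Pr(Y_{1,k}=y)\le\eta/\sqrt{\mathrm{Var}(Y_{1,k})}=O(1)$ for every $y$; together with a Feige-type lower bound on $\Pr\!\big(Y_{1,k}\ge\lceil\mathbb{E}[Y_{1,k}]\rceil\big)$ (the target $j-\ell$ exceeds the mean by at most one) this gives $\Pr(Y_{1,k}\ge j-\ell)=\Omega(1)$, and retaining the $\ell$ border ones gives $\Pr(Y_{1,n}\ge j)=\Omega(1)$. (ii) $k<\mu$ and $j\ge n+1-n/\mu$: the string is near-optimal, the $\Omega(1/\mu)$ lower bound of~\cite{bib:Dang2015a} applies, and $1/\mu\ge(n-j+1)/n$ by the case condition. (iii) otherwise: $\mu\le\sqrt{n(1-c)}$ forces $k<(1-c)(n-j+1)$, so the identity above yields $n-k-\ell\ge c(n-j+1)$ lower-border positions; then $\Pr(Y_{1,n}\ge j)$ is at least the product of $\Pr(Y_{1,k}\ge j-1-\ell)\ge 1/13$ (Feige), $\Pr(Y_{k+1,k+\ell}=\ell)\ge e^{-1}$, and $\Pr(Y_{k+\ell+1,n}\ge 1)\ge 1-(1-1/n)^{c(n-j+1)} = \Omega((n-j+1)/n)$, so $z_j = \Omega((n-j+1)/n)$ as required.

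\textbf{Main obstacle.} The delicate point is Case (i): the constant $1/13$ from Feige's inequality and the constant $\eta\approx0.47$ from the anti-concentration bound must be made to combine into a \emph{strictly positive} lower bound on $\Pr(Y_{1,k}\ge\mathbb{E}[Y_{1,k}]+1)$. This is what forces $\mathrm{Var}(Y_{1,k})$ above a specific threshold (hence the split at $k=\mu$ together with $\mu=\Omega(\log n)$) and dictates where the boundary with Cases (ii)--(iii), and the $n/\mu$ cut-off in Case (ii), have to be drawn. The conceptual picture is straightforward; essentially all the effort goes into making these constants line up.
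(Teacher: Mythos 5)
Your overall architecture matches the paper's: the same canonical partition with $\gamma_0=\mu/\lambda$, the same target $z_j=\Omega((n-j+1)/n)$ for (G1), the same case split on $k$ versus $\mu$ and on $j$ versus $n+1-n/\mu$, Feige's inequality for (G2), and the anti-concentration bound for the $k\ge\mu$ case. Your (G2) argument and Cases (ii)--(iii) are sound (in (iii), Feige with $\Delta=1$ applied at the integer mean $j-1-\ell$ gives $1/13$, which suffices since nothing is subtracted from it; the paper uses $1/2$ there but only needs $\Omega(1)$). The step ``pessimistically assume the selected individuals carry exactly $\mu(j-1)$ ones'' needs to be backed by a stochastic-domination argument (the paper's Lemma~\ref{lem:stocdom}), but that is routine.

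The genuine gap is exactly where you point: Case (i). You propose to lower-bound $\Pr(Y_{1,k}\ge \mathbb{E}[Y_{1,k}]+1)$ by a ``Feige-type'' bound on $\Pr\left(Y_{1,k}\ge\lceil\mathbb{E}[Y_{1,k}]\rceil\right)$ minus the point mass $\Pr(Y_{1,k}=j-\ell-1)\le\eta/\sigma_k$. But Feige (Theorem~\ref{cor:feige-ineq} with $\Delta=1$) only yields $\Pr(Y_{1,k}\ge\mathbb{E}[Y_{1,k}])\ge 1/13\approx 0.077$, while the split at $k\ge\mu$ combined with $p_{t+1}(i)\in[1/\mu,1-1/\mu]$ only guarantees $\var{Y_{1,k}}\ge\frac{k}{\mu}\left(1-\frac{1}{\mu}\right)\ge 1-\frac{1}{\mu}$, hence $\eta/\sigma_k\approx 0.47$; the difference $1/13-\eta/\sigma_k$ is negative, so the combination produces no bound at all. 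To make Feige's constant survive the subtraction you would need $\sigma_k>13\eta$, i.e.\ $\var{Y_{1,k}}$ of order $37$, i.e.\ $k$ of order $38\mu$ --- and then the regime $\mu\le k<38\mu$ is covered by none of your cases, since (ii) and (iii) require $k<\mu$. The missing ingredient is the Jogdeo--Samuels median bound (the paper's Lemma~\ref{int-expectation}): because $\mathbb{E}[Y_{1,k}]=j-\ell-1$ is an integer after the pessimistic reduction, $\Pr(Y_{1,k}\ge\mathbb{E}[Y_{1,k}])\ge 1/2$, and $1/2-\eta/\sqrt{9/10}>0$ (barely). With that substitution Case (i) closes and the rest of your argument goes through.
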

\begin{proof}
  Recall that $\gamma_0\coloneqq \mu/\lambda$. We re-arrange the 
  bit positions as explained above and 
  follow the recommended 5-step 
procedure for applying Theorem~\ref{thm:levelbasedtheorem}
  \cite{bib:Corus2017}. 
	
  \textbf{Step 1.} The levels are defined as in Eq.~(\ref{eq:level-def}).
  There are exactly $m=n+1$ levels from $A_1$ to $A_{n+1}$, 
  where level $A_{n+1}$ consists of the optimal solution. 
	
  \textbf{Step 2.} We verify condition (G2) of Theorem~\ref{thm:levelbasedtheorem}. In particular, for some $\delta\in(0,1)$, for any level 
  $j\in [m-2]$ and any  $\gamma\in (0,\gamma_0]$, assuming 
  that the population is configured  such that
  $|P_t\cap A_{\geq j}|\geq \gamma_0\lambda=\mu$ and
  $|P_t\cap A_{\geq j+1}|\geq \gamma\lambda>0$, we must show that the
  probability of sampling an offspring 
  in levels $A_{\ge j+1}$ must be no
  less than $(1+\delta)\gamma$. By
  the re-arrangement of the bit-positions mentioned earlier, it holds that
  \begin{align}
    \sum_{i=k+1}^{k+\ell}X_i=\mu \ell \quad 
    \mbox{ and }\quad \sum_{i=k+\ell+1}^{n}X_i=0,	\label{eq:1}
  \end{align}	
  where $X_i$ for all  $ i\in[n]$ are given in Algorithm \ref{umda-algor}.
  By assumption, the current population $P_t$ consists of 
  $\gamma\lambda$ individuals with at least $j$ ones 
  and $\mu-\gamma\lambda$ individuals with exactly $j-1$ ones. Therefore, 
  \begin{align}\label{eq:2}
    \sum_{i=1}^{n}X_i\geq \gamma\lambda j + \left(\mu-\gamma\lambda\right)(j-1) 
    = \gamma\lambda+\mu\left(j-1\right).
  \end{align}
  Combining (\ref{eq:1}), (\ref{eq:2}) and noting that $\lambda=\mu/\gamma_0$ yield
  \begin{align*}		
    \sum_{i=1}^{k}X_i 
      &=\sum_{i=1}^{n}X_i-\sum_{i=k+1}^{k+\ell}X_i-\sum_{i=k+\ell+1}^{n}X_i \\
     &\geq \gamma\lambda+\mu\left(j-1\right)-\mu\ell=\mu\left(\frac{\gamma}{\gamma_0}+j-1-\ell\right).	
  \end{align*}
  Let $Z=Y_{1,k}+Y_{k+\ell+1,n}$ be the
  integer-valued random variable, which describes the 
  number of ones sampled
  in the first $k$ and the last $n-k-\ell$ bit positions. Since
  $k+\ell\leq n$, the expected value of $Z$ is
  \begin{align}  \label{eq:expectation-of-z} 
  \begin{split}
      \mathbb{E}\left[Z\right]
      &= \sum_{i=1}^{k}p_t(i) +\sum_{i=k+\ell+1}^{n}p_t(i) \\
      &= \frac{1}{\mu}\sum_{i=1}^{k}X_i +\frac{n-k-\ell}{n}
      \geq j-\ell-1+\frac{\gamma}{\gamma_0}.
  \end{split}
  \end{align}	
  In order to obtain an offspring with at least $j$ ones, it is
  sufficient to sample $\ell$ ones in positions
  $k+1$ to $k+\ell$ and at least $j-\ell$ ones from
  the other positions. The probability of this event is bounded from below by
  \begin{align}   
      \Pr\left(Y_{1,n} \geq j\right) \geq \Pr\left(Z\geq j-\ell\right)\cdot \Pr\left(Y_{k+1,k+\ell}=\ell\right). \label{eq:g2-upgrade-prob}  
  \end{align}  	        
 The probability to obtain $\ell\ge n-1$ 
 ones in the middle interval from
 position $k+1$ to $k+\ell$ is 
 \begin{align}
   \Pr(Y_{k+1,k+\ell}=\ell)=\left(1-\frac{1}{n}\right)^\ell\geq \left(1-\frac{1}{n}\right)^{n-1}\geq \frac{1}{e}\label{eq:no-mut-ell-prob}
 \end{align}	
 by the result of Lemma~\ref{exponential-bounds} for $t=-1$.
  We now estimate the probability $\Pr\left(Z\geq j-\ell\right)$
  using Feige's inequality. 
  Since $Z$ takes integer values only, it follows by 
  (\ref{eq:expectation-of-z}) that
  \begin{align*}	
      \Pr\left(Z\geq j-\ell\right)&=\Pr\left(Z> j-\ell-1\right)\\
      &\geq\Pr\left(Z>\mathbb{E}\left[Z\right]-\frac{\gamma}{\gamma_0}\right).	
  \end{align*}		
  Applying Theorem~\ref{cor:feige-ineq} for
  $\Delta =\gamma/\gamma_0\leq 1$ and noting that we chose $\mu$ and
  $\lambda$ such that $1/\gamma_0=\lambda/\mu\geq 13e/(1-c)= 13e(1+\delta)$ yield
  \begin{align}
  \begin{split}
  \Pr\left(Z\geq j-\ell\right)
     &\geq \min\bigg\{\frac{1}{13},\frac{\Delta}{\Delta+1}\bigg\}\\
     &\geq \frac{\Delta}{13}  = \frac{\gamma}{13\gamma_0} \geq e\left(1+\delta\right)\gamma.\label{eq:g2-rest-upgrade}
  \end{split}
  \end{align}	
  Combining (\ref{eq:g2-upgrade-prob}),
  (\ref{eq:no-mut-ell-prob}), and (\ref{eq:g2-rest-upgrade}) yields
  \begin{math}
    \Pr(Y_{1,n}\geq j)\geq \left(1+\delta\right)\gamma	
  \end{math},
  and, thus, condition (G2) of Theorem~\ref{thm:levelbasedtheorem} holds.
	
  \textbf{Step 3.} We now consider condition (G1) for any 
  level $j$. Let $P_t$ be any population where $|P_t\cap A_{\geq j}|\geq
  \gamma_0\lambda=\mu$. For a lower bound on $\prob{Y_{1,n}\geq j}$, we
  modify the population such that any individual in levels $A_{\geq
    j+1}$ is moved to level $A_j$. Thus, the $\mu$ fittest
  individuals belong to level $A_j$. By the definition of the \umda,  
  this will only reduce the probabilities $p_{t+1}(i)$ on the \onemax
  problem. Hence, by Lemma~\ref{lem:stocdom}, the distribution of $Y_{1,n}$
  for the modified population is stochastically dominated by $Y_{1,n}$ for
  the original population. A lower bound $z_j$ that holds for the modified
  population therefore also holds for the original population.
  All  the $\mu$ fittest
  individuals in the current sorted population $P_t$ 
  have exactly $j-1$ ones, and, therefore,
  \begin{math}
    \sum_{i=1}^{n}X_i= \mu\left(j-1\right)	
  \end{math}
  and
  \begin{math}
    \sum_{i=1}^{k}X_i= \mu\left(j-\ell-1\right)	
  \end{math}.	
  There are four distinct cases that cover all 
  situations according to different values of 
  variables $k$ and $j$. We aim to show that in all
  four cases, we can use the parameter $z_j=\Omega(\frac{n-j+1}{n})$.

  \underline{Case 0:}  $k=0$. In this case, $p_t(i)=1-1/n$
  for $1\leq i\leq j-1$, and $p_t(i)=1/n$ for $j\leq i\leq n$. To
  obtain $j$ ones, it suffices to sample only ones in
  the first $j-1$ positions, and exactly a one in the remaining
  positions, i.e.,
  \begin{align*}
    \prob{Y_{1,n}\geq j} &\geq \frac{n-j+1}{n}\left(1-\frac{1}{n}\right)^{n-1}\\
                        &=\Omega\left(\frac{n-j+1}{n}\right).
  \end{align*}

  \underline{Case 1:}  $k\geq \mu$. We will apply the anti-concentration inequality
  in Theorem~\ref{thm:anticoncentration}. To lower bound the
  variance of the number of ones sampled in the first $k$ positions,
  we use the bounds $1/\mu\leq p_i(t)\leq 1-1/\mu$ which hold for
  $1\leq i\leq k$. In particular,
  \begin{align}
  \begin{split}
   \var{Y_{1,k}}&=\sum_{i=1}^{k}p_t(i)\left(1-p_t(i)\right)\\
    &\geq 
    		\frac{k}{\mu}\left(1-\frac{1}{\mu}\right)\geq \frac{9k}{10\mu}\geq \frac{9}{10},
  \end{split}
  \end{align}
  where the second inequality holds for sufficiently large $n$ 
  because $\mu\geq a\ln(n)$ for some constant $a>0$. 
 Theorem~\ref{thm:anticoncentration} applied with $\sigma_k\geq
 \sqrt{9/10}$ now gives
 \begin{align*}
   \prob{Y_{1,k}=j-\ell-1}\leq \eta/\sigma_k.
 \end{align*}
  Furthermore, since  $\expect{Y_{1,k}}$ is an integer,
  Lemma~\ref{int-expectation} implies that
  \begin{align}\label{eq:sample-beyond-mean}
    \prob{Y_{1,k}\geq \expect{Y_{1,k}}} \geq 1/2.
  \end{align}
  By combining these two probability bounds, 
  the probability of sampling an offspring with
  at least $j-\ell$ ones 
  from the first $k$ positions is 
 \begin{align*}
   &\Pr\left(Y_{1,k}\geq j-\ell\right)\\
   &=\Pr\left(Y_{1,k}\geq j-\ell-1\right)-\Pr\left(Y_{1,k}=j-\ell-1\right)\nonumber\\
   &=\Pr\left(Y_{1,k}\geq \mathbb{E}\left[Y_{1,k}\right]\right)-\Pr\left(Y_{1,k}=j-\ell-1\right)\nonumber\\
     &\geq \frac{1}{2}-\frac{\eta}{\sigma_k} >\frac{1}{2}-\frac{0.4688}{\sqrt{9/10}} = \Omega(1).\label{eq:case1-pos-upgrade}
 \end{align*}
	In order to obtain an offspring in levels $A_{\geq j+1}$,
    it is sufficient to sample at least $j-\ell$ ones
        from the $k$ first positions and $\ell$ ones from position
        $k+1$ to position $k+\ell$. Therefore, 
        using (\ref{eq:no-mut-ell-prob}) and the above lower bound,        
        this event happens with probability bounded from below by
        
	\begin{align*}
	\Pr\left(Y_{1,n}\geq j\right)&\geq \Pr\left(Y_{1,k}
    \geq j-\ell\right)\cdot\Pr\left(Y_{k+1,k+\ell}=\ell\right)\\
	&> \Omega(1)\cdot\frac{1}{e} =\Omega\left(\frac{n-j+1}{n}\right).
	\end{align*} 
	
 	\underline{Case 2:} $1\leq k<\mu$ and $j\geq n(1-1/\mu)+1$. The 
    second condition is equivalent to $1/\mu\geq (n-j+1)/n$. The probability
        of sampling an offspring in levels $A_{\geq j+1}$ is
        then bounded
        from below by
	\begin{align*}
	&\Pr\left(Y_{1,n}\geq j\right)\\
    &\geq
        \Pr\left(Y_{1,1}=1\right)
        \Pr\left(Y_{2,k}\geq j-\ell-1\right)
        \Pr\left(Y_{k+1,k+\ell}=\ell\right)\\
        &\geq \frac{1}{\mu}\Pr\left(Y_{2,k}\geq j-\ell-1\right)\frac{1}{e}
         \geq \frac{1}{14e\mu},
       \end{align*}
       where we used the inequality $\Pr\left(Y_{2,k}\geq
         j-\ell-1\right)\geq 1/14$ for $\mu\geq 14$ 
         proven in \cite{bib:Dang2015a}. Since 
        $1/\mu\geq (n-j+1)/n$, we can conclude that
	\begin{displaymath}
	\Pr\left(Y_{1,n}\geq j\right)
           \geq \frac{1}{14e\mu}
           \geq \frac{n-j+1}{14en}=\Omega\left(\frac{n-j+1}{n}\right).
	\end{displaymath}	

	\underline{Case 3:} $1\leq k<\mu$ and $j<n(1-1/\mu)+1$. This
case covers all the remaining situations not included by the
first two cases. The latter inequality can be 
rewritten as $n-j+1\geq n/\mu$. We also have 
$\mu\leq \sqrt{n(1-c)}$, so $n/\mu\geq \mu/(1-c)$. It then holds that 
\begin{displaymath}
	(1-c)(n-j+1)  \geq (1-c)(n/\mu) \geq(1-c)\mu/(1-c)=\mu>k.
\end{displaymath}
Thus, the two conditions can be shortened to 
$1\leq k<(1-c)(n-j+1)$. In this case, 
the probability of sampling $j$ ones is 
\begin{align*}
	 &\Pr(Y_{1,n}\geq j)\\	
     &\geq \Pr\left(Y_{1,k}\geq j-\ell-1\right)
	\Pr\left(Y_{k+1,n}\ge \ell+1\right)\\
	&\geq \frac{1}{2}\cdot\frac{1}{e}\cdot\frac{n-k-\ell}{n} 
	= \frac{n-k-\ell}{2en},
\end{align*}
where the $1/2$ factor in the last inequality is due to \eqref{eq:sample-beyond-mean}.
Since $\ell\leq j-1$ and $k<(1-c)(n-j+1)$, it follows that 
\begin{align*}
	\Pr\left(Y_{1,n}\geq j\right)
    &> \frac{n-(1-c)(n-j+1)-j+1}{2en} \\
    &=\Omega\left(\frac{n-j+1}{n}\right).
\end{align*}	
Combining all three cases together yields the probability of sampling an offspring in levels $A_{\ge j+1}$ as follows.
\begin{displaymath}
\Pr\left(Y_{1,n}\geq j\right)  = \Omega\left(\frac{n-j+1}{n}\right),
\end{displaymath}
and by defining 
$z_j=c\cdot \frac{n-j+1}{n}$ for a sufficiently small $c>0$ and 
choosing $z_*\coloneqq \min_{j\in[n]}\{z_j\}=\Omega(1/n)$, condition (G1) of 
Theorem~\ref{thm:levelbasedtheorem} is satisfied.

	\textbf{Step 4.} We consider condition (G3) regarding the 
    population size. We have $1/\delta^2=\mathcal{O}(1)$,
    $1/z_*= \mathcal{O}(n)$, and $m=\mathcal{O}(n)$. Therefore, there must 
    exist a constant $a>0$ such that	    
    \begin{align*}
      \left(\frac{a}{\gamma_0}\right)\ln(n) \geq \left(\frac{4}{\gamma_0\delta^2}\right)\ln\left(\frac{128m}{z_*\delta^2}\right). 
    \end{align*}
    The requirement $\mu\geq a\ln(n)$ now implies that
    \begin{align*}
      \lambda 
      =  \frac{\mu}{\mu/\lambda} 
      \geq \left(\frac{a}{\gamma_0}\right)\ln(n)
      \geq \left(\frac{4}{\gamma_0\delta^2}\right)\ln\left(\frac{128m}{z_*\delta^2}\right);
    \end{align*}
    hence, condition (G3) is satisfied. 

	\textbf{Step 5.} We have verified all three
    conditions (G1), (G2), and (G3). By
        Theorem~\ref{thm:levelbasedtheorem} and the bound $z_j=\Omega((n-j+1)/n)$,
        the expected optimisation time is therefore
	\begin{align*}
          \mathbb{E}\left[T\right]=
        \mathcal{O}\left(\lambda\sum_{j=1}^{n}\ln\left(\frac{n}{n-j+1}\right)+
    	\sum_{j=1}^{n}\frac{n}{n-j+1}\right).
	\end{align*}		
	We simplify the two terms 
    separately. By 
    Stirling's approximation (see Lemma~\ref{stirling}), 
    the first term is
	\begin{multline*}
          \mathcal{O}\left(\lambda\sum_{j=1}^{n}\ln\left(\frac{n}{n-j+1}\right)\right)
    =\mathcal{O}\left(\lambda\ln\prod_{j=1}^n\frac{n}{n-j+1}\right)\\
    =\mathcal{O}\left(\lambda\ln \left(\frac{n^n}{n!}\right)\right)
    =\mathcal{O}\left(\lambda\ln \frac{n^n\cdot e^n}{n^{n+1/2}}\right)
   = \mathcal{O}\left(n\lambda\right).
	\end{multline*}
	The second term is
	\begin{displaymath}
\mathcal{O}\left(\sum_{j=1}^{n}\frac{n}{n-j+1}\right)
=\mathcal{O}\left(n\sum_{k=1}^{n}\frac{1}{k}\right) 
	= \mathcal{O}\left(n\log n\right).	
	\end{displaymath}
	Since $\lambda > \mu= \Omega(\log n)$,
    the expected optimisation time is 
    \begin{displaymath}
    \mathbb{E}\left[T\right]=\mathcal{O}\left(n\lambda\right)+
    	\mathcal{O}\left(n\log n\right)=\mathcal{O}\left(n\lambda\right).\qedhere
    \end{displaymath}
\end{proof}


\subsection{Large parent population size}
For larger parent population sizes, i.e., 
$\mu= \Omega(\sqrt{n}\log n)$,
we prove the upper bound of $\mathcal{O}(\lambda\sqrt{n})$
on the expected optimisation time of the \umda on \onemax. 
Note also that Witt \cite{bib:Witt2017}
obtained a similar result, and we rely on one of his lemmas to derive
our result. In overall, our proof is not only significantly simpler
but also
holds for different settings of $\mu$
and $\lambda$, that is, $\lambda=\Omega(\mu)$ instead of 
$\lambda=\Theta(\mu)$.

\begin{theorem}\label{onemax-large-mu}
  For sufficiently large constants $a>1$ and $c>0$,
  the \umda (with margins) with offspring population size 
  $\lambda \geq a\mu$, and parent population size
  $\mu \geq c\sqrt{n}\log n$, has
  expected optimisation time $\mathcal{O}\left(\lambda\sqrt{n}\right)$ 
  on \onemax.
\end{theorem}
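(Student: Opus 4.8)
The plan is to apply the level-based theorem (Theorem~\ref{thm:levelbasedtheorem}) once more, with $\gamma_0 := \mu/\lambda$ as in Theorems~\ref{thm:leadingones} and~\ref{onemax-small-mu}, but now over a \emph{coarse} partition with only $m = \Theta(\sqrt n)$ levels. A coarse partition is forced here: since $z_j \le 1$, every summand $\lambda\ln(6\delta\lambda/(4+z_j\delta\lambda))$ in the theorem's bound is $\Omega(\lambda)$, so applying the theorem with the canonical $(n+1)$-level partition cannot yield a bound below $\Omega(n\lambda)$ --- exactly what Theorem~\ref{onemax-small-mu} already gives. Concretely, I would group bitstrings by \onemax-value, placing the boundaries so that crossing level $j$ upwards requires improving \onemax by at most $\epsilon\sqrt{z}$, where $z$ is roughly the number of zeros at that stage and $\epsilon>0$ is a small constant (so the $z$-thresholds are $(\lceil\sqrt n\rceil - k\epsilon/2)^2$, $k=0,1,2,\dots$). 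This gives $\Theta(\sqrt n/\epsilon)=\Theta(\sqrt n)$ levels, with only the topmost $O(1)$ of them having width~$1$, and $A_m = \{1^n\}$.

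For conditions (G1) and (G2) I would use the anti-concentration bound. If $|P_t \cap A_{\ge j}| \ge \gamma_0\lambda = \mu$ and, for (G2), $|P_t \cap A_{\ge j+1}| \ge \gamma\lambda$, then after re-arranging the bit positions as in the proof of Theorem~\ref{onemax-small-mu}, the selected marginals satisfy $\sum_i p_{t+1}(i) \ge v_j + (\gamma/\gamma_0)\,g_j$, where $v_j$ is the \onemax-floor of level $j$, $g_j = \Theta(\epsilon\sqrt z)$ its width, and $\gamma=0$ for (G1). So the number of ones $Y = Y_{1,n}$ of a fresh offspring has $\mathbb{E}[Y] \ge v_j + (\gamma/\gamma_0)g_j$, and the task is to lower-bound $\Pr(Y \ge v_j + g_j) = \Pr(Y \ge \mathbb{E}[Y] + s)$ with $s \le g_j$. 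Using $\Pr(Y \ge \mathbb{E}[Y]) \ge 1/2$ (Lemma~\ref{int-expectation}, up to an $O(1)$ rounding of $\mathbb{E}[Y]$) together with $\sigma_n\Pr(Y=y) \le \eta$ from Theorem~\ref{thm:anticoncentration}, which bounds $\Pr(\mathbb{E}[Y] \le Y < \mathbb{E}[Y]+s)$ by $(s+O(1))\eta/\sigma_n$, I get $\Pr(Y \ge \mathbb{E}[Y]+s) \ge \tfrac12 - (s+O(1))\eta/\sigma_n$. Provided $\var{Y} = \sigma_n^2 = \Omega(z)$, this is $\tfrac12 - O(\epsilon) = \Omega(1)$ for $\epsilon$ small, giving $z_j = \Omega(1)$ in (G1); for (G2), the case $\gamma = \gamma_0$ needs only $\Pr(Y \ge \mathbb{E}[Y]) \ge \tfrac12 \ge (1+\delta)\gamma_0$, which holds since $\lambda \ge a\mu$ with $a$ a sufficiently large constant, and smaller $\gamma$ is easier. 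For the $O(1)$ topmost levels the anti-concentration estimate is too weak; there it suffices to note that flipping the $O(1)$ remaining zero-bits to one has probability $\Omega(1)$ when their marginals are bounded away from $0$, or else to fall back on the trivial $z_j \ge 1/(en)$ --- harmless, since those $O(1)$ levels add only $O(\lambda\log n) + O(n) = o(\lambda\sqrt n)$ to the final bound.

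The main obstacle is the variance estimate $\var{Y} = \Omega(z)$, equivalently that a constant fraction of the $\approx z$ not-yet-fixed bits keep marginals bounded away from the lower border $1/n$; this is a genetic-drift statement that genuinely fails for small $\mu$ (forcing Theorem~\ref{onemax-small-mu} to settle for a much weaker $z_j$). Here I would invoke one of Witt's lemmas: for $\mu = \Omega(\sqrt n\log n)$, with probability $1 - n^{-\Omega(1)}$ no relevant marginal drops below a constant during the $O(\sqrt n)$ generations needed. Since Theorem~\ref{thm:levelbasedtheorem} requires (G1)--(G2) for \emph{every} population and not merely on a typical run, I would finish with a standard restart/conditioning argument: on the high-probability "healthy" event the level-based conclusion already yields $O(\lambda\sqrt n)$, and on the complementary $n^{-\Omega(1)}$-probability event I fall back on the unconditional bound $O(n\lambda\log\lambda)$ of \cite{bib:Dang2015a}, so that $\mathbb{E}[T] \le O(\lambda\sqrt n) + n^{-\Omega(1)}\cdot O(n\lambda\log\lambda) = O(\lambda\sqrt n)$. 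The rest is routine: (G3) reduces to $\mu \ge \frac{4}{\delta^2}\ln(128m/(z_*\delta^2)) = O(\log n)$, which holds since $\mu = \Omega(\sqrt n\log n)$; and Step~5 sums $m = O(\sqrt n)$ terms, each $O(\lambda + 1)$ for the bulk levels plus the $o(\lambda\sqrt n)$ contribution of the top, for $\mathbb{E}[T] = O(\lambda\sqrt n)$.
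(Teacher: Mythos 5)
Your proposal is correct and follows essentially the same route as the paper's proof: a coarse $f$-based partition into $m=\Theta(\sqrt{n})$ levels of width $\Theta(\sqrt{n-f_j})$, verification of (G1)/(G2) via the Poisson--Binomial anti-concentration bound under the assumption $p_t(i)\geq 1/4$ justified by Witt's genetic-drift lemma, and the same decomposition of $\mathbb{E}[T]$ that falls back on the unconditional $\mathcal{O}(n\lambda\log\lambda)$ bound on the low-probability failure event. The only (interchangeable) difference is that for (G1) you reuse the anti-concentration estimate with a small level-width constant, where the paper instead invokes Lemma~\ref{ce-lemma} to get a constant probability of overshooting the mean by a full level width.
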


Here, we are mainly interested in the parent population 
size $\mu \geq c\sqrt{n}\log n$ for a sufficiently large constant $c>0$. 
In this case, Witt~\cite{bib:Witt2017} found that 
$\Pr(T\leq n^{cc'})=\mathcal{O}(n^{-cc'})$,
where $c'$ is another positive constant and
$T:=\min\{t\geq 0\;|\;p_t(i)\leq 1/4\}$ 
for an arbitrary bit $i\in [n]$. This result implies that 
the probability of not sampling at least 
an optimal solution within $n^{cc'}$ generations is bounded by 
$\mathcal{O}(n^{-cc'})$. 
Therefore, the \umda needs 
$\mathcal{O}(n\lambda \log \lambda)/\lambda = \mathcal{O}(n\log \lambda)$ 
generations \cite{bib:Dang2015a} with probability $\mathcal{O}(n^{-cc'})$ and 
$\mathcal{O}(\lambda\sqrt{n})/\lambda=\mathcal{O}(\sqrt{n})$
with 
probability $1-\mathcal{O}(n^{-cc'})$
to optimise \onemax. 
The expected number of generations is
\begin{displaymath}
\mathcal{O}(n^{-cc'})\cdot \mathcal{O}(n\log \lambda) + (1-\mathcal{O}(n^{-cc'}))\cdot \mathcal{O}(\sqrt{n})
\end{displaymath}
If we choose the constant $c$ large enough, then $n\log \lambda$ can
subsume any polynomial number of generations, \ie 
$n\log \lambda \in \text{poly}(n)$, 
which leads to $\mathcal{O}(n^{-cc'})\cdot \mathcal{O}(n\log \lambda) = \mathcal{O}(1)$. Therefore,
the overall expected number of generations 
is still bounded by $\mathcal{O}(\sqrt{n})$, so the expected
optimisation time is $\mathcal{O}(\lambda\sqrt{n})$.

In addition, the analysis by Witt \cite{bib:Witt2017} implies
that all marginals will generally 
move to higher values and are unlikely to drop
by a large distance. We then pessimistically assume 
that all marginals are lower bounded by a 
constant $p_{\min}=1/4$. Again, we rearrange the bit positions such that
there exist two integers $0\leq k,\ell\leq n$, where $k+\ell=n$ and 
\begin{itemize}
\item $p_t(i)\in \left[p_{\min}, 1-\frac{1}{\mu}\right]$ for all $1\leq i \leq k$,
\item $p_t(i) =1-\frac{1}{n}$ for all $k+1\leq i \leq n$.
\end{itemize}
Note that $k>0$ because if $k=0$ we would have sampled
a globally optimal solution.

\begin{proof}[Proof of Theorem~\ref{onemax-large-mu}]
We  apply Theorem~\ref{thm:levelbasedtheorem} (i.e. level-based analysis).

\textbf{Step 1}:
We partition the search space into the $m$ subsets 
$A_1, \ldots,A_m$ (i.e. levels) defined for $i \in [m-1]$ as follows
\begin{align*}
  A_{i} 
    &:= \{ x \in \{0,1\}^n \mid f_{i-1} \leq \onemax(x) < f_{i} \}, \\
  \text{ and } 
  A_{m} 
    &:= \{ 1^n \},
\end{align*}
where the sequence $(f_i)_{i \in \mathbb{N}}$ 
is defined with some constant $d\in (0,1]$ as
\begin{align}
  f_0 
    &:= 0
  \text{ and }
  f_{i+1}
     := f_{i} + \lceil d\sqrt{n-f_{i}} \rceil. \label{eqn:recur-levels}
\end{align}
The range of $d$ will be specified later, but for now 
note that $m = \min\{i \mid f_{i} = n\} + 1$ and due 
to Lemma~\ref{lem:no-of-levels}\footnote{This 
and some other lemmas are in the Appendix},
we know that the sequence $(f_i)_{i \in \mathbb{N}}$ is well-behaved: it 
starts at $0$ and  increases steadily
(at least $1$ per level), then eventually
reaches $n$ exactly and remains there
afterwards. Moreover, the number 
of levels satisfies $m = \Theta(\sqrt{n})$.

\textbf{Step 2}:
For (G2), we assume that 
$|P_t\cap A_{\geq j}|\geq \gamma_0\lambda=\mu$ and 
$|P_t\cap A_{\geq j+1}|\geq \gamma\lambda$. 
Additionally, we make the pessimistic assumption that 
$|P_t\cap A_{\geq j+2}|= 0$, i.e. 
the current population contains exactly $\gamma\lambda$ individuals in
$A_{j+1}$, $\mu-\gamma\lambda$ individuals in level $A_{j}$,
and $\lambda-\mu$ individuals in the levels below $A_j$.
In this case,
\begin{align*}
\sum_{i=1}^n X_i &= \gamma\lambda f_j +(\mu-\gamma\lambda)f_{j-1}\\
&= \mu \left(f_{j-1}+\frac{\gamma}{\gamma_0}\left(f_j-f_{j-1}\right)\right),
\end{align*}
and 
\begin{align*}
\sum_{i=1}^k X_i &= \sum_{i=1}^n X_i - \sum_{i=k+1}^n X_i\\
&=\mu \left(f_{j-1}+\frac{\gamma}{\gamma_0}\left(f_j-f_{j-1}\right)-\ell\right).
\end{align*}
The expected value of $Y_{1,k}$ is 
\begin{displaymath}
\mathbb{E}\left[Y_{1,k}\right]=\frac{1}{\mu}\sum_{i=1}^k X_i 
= (f_{j-1}-\ell) +\frac{\gamma}{\gamma_0}\left(f_j-f_{j-1}\right).
\end{displaymath}
Due to the assumption $p_t(i)\geq p_{\min}=1/4$,
the variance of $Y_{1,k}$ is 
\begin{align*}
  \var{Y_{1,k}} &=\sum_{i=1}^k p_t(i)(1-p_t(i))\\
&\geq p_{\min}(k-\expect{Y_{1,k}})\\
  &= \frac{1}{4}\left(n-\ell-\mathbb{E}\left[Y_{1,k}\right]\right) \\
&= \frac{1}{4}\left(n-\ell-f_{j-1}-\frac{\gamma}{\gamma_0}\left(f_j-f_{j-1}\right)+\ell\right)\\
&\geq \frac{1}{4}\left(n-f_{j-1}- d\left(n-f_{j-1}\right)\right)\\
&=\frac{1}{4}\left(n-f_{j-1}\right)\left(1-d\right).
\end{align*}
The probability of sampling an offspring in
$A_{\geq j+1}$ is bounded
from below by
\begin{displaymath}
\Pr\left(Y_{1,n} \geq f_{j}\right) 
\geq \Pr(Y_{1,k}\geq f_j-\ell)\cdot \Pr(Y_{k+1,n} = \ell),
\end{displaymath}
where 
\begin{displaymath}
\Pr(Y_{k+1,n}=\ell)=\left(1-\frac{1}{n}\right)^{\ell}\geq \left(1-\frac{1}{n}\right)^{n-1}\geq \frac{1}{e},
\end{displaymath}
and 
\begin{align}\label{upgrade-prob-g2}
\begin{split}
&\Pr\left(Y_{1,k}\geq f_j-\ell\right) \\
&\ge \Pr\left(Y_{1,k}\geq \mathbb{E}\left[Y_{1,k}\right]\right)
	-\Pr\left(\mathbb{E}\left[Y_{1,k}\right]\leq Y_{1,k}\leq f_j-\ell\right).
\end{split}
\end{align}
By 
Theorem~\ref{thm:anticoncentration}, we have
\begin{align*}
\Pr(\mathbb{E}\left[Y_{1,k}\right]\leq Y_{1,k}\leq f_j-\ell) 
&\leq \frac{\eta \left(f_j-\ell-\mathbb{E}[Y_{1,k}]\right)}{\sqrt{\var{Y_{1,k}}}}\\
&= \eta \left(1-\frac{\gamma}{\gamma_0}\right)\frac{f_j-f_{j-1}}{\sqrt{\var{Y_{1,k}}}}\\
&= 2\eta \left(1-\frac{\gamma}{\gamma_0}\right)\frac{d}{\sqrt{1-d}}\\
&\leq \left(1-\frac{\gamma}{\gamma_0}\right)\frac{d}{\sqrt{1-d}}.
\end{align*}
The last inequality follows from $\eta \approx 0.4688 <1/2$.
Note that
$\Pr\left(Y_{1,k}\geq \mathbb{E}\left[Y_{1,k}\right]\right)
\ge \psi = \Omega(1)$ due to 
Lemma~\ref{lower-bound-on-Y-k}, so
\eqref{upgrade-prob-g2} becomes
\begin{equation}\label{lower-bound-on-Y-1-k}
\Pr(Y_{1,k}\geq f_j-\ell) 
\geq \psi - \left(1-\frac{\gamma}{\gamma_0}\right)\frac{d}
{\sqrt{1-d}}
\geq \psi\frac{\gamma}{\gamma_0}.
\end{equation}
The last inequality is satisfied if for any $j\in [m-1]$, 
\begin{align*}
\frac{d}{\sqrt{1-d}}\leq \psi
&\iff \psi^{-2} d^2+d-1 \leq 0.
\end{align*}
The discriminant of this quadratic equation is 
$\Delta = 1+4\psi^{-2}>0$.
Vieta's formula \cite{algebravol1}
yields that the product of its two solutions is negative, implying that
the equation has two real solutions 
$d_1<0$ and $d_2>0$. Specifically,
\begin{displaymath}
d_{1} = -(1 + \sqrt{\Delta})\psi^{2}/2 <0,
\end{displaymath}
and
\begin{displaymath}
d_{2} = (-1+ \sqrt{\Delta})\psi^{2}/2\in (0,1).
\end{displaymath}
Therefore, if we choose any value of $d$ such that
$0<d \leq d_2$, then inequality (\ref{lower-bound-on-Y-1-k}) 
always holds. 
The probability of sampling an offspring 
in $A_{\geq j+1}$ 
is therefore bounded from below by
\begin{displaymath}
\Pr(Y_{1,n}\geq f_j)\geq \frac{1}{e}\cdot \psi\frac{\gamma}{\gamma_0} 
\geq (1+\delta)\gamma.
\end{displaymath}
The last inequality holds if we choose the population size 
in the \umda such that
$\mu/\lambda = \gamma_0 \leq \psi/(1+\delta)e$, where 
$\delta \in (0,1]$. Condition
(G2) then follows.

\textbf{Step 3}:
Assume that 
$|P_t\cap A_{\geq j}|\geq \gamma_0\lambda=\mu$. 
This means that the $\mu$ fittest 
individuals in the current sorted population $P_t$ belong to 
levels $A_{\ge j}$. In other words,
\begin{displaymath}
\sum_{i=1}^n X_i\geq \mu f_{j-1}, 
\end{displaymath}
and 
\begin{displaymath}
\sum_{i=1}^k X_i = \sum_{i=1}^n X_i-\sum_{i=k+1}^n X_i
\geq \mu f_{j-1}-\mu \ell =\mu (f_{j-1}-\ell).
\end{displaymath}
The expected value of $Y_{1,n}$ is 
\begin{equation}\label{expectation-of-Y-n}
\begin{split}
\mathbb{E}\left[Y_{1,n}\right] &=\sum_{i=1}^n p_t(i) \\ 
&= \frac{1}{\mu}\sum_{i=1}^{k}X_i +\sum_{i=k+1}^n \left(1-\frac{1}{n}\right) \\
&\geq f_{j-1}-\frac{\ell}{n}.
\end{split}
\end{equation}
An individual belonging to the higher levels 
$A_{\geq j+1}$ must have at least $f_j$ ones. 
The probability of sampling 
an offspring $y \in A_{\geq j+1}$ is equivalent to $\Pr(Y_{1,n}\geq f_j)$. According to
the level definitions and following the result of Lemma~\ref{inequality-g1}, we have
\begin{align*}
\Pr\left(Y_{1,n}\geq f_j\right) 
&= \Pr\left(Y_{1,n}\geq f_{j-1}+\lceil d\sqrt{n-f_{j-1}}\rceil\right)\\
&\geq \Pr\left(Y_{1,n}\geq \mathbb{E}\left[Y_{1,n}\right]+d\sqrt{n-\mathbb{E}\left[Y_{1,n}\right]}\right).
\end{align*}

In order to obtain a lower bound on 
$\Pr\left(Y_{1,n}\geq f_j\right)$, we need to bound 
the probability 
$\Pr(Y_{1,n}\geq \mathbb{E}\left[Y_{1,n}\right]+d\sqrt{n-\mathbb{E}\left[Y_{1,n}\right]})$
from below by a constant. We  obtain such
a bound by applying the result of
Lemma~\ref{ce-lemma}.
This lemma with constant $d^*\geq 1/p_{\min}=4$ 
and $d \le d^*$ yields
\begin{align*}
&\Pr\left(Y_{1,n}\geq f_j\right)\\
&\geq \Pr\left(Y_{1,n}\geq \mathbb{E}\left[Y_{1,n}\right]+d\sqrt{n-\mathbb{E}\left[Y_{1,n}\right]}\right)\\
&\geq \Pr\left(Y_{1,n}\geq \min\bigg\{\mathbb{E}\left[Y_{1,n}\right]+d^*\sqrt{n-\lfloor \mathbb{E}\left[Y_{1,n}\right]\rfloor},n\bigg\}\right)\\
&\geq \kappa>0.
\end{align*}
Hence, the probability of sampling an offspring in levels 
$A_{\geq j+1}$ is bounded from below by a positive constant $z_j:=\kappa$ independent of $n$.

\textbf{Step 4}:
We consider condition (G3) regarding the 
    population size. We have $1/\delta^2=\mathcal{O}(1)$,
    $1/z_*= \mathcal{O}(1)$, and $m=\mathcal{O}(\sqrt{n})$. 
    Therefore, there must 
    exist a constant $c>0$ such that	    
    \begin{align*}
      \left(\frac{c}{\gamma_0}\right)\sqrt{n}\ln(n) 
      \geq \left(\frac{4}{\gamma_0\delta^2}\right)\ln\left(\frac{128m}{z_*\delta^2}\right). 
    \end{align*}
    The requirement $\mu\geq c\sqrt{n}\ln(n)$ now implies that
    \begin{align*}
      \lambda 
      =  \frac{\mu}{\mu/\lambda} 
      \geq \left(\frac{c}{\gamma_0}\right)\sqrt{n}\ln(n)
      \geq \left(\frac{4}{\gamma_0\delta^2}\right)\ln\left(\frac{128m}{z_*\delta^2}\right);
    \end{align*}
    hence, condition (G3) is satisfied. 
    
\textbf{Step 5}:
The probability of sampling an offspring in
levels $A_{\geq j+1}$ 
is bounded from below by $z_j=\kappa$. 
Having satisfied all three conditions, 
Theorem~\ref{thm:levelbasedtheorem} 
then guarantees an
upper bound on the expected optimisation 
time of the \umda 
on \onemax, assuming that $\mu=\Omega(\sqrt{n}\log n)$,
\begin{displaymath}
\expect{T} 
=\mathcal{O}\left(\lambda\sum_{j=1}^m \frac{1}{z_j}+\sum_{j=1}^m \frac{1}{z_j}\right)
=\mathcal{O}(m\lambda) = \mathcal{O}\left(\lambda\sqrt{n}\right)
\end{displaymath}
since $m=\Theta(\sqrt{n})$ due to Lemma~\ref{lem:no-of-levels}. 
\end{proof}

\section{Empirical results}\label{sec:empirical}
We have proved upper
bounds on the expected optimisation time 
of the \umda  on  \onemax, \leadingones and \binval. 
However, they are only asymptotic upper bounds 
as growth functions of the problem and population
sizes. They provide no information on 
the multiplicative constants or the 
influences of lower order terms. 
Our goal is also
to investigate the runtime behaviour 
for larger populations.
To complement the theoretical findings, we therefore
carried out some 
experiments by running the \umda on 
the three functions. 

For each function, the parameters  were chosen 
consistently with the theoretical analyses. 
Specifically, we set $\lambda=n$, 
and $n \in \{100, 200,\ldots,4500\}$. Although the
theoretical results imply that significantly smaller population sizes
would suffice,
e.g. $\lambda=O(\log n)$ for Theorem \ref{onemax-small-mu}
we chose a larger population size in the experiments to more easily
observe the impact of $\lambda$
on the running time of the algorithm.
The results are shown in 
Figures~\ref{fig:exp-onemax}--\ref{fig:exp-binval}.
For each value of $n$, the algorithm is run $100$ 
times, and then the average runtime is computed. 
The mean runtime for each value of $n$ is estimated
with $95\%$ confidence intervals using the
\textit{bootstrap percentile method} \cite{bib:Lehre2014} 
with $100$ bootstrap samples. Each mean point is plotted 
with two error bars to illustrate the upper and 
lower margins of the confidence intervals.

\subsection{OneMax}
In Section~\ref{sec:umda-onemax}, we 
obtained two upper bounds on the expected optimisation time of the 
the \umda on  \onemax, which 
are tighter than the earlier bound
$\mathcal{O}(n\lambda\log \lambda)$ 
in \cite{bib:Dang2015a}, as follows
\begin{itemize}
\item $\mathcal{O}\left(\lambda n\right)$ when 
$\mu=\Omega(\log n)\cap \mathcal{O}(\sqrt{n})$,
\item $\mathcal{O}(\lambda\sqrt{n})$ when
$\mu = \Omega(\sqrt{n}\log(n))$.
\end{itemize}

\begin{figure}[htp] 
    \centering
    \subfloat[small $\mu$]{%
        \includegraphics[width=0.45\textwidth]{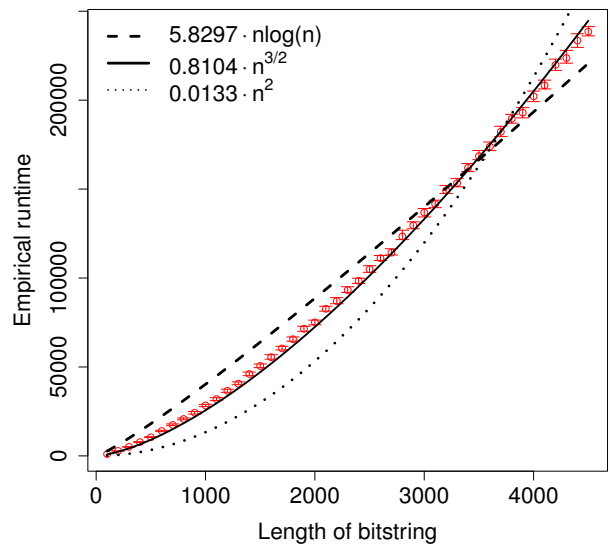}%
        \label{om-small-mu}%
        }%
    \hfill%
    \subfloat[large $\mu$]{%
        \includegraphics[width=0.45\textwidth]{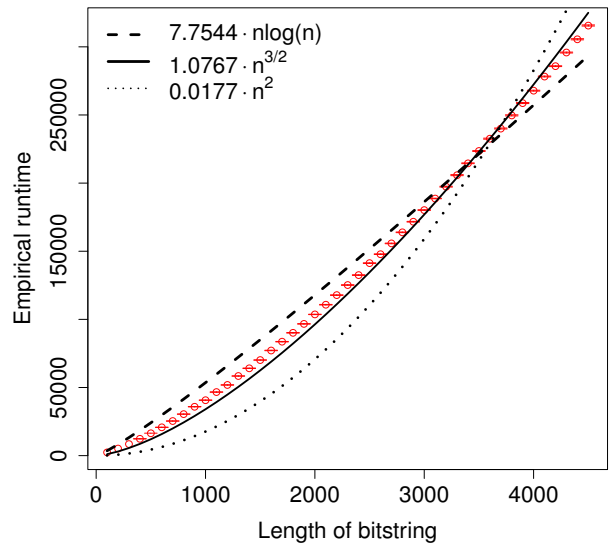}%
        \label{om-large-mu}%
        }%
    \caption{Mean runtime of the \umda on \onemax with 95\% 
		confidence intervals plotted with error bars in red colour. 
		Models are also fitted via non-linear regression.}
    \label{fig:exp-onemax}
\end{figure}

We therefore experimented with two different settings 
for the parent population size: 
$\mu=\sqrt{n}$ and $\mu=\sqrt{n}\log(n)$. 
We call the first setting small population
and the other large population.
The empirical runtimes are shown 
in Figure~\ref{fig:exp-onemax}.
Theorem~\ref{onemax-small-mu} implies the upper bounds
$\mathcal{O}(n^2)$ for the setting of small population and 
$\mathcal{O}(n^{3/2})$ for the setting of large population.
Following \cite{bib:Lehre2014}, we  
identify the three positive constants 
$c_1, c_2$ and $c_3$ that best fit the models $c_1 n\log n$, $c_2 n^{3/2}$ and
$c_3n^{2}$  in non-linear least square regression. Note in particular that 
these models
were chosen because they are close to the theoretical results. The 
correlation coefficient $\ccoef$ is then calculated for each model 
to find the best-fit model.

\begin{table}[h]
	\centering
	\caption{Correlation coefficient $\ccoef$  for the best-fit models 
	                  in the experiments with \onemax shown in 
	                  Figures~\ref{om-small-mu} and \ref{om-large-mu}.}
	\label{tab:onemax}
	\begin{tabular}{@{}lll@{}}
	\toprule
	   \textbf{Setting}
	  &\textbf{Model}
	  &$\boldsymbol{\ccoef}$\\
	\midrule
	   $\mu = \sqrt{n}$
	  &$5.8297 \;n\log n$ & $0.9968$\\ 
	  &$0.8104 \;n^{3/2}$ & $\mathbf{0.9996}$\\
	  &$0.0133 \;n^{2}$   & $0.9910$\\
	\midrule
	   $\mu = \sqrt{n}\log n$
	  &$7.7544 \;n\log n$ & $0.9974$\\
	  &$1.0767 \;n^{3/2}$ & $\mathbf{0.9995}$\\
	  &$0.0177 \;n^{2}$   & $0.9903$\\
	\bottomrule
	\end{tabular}
\end{table}	

In Table \ref{tab:onemax}, we observe that
for small parent populations (i.e. $\mu = \sqrt{n}$), 
model  $0.8104\;n^{3/2}$
fits 
the empirical data best,
while the quadratic model gives the 
worst
result.
 For larger parent population (i.e. $\mu = \sqrt{n}\log n$), the model
$1.0767~n^{3/2}$ fits best the empirical data among the three models. 
Since $0.8104~n^{3/2} \in \mathcal{O}(n^2)$,
these findings are consistent with the
theoretical expected optimisation time and may further suggest that
the quadratic bound in case of small population is not tight.

\subsection{LeadingOnes}
We conducted experiments with $\mu=\sqrt{n}$, 
and $\lambda=n$. According to 
Theorem~\ref{thm:leadingones}, 
the upper bound of
the expected runtime is in this case 
$\mathcal{O}(n\lambda\log \lambda+ n^2) = \mathcal{O}(n^2\log n)$. 
Figure~\ref{fig:exp-leadingones} shows the empirical runtime.
Similarly to the \onemax problem,
we fit the empirical runtime with four different models -- 
$c_1n\log n$, $c_2n^{3/2}$, $c_3n^2$ and $c_4n^2\log n$ -- 
using non-linear regression. 
The best values of the four constants 
are shown in Table~\ref{tab:leadingones}
along with the correlation coefficients of the models.

\begin{figure}[h] 
    \centering      
    \includegraphics[width=0.5\textwidth]{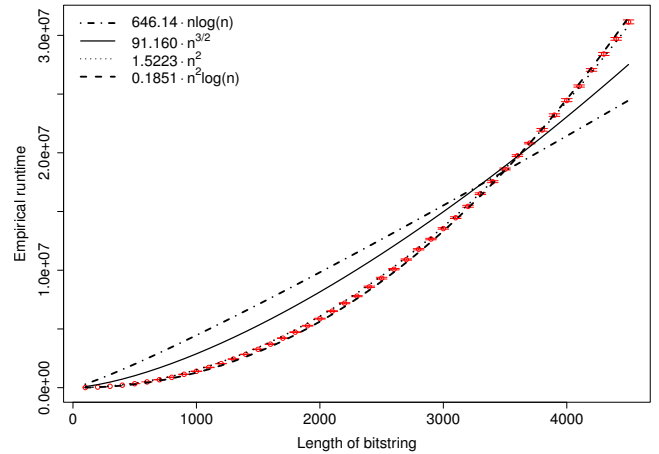}%
    \caption{Mean runtime of the \umda on \leadingones with 95\% 
		confidence intervals plotted with error bars in red colour. 
		Models are also fitted via non-linear regression.}
    \label{fig:exp-leadingones}
\end{figure}

\begin{table}[h]
	\centering
	\caption{Correlation coefficient $\ccoef$ for the best-fit models 
	         in the experiments with \leadingones shown in 
	         Figure~\ref{fig:exp-leadingones}.}
	\label{tab:leadingones}
	\begin{tabular}{@{}lll@{}}
	\toprule
	   \textbf{Setting}
	  &\textbf{Model}
	  &$\boldsymbol{\ccoef}$\\
	\midrule
	    $\mu = \sqrt{n}$
	   &$646.14 \;n\log n$ & $0.9756$\\
	   &$91.160 \;n^{3/2}$ & $0.9928$\\
       &$1.5223 \;n^{2}$   & $\mathbf{0.9999}$\\
       &$0.1851 \;n^{2}\log n$   & $\mathbf{0.9999}$\\
	\bottomrule
	\end{tabular}
\end{table}	

Figure~\ref{fig:exp-leadingones} and 
Table~\ref{tab:leadingones}  show 
that both the model $1.5223~n^2$ 
and the model $0.1851~n^2\log n$, having the same correlation coefficient,
 fit well with the empirical data (\ie 
 the empirical data lie between these two curves).
 This finding
is consistent with 
the
theoretical runtime 
bound
$\mathcal{O}(n^2\log n)$. Note also that these two models 
 differ asymptotically by $\Theta(\log n)$, suggesting that 
our analysis of the \umda 
on \leadingones is nearly tight.

\subsection{BinVal}
Finally, we consider \binval.
The upper bound
$\mathcal{O}(n\lambda\log \lambda + n^2)$ 
from Theorem~\ref{thm:leadingones} for the function is 
identical to the bound for \leadingones.
Since \binval is also a linear function like \onemax,
we decided to set the experiments similarly for these functions,
\ie with different parent populations
$\mu=\sqrt{n}$ and $\mu=\sqrt{n}\log n$.
The empirical results are shown in Figure~\ref{fig:exp-binval}. 
Again the empirical runtime  is fitted to the three models 
$c_1 n\log n$, $c_2 n^{3/2}$ and $c_3 n^{2}$. 
The best values of 
$c_1, c_2$ and $c_3$ are listed in Table~\ref{tab:binval}, along with the
correlation coefficient for each model.

\begin{figure}[h] 
    \centering
    \subfloat[small $\mu$]{%
        \includegraphics[width=0.45\textwidth]{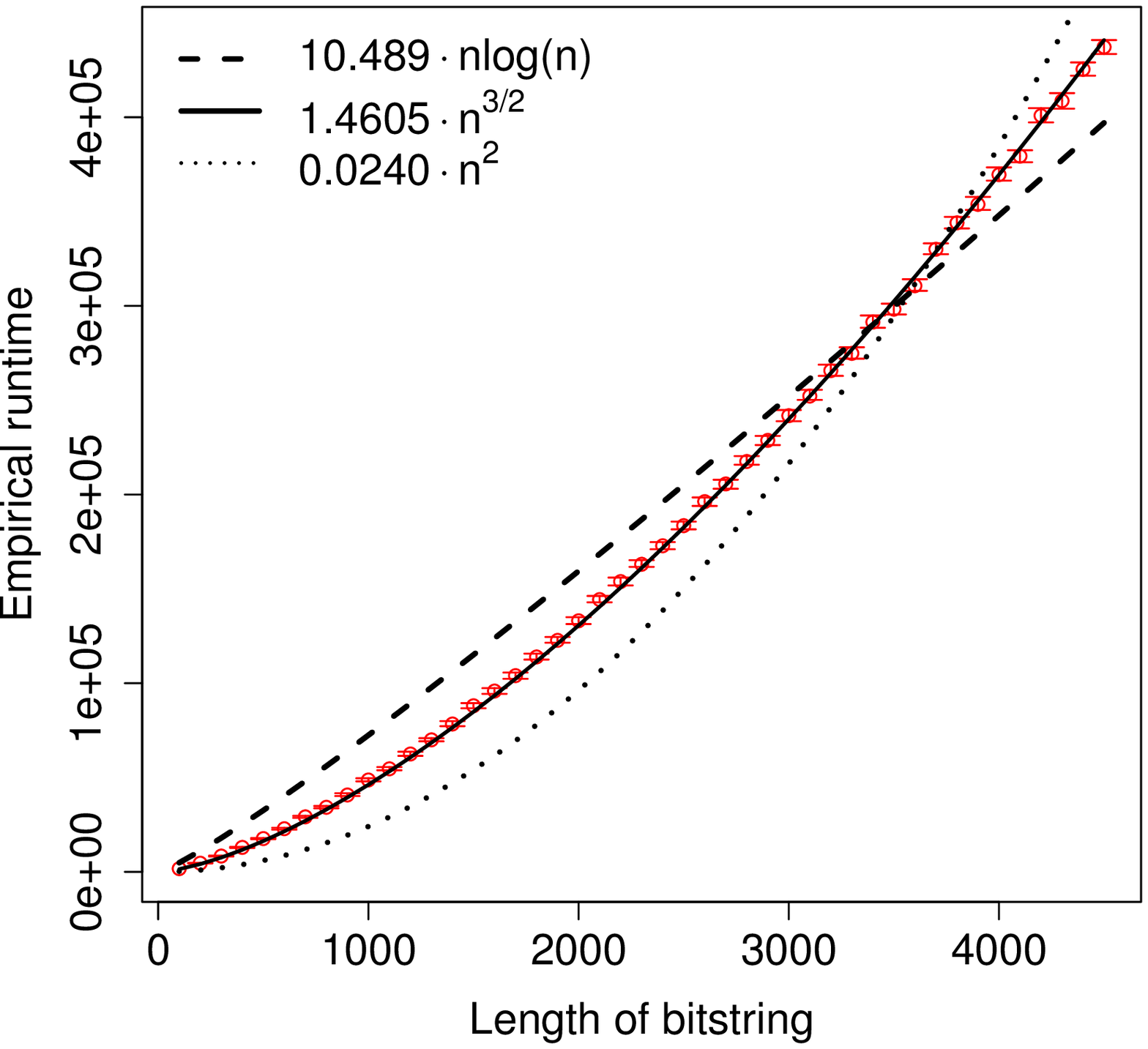}%
        \label{binval-small-mu}%
        }%
    \hfill%
    \subfloat[large $\mu$]{%
        \includegraphics[width=0.45\textwidth]{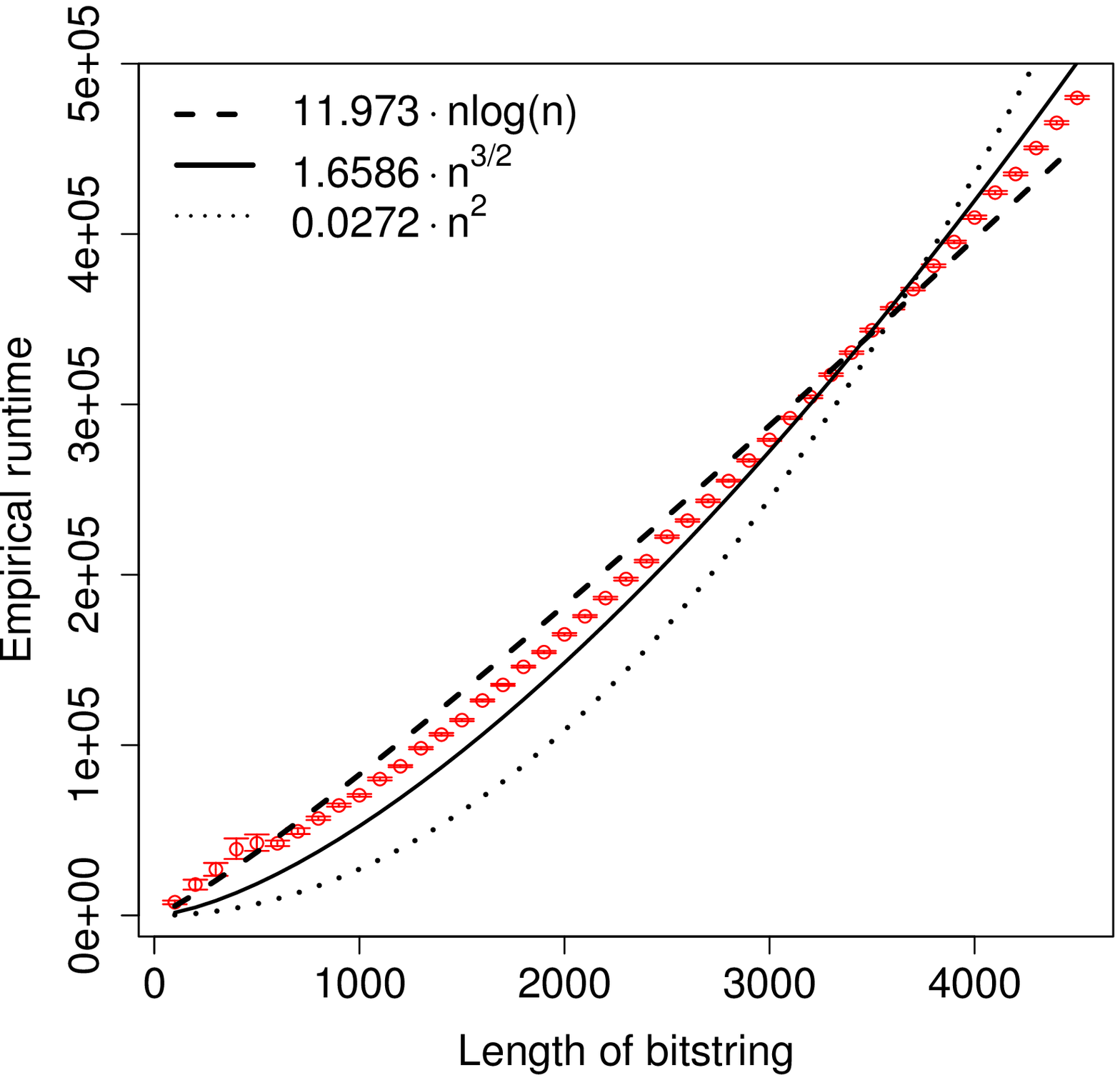}%
        \label{binval-large-mu}%
        }%
    \caption{Mean runtime of the \umda on \binval with 95\% 
		confidence intervals plotted with error bars in red colour. 
		Models are also fitted via non-linear regression.}
    \label{fig:exp-binval}
\end{figure}

\begin{table}[h]
	\centering
	\caption{Correlation coefficient $\ccoef$  for the best-fit models 
	        in the experiments with \binval shown in 
	         Figures~\ref{binval-small-mu} and \ref{binval-large-mu}}
	\label{tab:binval}
	\begin{tabular}{@{}lll@{}}
	\toprule
	   \textbf{Setting}
	  &\textbf{Model}
	  &$\boldsymbol{\ccoef}$\\
	\midrule
	   $\mu = \sqrt{n}$
      &$10.489 \;n\log n$ & $0.9952$\\
      &$1.4605 \;n^{3/2}$ & $\mathbf{0.9999}$\\
      &$0.0240 \;n^{2}$   & $0.9933$\\
	\midrule
	   $\mu = \sqrt{n}\log n$
      &$11.973 \;n\log n$ & $0.9972$\\
      &$1.6596 \;n^{3/2}$ & $\mathbf{0.9994}$\\
      &$0.0272 \;n^{2}$   & $0.9903$\\
	\bottomrule
	\end{tabular}
\end{table}	

Theorem~\ref{thm:leadingones}  
gives the  upper bound of
$\mathcal{O}(n^2\log n)$
for
the expected runtime 
of \binval.
However,
Figure~\ref{fig:exp-binval} and Table~\ref{tab:binval} 
 show clearly that 
the model $1.4605 \;n^{3/2}$ fits 
best
the empirical runtime for $\mu=\sqrt{n}$.
On the other hand,
the empirical runtime lies between the two models
$11.973 \;n\log n$ and $1.6586 \;n^{3/2}$
when $\mu=\sqrt{n}\log n$.
While these observations
are consistent with the 
theoretical upper bound since
$\mathcal{O}(n^{3/2})$ and $\mathcal{O}(n\log n)$ are all members of
$\mathcal{O}(n^2\log n)$, 
they also suggest that
our analysis of the \umda
on \binval given by Theorem~\ref{thm:leadingones} may be loose.

\section{Conclusion}\label{sec:conclusion}
Despite the popularity of \edas in real-world applications, 
little has been known 
about their theoretical optimisation time, even for
apparently simple settings such as the \umda on toy functions. 
More results for the \umda 
on these simple problems with well-understood structures
provide a way to describe and compare the performance 
of the algorithms with other search heuristics. Furthermore,
results about
the \umda are not only relevant to evolutionary computation, but also
to population genetics where it corresponds to the notion of
\emph{linkage equilibrium} \cite{bib:muhlenbein2002,bib:Slatkin2008}.

We have analysed the expected optimisation time of the
the \umda  on three benchmark 
problems: \onemax, \leadingones and \binval.
For both \leadingones and \binval, we proved the 
upper bound of
$\mathcal{O}(n\lambda\log \lambda+n^2)$, which holds for
$\lambda=\Omega(\log n)$. For \onemax, two upper bounds of
$\mathcal{O}(\lambda n)$ 
and $\mathcal{O}(\lambda \sqrt{n})$ were obtained for 
$\mu = \Omega(\log n)\cap \mathcal{O}(\sqrt{n})$ and 
$\mu= \Omega(\sqrt{n}\log n)$, respectively. 
Although our result assumes that $\lambda\geq (1+\beta)\mu$ for some
positive constant $\beta>0$, 
it no longer requires that $\lambda=\Theta(\mu)$ 
as  in \cite{bib:Witt2017}.  
Note that if
$\lambda =\Theta(\log n)$, a tight bound
of $\Theta(n\log n)$ on the 
expected optimisation time of the \umda on
\onemax is obtained, matching the well-known tight 
bound of $\Theta(n\log n)$
for the (1+1) \ea on the class of linear functions. 
Although we did not obtain a runtime bound 
when the parent population size is 
$\mu = \Omega(\sqrt{n})\cap \mathcal{O}(\sqrt{n}\log n)$, our results 
finally close the existing $\Theta(\log \log n)$-gap
between the first upper bound of
$\mathcal{O}(n\log n \log \log n)$ for 
$\lambda=\Omega(\mu)$  \cite{bib:Dang2015a} 
and the relatively new lower
bound of $\Omega(\mu\sqrt{n}+n\log n)$ 
for $\lambda = (1+\Theta(1))\mu$
\cite{bib:Krejca}. 

Our analysis further demonstrates that the level-based theorem can
yield, relatively easily, asymptotically tight 
upper bounds for non-trivial,
population-based algorithms. An important additional component of the
analysis was the use of anti-concentration properties of the
Poisson-Binomial distribution. Unless the variance of the sampled
individuals is not too small, the distribution of the population
cannot be too concentrated anywhere, even around the mean, 
yielding sufficient diversity to
discover better solutions. We expect that similar arguments will lead to
new results in runtime analysis of evolutionary algorithms.

\appendix
\section{Appendix}

\begin{lemma}[\cite{mitrinovic1970analytic}]
	\label{exponential-bounds}
	For all $t\in \mathbb{R}$ and $n\in \mathbb{R}^+$, 
    \begin{displaymath}
    \left(1+\frac{t}{n}\right)^n \leq e^t \leq \left(1+\frac{t}{n}\right)^{n+t/2}.
    \end{displaymath}
\end{lemma}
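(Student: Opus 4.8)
The plan is to take logarithms and reduce both inequalities to two elementary one-variable estimates. Writing $x := t/n$ (legitimate since $n\in\mathbb{R}^+$) and working under the implicit assumption $1+x>0$ so that all the powers are well defined, taking natural logarithms turns $(1+t/n)^n\le e^t$ into $n\ln(1+x)\le nx$, i.e.\ $\ln(1+x)\le x$, and turns $e^t\le(1+t/n)^{n+t/2}$ into $nx\le n\big(1+\tfrac{x}{2}\big)\ln(1+x)$, i.e.\ $x\le\big(1+\tfrac{x}{2}\big)\ln(1+x)$. It therefore suffices to establish these two scalar inequalities for $x>-1$ and then undo the logarithm.

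For the first, I would invoke the standard tangent-line bound for the concave function $\ln$: setting $h(x):=x-\ln(1+x)$ on $(-1,\infty)$, one has $h(0)=0$ and $h'(x)=1-\tfrac{1}{1+x}=\tfrac{x}{1+x}$, which is $\le 0$ on $(-1,0]$ and $\ge 0$ on $[0,\infty)$; hence $h$ attains its global minimum $0$ at $x=0$, giving $\ln(1+x)\le x$ throughout $(-1,\infty)$, and in particular $(1+t/n)^n\le e^t$ for all $t>-n$.

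For the second, since $1+\tfrac{x}{2}>0$ on $(-1,\infty)$ the inequality $x\le\big(1+\tfrac{x}{2}\big)\ln(1+x)$ is equivalent to the Pad\'e-type lower bound $\ln(1+x)\ge\tfrac{2x}{2+x}$. I would prove this by putting $g(x):=\ln(1+x)-\tfrac{2x}{2+x}$, noting $g(0)=0$, and computing
\[
g'(x)=\frac{1}{1+x}-\frac{4}{(2+x)^2}=\frac{(2+x)^2-4(1+x)}{(1+x)(2+x)^2}=\frac{x^2}{(1+x)(2+x)^2}\ge 0 .
\]
Since $g$ is nondecreasing with $g(0)=0$, we get $g(x)\ge 0$ for $x\ge 0$, i.e.\ $\ln(1+x)\ge\tfrac{2x}{2+x}=\tfrac{x}{1+x/2}$; multiplying through by $n+t/2=n\big(1+\tfrac{x}{2}\big)>0$ and exponentiating recovers $e^t\le(1+t/n)^{n+t/2}$.

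As the above is routine calculus, there is no genuine obstacle; the only care required concerns the regime of validity. The logarithms force $1+t/n>0$, and the sign analysis of $g$ shows the upper estimate is obtained cleanly for $t\ge 0$, which is the range relevant to how the lemma is applied in the paper (for instance via the monotonicity chain $(1-1/n)^{\ell}\ge(1-1/n)^{n-1}\ge 1/e$, whose last step is the left inequality applied with $n$ replaced by $n-1$ and $t=1$). Thus the whole proof amounts to the two-line derivative computation for $g$ together with this bookkeeping.
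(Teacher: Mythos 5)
The paper offers no proof of this lemma---it is imported verbatim from the cited reference---so there is nothing in-paper to compare against; your calculus is the standard derivation and is correct as far as it goes. The left inequality reduces to $\ln(1+x)\le x$ and the right one to the Pad\'e-type bound $\ln(1+x)\ge 2x/(2+x)$, and both of your derivative computations check out. The one substantive issue is the quantifier, and you have in fact put your finger on a real defect in the statement rather than a gap in your own argument: since $g$ is nondecreasing with $g(0)=0$, for $-1<x<0$ you get $g(x)\le 0$, i.e.\ $\ln(1+x)\le 2x/(2+x)$, which after multiplying by $n(1+x/2)>0$ yields the \emph{reverse} inequality $e^t\ge(1+t/n)^{n+t/2}$ throughout $-n<t<0$ (concretely, $t=-1$, $n=2$ gives $e^{-1}\approx 0.368$ versus $2^{-3/2}\approx 0.354$). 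So the blanket ``for all $t\in\mathbb{R}$'' is an overstatement---quite apart from $(1+t/n)^n$ being undefined over the reals for $t<-n$ and non-integer $n$---and no proof can close that gap; the honest statement restricts the right inequality to $t\ge 0$.

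Your bookkeeping of the downstream application is also the right repair. The paper invokes the lemma ``for $t=-1$'' to conclude $(1-1/n)^{n-1}\ge 1/e$, which appeals to the right-hand inequality precisely in the regime where it fails; obtaining the same bound from the left-hand inequality with $t=1$ and $n$ replaced by $n-1$, i.e.\ $(1+\tfrac{1}{n-1})^{n-1}\le e$ rearranged to $(1-\tfrac{1}{n})^{n-1}\ge e^{-1}$, is valid and is all the runtime proofs actually need.
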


\begin{lemma}[Theorem 3.2, \cite{bib:Jogdeo}]\label{int-expectation}
	Let $Y_1,Y_2,\ldots,Y_n$ be $n$ 
    independent Bernoulli random variables, and 
    $Y\coloneqq \sum_{i=1}^{n}Y_i$ is 
    the sum of these random variables. 
    If $\mathbb{E}[Y]$ is an integer, then 
	\begin{displaymath}
	\Pr\left(Y\geq \mathbb{E}[Y]\right)\geq 1/2.
	\end{displaymath}
\end{lemma}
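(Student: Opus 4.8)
The plan is to prove a stronger induction hypothesis and then read off the lemma as a special case. Specifically, I would show by induction on $n$ that for any sum $Y=\sum_{i=1}^{n}Y_i$ of independent Bernoulli variables with $\mu:=\mathbb{E}[Y]$ there exists an integer $m\in\{\lfloor\mu\rfloor,\lceil\mu\rceil\}$ that is a median, i.e.\ $\Pr(Y\le m)\ge 1/2$ and $\Pr(Y\ge m)\ge 1/2$ — informally, the median of $Y$ never strays by a full unit from its mean. Granting this, when $\mu\in\mathbb{Z}$ the window $\{\lfloor\mu\rfloor,\lceil\mu\rceil\}$ collapses to $\{\mu\}$, forcing $m=\mu$ and hence $\Pr(Y\ge\mathbb{E}[Y])\ge 1/2$. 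The base cases $n\in\{0,1\}$ are immediate ($n=1$: a $\mathrm{Bernoulli}(p)$ variable has median $0$ for $p\le 1/2$ and $1$ for $p\ge 1/2$, each inside the relevant window).

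For the inductive step I would peel off one variable, writing $Y=Y'+Y_n$ with $Y'=\sum_{i<n}Y_i$, $\mathbb{E}[Y']=\mu'=\mu-p$, $p=\Pr(Y_n=1)$, and taking a median $m'\in\{\lfloor\mu'\rfloor,\lceil\mu'\rceil\}$ of $Y'$ from the hypothesis (here $\Pr(Y'=m')>0$, since $m'$ lies in the support of the Poisson--Binomial $Y'$). Conditioning on $Y_n$ yields, for every integer $m$, the identities $\Pr(Y\ge m)=\Pr(Y'\ge m)+p\,\Pr(Y'=m-1)$ and $\Pr(Y\le m)=\Pr(Y'\le m)-p\,\Pr(Y'=m)$. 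The first shows $\Pr(Y\ge m')\ge 1/2$ for all $p$; the second shows $\Pr(Y\le m')\ge 1/2$ exactly when $p\le A:=\bigl(\Pr(Y'\le m')-1/2\bigr)/\Pr(Y'=m')$. Dually, $\Pr(Y\le m'+1)\ge 1/2$ for all $p$, whereas $\Pr(Y\ge m'+1)\ge 1/2$ exactly when $p\ge B:=1-\bigl(\Pr(Y'\ge m')-1/2\bigr)/\Pr(Y'=m')$. The elementary identity $\Pr(Y'\le m')+\Pr(Y'\ge m')=1+\Pr(Y'=m')$ gives $A=B$, so for every $p\in[0,1]$ at least one of $m'$ and $m'+1$ satisfies both inequalities and is therefore a median of $Y$.

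The hard part, I expect, will be guaranteeing that this median lands in $\{\lfloor\mu\rfloor,\lceil\mu\rceil\}$ and does not overshoot by one when the increment $p$ pushes $\mu'$ past an integer. To close that gap I would additionally use the structural fact that a Poisson--Binomial mass function has no interior zeros on its support — its generating function $\prod_i(1-p_i+p_iz)$ is a product of polynomials with non-negative coefficients — so $Y$ has at most two, necessarily adjacent, medians; coupling this with a monotonicity estimate controlling $\Pr(Y'\le m')-1/2$ (and the symmetric bound for $\Pr(Y'\ge m')-1/2$) in terms of $\mu'-m'$ confines the median set to $[\mu-1,\mu+1]$ and eliminates the overshoot. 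This is precisely the part of the classical Jogdeo--Samuels argument — the discrete analogue of Ramanujan's identity $e^{-n}\sum_{k\le n}n^k/k!\to 1/2$ for the Poisson distribution — that requires genuine work; for the uses made of the lemma in this paper, however, it is enough to cite it as a black box.
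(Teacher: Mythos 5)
The paper does not prove this lemma at all: it is imported verbatim as Theorem~3.2 of Jogdeo and Samuels and used as a black box, so there is no in-paper argument to compare yours against. Judged as a self-contained proof, your sketch correctly reproduces the easy half of the classical argument --- the conditioning identities $\Pr(Y\ge m)=\Pr(Y'\ge m)+p\,\Pr(Y'=m-1)$ and its companion, the computation showing $A=B$ via $\Pr(Y'\le m')+\Pr(Y'\ge m')=1+\Pr(Y'=m')$, and hence that at least one of $m'$ and $m'+1$ is a median of $Y$ --- but it stops exactly where the real difficulty begins, and you say so yourself. The inductive step as written only delivers a median in $\{m',m'+1\}$, and when $m'=\lceil\mu'\rceil$ and $p$ is small this set can overshoot $\lceil\mu\rceil$: take $\mu'=2.5$ with $m'=3$ and $p=0.1$, so that $\mu=2.6$ and $\lceil\mu\rceil=3$, yet your argument permits the selected median to be $4$. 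Without the quantitative strengthening of the induction hypothesis --- a lower bound on $\Pr(Y'\le m')-\tfrac12$ and on $\Pr(Y'\ge m')-\tfrac12$ in terms of the fractional part of $\mu'$, which is what actually rules out the overshoot in the Jogdeo--Samuels proof --- the induction does not close, and the lemma is not proved. The remark about the generating function $\prod_i(1-p_i+p_i z)$ having non-negative coefficients gives unimodality-type information but does not by itself supply this estimate.

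So the proposal is a faithful outline of the known proof strategy with a genuine, self-acknowledged gap at its only nontrivial step. Since the paper itself cites the result rather than proving it, deferring to the reference is the appropriate resolution here --- but then the text should be presented as a citation with commentary, not offered as a proof.
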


\begin{lemma}[Stirling's approximation \cite{bib:LCRC}]
\label{stirling} For all $n\in \mathbb{N}$,
	\begin{displaymath}
		n!=\Theta\left(\frac{n^{n+1/2}}{e^n}\right).
	\end{displaymath}
\end{lemma}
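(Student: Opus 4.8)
The plan is to pass to logarithms and study the auxiliary sequence $a_n := n!\,e^n/n^{n+1/2}$; the stated claim $n! = \Theta(n^{n+1/2}/e^n)$ is equivalent to showing that $a_n$ converges to a finite positive constant (in fact to $\sqrt{2\pi}$, but only $a_n = \Theta(1)$ is needed here, so the Wallis-product evaluation of the constant can be skipped).

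First I would compute the ratio of consecutive terms by direct cancellation:
\[
  \frac{a_n}{a_{n+1}} \;=\; \frac{1}{e}\left(1+\frac{1}{n}\right)^{n+1/2},
\]
so that $\ln(a_n/a_{n+1}) = \left(n+\tfrac12\right)\ln\!\left(1+\tfrac1n\right) - 1$. By Lemma~\ref{exponential-bounds} with $t=1$ we have $e \le \left(1+\tfrac1n\right)^{n+1/2}$, hence this quantity is nonnegative and $(\ln a_n)$ is non-increasing.

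Next I would bound it from above. Writing $\tfrac{n+1}{n} = \tfrac{1+t}{1-t}$ with $t = 1/(2n+1)$ gives $\left(n+\tfrac12\right)\ln\!\left(1+\tfrac1n\right) = (2n+1)\operatorname{arctanh}(t) = 1 + \tfrac{t^2}{3} + \tfrac{t^4}{5} + \cdots$, so $0 \le \ln(a_n/a_{n+1}) \le \tfrac{t^2}{3(1-t^2)} = O(1/n^2)$. Consequently the telescoping identity $\ln a_1 - \ln a_N = \sum_{n=1}^{N-1}\ln(a_n/a_{n+1})$ exhibits $\ln a_N$ as $\ln a_1$ minus a partial sum of a convergent nonnegative series; thus $(\ln a_n)$ is monotone and bounded, hence convergent, say $\ln a_n \to L$. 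Then $a_n \to c := e^{L} \in (0,\infty)$, and since $n! = a_n\,n^{n+1/2}e^{-n}$ with $a_n = \Theta(1)$, the lemma follows.

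The only step that is not entirely routine is recovering the exponent $n+\tfrac12$ rather than just $n$: crude sandwiching of $\ln(n!) = \sum_{k\le n}\ln k$ between $\int_1^n \ln x\,dx$ and $\int_1^{n+1}\ln x\,dx$ yields only $n! = n^n e^{-n}\cdot n^{\Theta(1)}$, so one must control the second-order term — here via the $\operatorname{arctanh}$ expansion above (equivalently, recognising $\ln(n!) - \tfrac12\ln n$ as a trapezoidal approximation to $\int_1^n\ln x\,dx$ and invoking concavity of $\ln$, which is the Euler--Maclaurin version of the same computation).
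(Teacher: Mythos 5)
Your proposal is correct. Note, however, that the paper does not prove this lemma at all: it is stated as a cited textbook fact (Stirling's approximation from \cite{bib:LCRC}), so there is no in-paper argument to compare against. Your self-contained derivation is the standard elementary one and is complete for the $\Theta$-claim: the ratio computation $a_n/a_{n+1} = e^{-1}(1+1/n)^{n+1/2}$ is right; the lower bound $e \le (1+1/n)^{n+1/2}$ does follow from Lemma~\ref{exponential-bounds} with $t=1$ (a nice reuse of the paper's own toolkit), giving monotonicity; and the $\operatorname{arctanh}$ expansion yields $0 \le \ln(a_n/a_{n+1}) \le t^2/(3(1-t^2)) = \bigO{1/n^2}$ with $t = 1/(2n+1)$, so the telescoped series converges and $a_n \to e^L \in (0,\infty)$. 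Combined with $a_n \le a_1$ from monotonicity and $a_n \ge e^L$ from convergence of a non-increasing sequence, this gives $a_n = \Theta(1)$ as required, and you are right that the Wallis-product evaluation of the constant $\sqrt{2\pi}$ is unnecessary here. The only pedantic caveat concerns the statement rather than your proof: as written the lemma quantifies over all $n \in \mathbb{N}$, and at $n=0$ the right-hand side degenerates; your argument (correctly) covers $n \ge 1$, which is all the paper ever uses.
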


In the following we write $X \preceq Y$ to denote that random variable $Y$ 
stochastically dominates random variable 
$X$, \ie $\prob{X \geq k} \leq \prob{Y \geq k}$ for all $k \in 
\mathbb{R}$. The lemma below can be easily proved with coupling argument \cite{bib:Roch2015}.

\begin{lemma}\label{lem:stocdom}
    Let $X_1,X_2, Y_1$ and $Y_2$ be 
    independent random variables such that 
    $X_1 \succeq Y_1$ and $X_2 \succeq Y_2$. Then
    $X_1 +X_2 \succeq Y_1+Y_2$.    
\end{lemma}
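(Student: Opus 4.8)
The plan is to invoke the coupling characterisation of stochastic dominance: for real-valued random variables $U,V$, one has $U \succeq V$ if and only if there is a coupling, i.e.\ random variables $\hat U \sim U$ and $\hat V \sim V$ on a common probability space with $\hat U \geq \hat V$ almost surely. Only the direction from dominance to coupling is needed, and it is elementary: drive both variables by a single $W \sim \unif[0,1]$ via their generalised inverse distribution functions, $\hat U := F_U^{-1}(W)$ and $\hat V := F_V^{-1}(W)$; the hypothesis $U \succeq V$ is exactly $F_U(t) \leq F_V(t)$ for all $t$, and monotonicity of the generalised inverse then yields $\hat U \geq \hat V$ pointwise.

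First I would build, on one probability space, a coupling $(\hat X_1, \hat Y_1)$ of $X_1$ and $Y_1$ with $\hat X_1 \geq \hat Y_1$ almost surely; and, independently, on a separate probability space, a coupling $(\hat X_2,\hat Y_2)$ of $X_2$ and $Y_2$ with $\hat X_2 \geq \hat Y_2$ almost surely. Working on the product of the two spaces, all four variables are defined simultaneously, and $\hat X_1$ is independent of $\hat X_2$, and $\hat Y_1$ of $\hat Y_2$, since each such pair consists of functions of coordinates from different factors.

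Next I would note that the law of a sum of two independent random variables is the convolution of their marginal laws, hence depends only on those marginals; therefore $\hat X_1 + \hat X_2$ is distributed as $X_1 + X_2$ and $\hat Y_1 + \hat Y_2$ as $Y_1 + Y_2$. Adding the two almost-sure inequalities gives $\hat X_1 + \hat X_2 \geq \hat Y_1 + \hat Y_2$ almost surely, so for every $k \in \mathbb{R}$,
\begin{align*}
  \prob{Y_1+Y_2 \geq k} &= \prob{\hat Y_1+\hat Y_2 \geq k} \\
  &\leq \prob{\hat X_1 + \hat X_2 \geq k} = \prob{X_1+X_2 \geq k},
\end{align*}
which is precisely $X_1 + X_2 \succeq Y_1 + Y_2$.

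There is essentially no obstacle in this argument; the one point worth attention is keeping the two couplings mutually independent, which is why they are first constructed on separate spaces and then combined on the product space --- this is exactly what guarantees that the coupled sums reproduce the laws of $X_1+X_2$ and $Y_1+Y_2$ rather than some other distribution that merely shares the marginal summands.
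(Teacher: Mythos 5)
Your proposal is correct and follows essentially the same route as the paper: the paper's proof (citing Corollary~4.27 of Roch) also takes independent monotone couplings $(\hat X_1,\hat Y_1)$ and $(\hat X_2,\hat Y_2)$ on a common space and adds the pointwise inequalities. You merely spell out the quantile-transform construction of each monotone coupling and the convolution argument for why the coupled sums have the right laws, details the paper leaves implicit.
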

\begin{proof} The proof is taken from Corollary~4.27 in \cite{bib:Roch2015}. 
Let $(\hat{X}_1,\hat{Y}_1)$ and $(\hat{X}_2,\hat{Y}_2)$ 
be independent, monotone couplings of $(X_1,Y_1)$
and $(X_2,Y_2)$ on the same probability space. It then holds that
$X_1+X_2 \sim \hat{X}_1+\hat{X}_2 \succeq \hat{Y}_1 + \hat{Y}_2 \sim Y_1+Y_2$.
\end{proof}

\begin{lemma}[Lemma~3, \cite{bib:Wu2017}]\label{ce-lemma}
Let $Y_1,\ldots,Y_n$ be $n$ 
    independent Bernoulli random variables with success 
    probabilities $p_1, \ldots, p_n$. Let 
    $Y:= \sum_{i=1}^n Y_i$ be the sum of these variables.
	If $p_i\geq p_{\min}$ for all $i\in [n]$, where 
    $p_{\min}>0$ is a constant, 
	and any constant $d^*\geq 1/p_{\min}$ then
	\begin{displaymath}
	\Pr\left(Y \geq \min\bigg\{\mathbb{E}\left[Y\right]+d^*\sqrt{n-
		\lfloor \mathbb{E}[Y]\rfloor},n\bigg\}\right)\geq \kappa,
	\end{displaymath}
	where $\kappa$ is a positive constant independent of $n$. 
\end{lemma}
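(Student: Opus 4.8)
The plan is to pass to the number of zero-bits $W := n-Y = \sum_{i=1}^{n}(1-Y_i)$, for which $\expect{W}=n-\expect{Y}$ and $\var{W}=\var{Y}$, and to abbreviate $k := n-\lfloor\expect{Y}\rfloor$ (an integer in $\{0,\dots,n\}$), so that the quantity inside the minimum is exactly $\expect{Y}+d^*\sqrt{k}$. Note the elementary bounds $k-1\le n-\expect{Y}\le k$, which follow from $\lfloor\expect{Y}\rfloor\le\expect{Y}\le\lfloor\expect{Y}\rfloor+1$. I would fix a large integer constant $k_0$, depending only on $p_{\min}$ and $d^*$, and split into the cases $k<k_0$ and $k\ge k_0$. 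If $k<k_0$, then $\sum_{i=1}^{n}(1-p_i)=n-\expect{Y}\le k<k_0$, i.e.\ only $\mathcal{O}(1)$ probability mass sits on the zero-outcomes; since $\min\{\expect{Y}+d^*\sqrt{k},n\}\le n$ and $\{Y=n\}\subseteq\{Y\ge\min\{\expect{Y}+d^*\sqrt{k},n\}\}$, it suffices to bound $\prob{Y=n}=\prod_{i=1}^{n}p_i$ below by a positive constant. I would do this by splitting indices: for those with $p_i\ge 1/2$ the inequality $1-x\ge e^{-2x}$ on $[0,1/2]$ gives $\prod_{p_i\ge 1/2}p_i\ge e^{-2\sum_i(1-p_i)}>e^{-2k_0}$; for those with $p_i<1/2$ there are fewer than $2\sum_i(1-p_i)<2k_0$ of them, each at least $p_{\min}$, contributing at least $p_{\min}^{2k_0}$. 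Hence $\prob{Y=n}>\kappa_2:=p_{\min}^{2k_0}e^{-2k_0}>0$.

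For $k\ge k_0$ I would argue by anti-concentration. First, $p_i\ge p_{\min}$ yields $\var{Y}=\sum_{i=1}^{n}p_i(1-p_i)\ge p_{\min}\sum_{i=1}^{n}(1-p_i)\ge p_{\min}(k-1)\ge (p_{\min}/2)k$ once $k_0\ge 2$. Writing $\sigma:=\sqrt{\var{Y}}>0$, this shows $d^*\sqrt{k}\le C'\sigma$ with $C':=d^*\sqrt{2/p_{\min}}$ a constant; since enlarging the deviation can only shrink the target event, $\prob{Y\ge\expect{Y}+d^*\sqrt{k}}\ge\prob{Y\ge\expect{Y}+C'\sigma}$, so it suffices to bound the latter below by a positive constant. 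This is a lower bound on the probability that a sum of independent $[0,1]$-valued variables exceeds its mean by a bounded multiple of its standard deviation, which I would extract from the Berry--Esseen theorem: with $\rho:=\sum_{i=1}^{n}\expect{|Y_i-p_i|^3}\le\sum_{i=1}^{n}p_i(1-p_i)=\sigma^2$ (because $|Y_i-p_i|\le 1$ forces $\expect{|Y_i-p_i|^3}\le\expect{(Y_i-p_i)^2}$), the distribution function of $(Y-\expect{Y})/\sigma$ is within $C_0\rho/\sigma^3\le C_0/\sigma$ of the standard normal one, for a universal constant $C_0$. Therefore $\prob{Y\ge\expect{Y}+C'\sigma}\ge\Phi(-C')-C_0/\sigma$, and since $\sigma\ge\sqrt{(p_{\min}/2)k}\ge\sqrt{(p_{\min}/2)k_0}$ can be made as large as desired by the choice of $k_0$, taking $k_0$ large enough (again depending only on $p_{\min}$, $d^*$ and $C_0$) makes $C_0/\sigma\le\Phi(-C')/2$, giving $\prob{Y\ge\expect{Y}+d^*\sqrt{k}}\ge\kappa_1:=\Phi(-C')/2>0$. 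The lemma then holds with $\kappa:=\min\{\kappa_1,\kappa_2\}$.

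The main obstacle is making the two regimes consistent: $k_0$ must be small enough that the ``bounded failure mass'' product bound $\kappa_2$ stays a fixed positive constant, yet large enough that the Berry--Esseen error $C_0/\sigma$ is dominated by $\Phi(-C')$; one has to verify that every constant that enters ($C'$, $\Phi(-C')$, and the universal Berry--Esseen constant $C_0$) depends only on $p_{\min}$ and $d^*$ and not on $n$. The remaining work is routine bookkeeping around the floor and the inner $\min\{\cdot,n\}$ --- chiefly the observations $k-1\le n-\expect{Y}\le k$ and $\{Y=n\}\subseteq\{Y\ge\min\{\cdot,n\}\}$ --- which I have indicated above; note also that the resulting constant $\kappa$ may be extremely small (of order $\Phi(-d^*\sqrt{2/p_{\min}})$), but this is harmless since the lemma only asserts positivity.
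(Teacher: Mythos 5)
Your argument is correct, but note that the paper does not contain a proof of this lemma at all: it is imported verbatim as Lemma~3 of Wu et al.~\cite{bib:Wu2017}, so there is no in-paper proof to compare against. What you give is a legitimate, self-contained alternative. The case split on $k=n-\lfloor\mathbb{E}[Y]\rfloor$ cleanly separates the near-degenerate regime, where $\Pr(Y=n)=\prod_i p_i$ is bounded below by a constant because the total failure mass $\sum_i(1-p_i)\le k<k_0$ is bounded, from the regime $k\ge k_0$, where $\mathrm{Var}[Y]\ge (p_{\min}/2)k$ and the non-identically-distributed Berry--Esseen inequality yields $\Pr\left(Y\ge\mathbb{E}[Y]+C'\sigma\right)\ge\Phi(-C')-C_0/\sigma\ge\Phi(-C')/2$; all constants you introduce depend only on $p_{\min}$, $d^*$ and the universal Berry--Esseen constant, as required. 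The closest analogue inside the paper, Lemma~\ref{lower-bound-on-Y-k}, instead follows Wu et al.'s style: it majorises the success-probability vector by an extremal one and finishes with explicit binomial estimates. That route avoids invoking a normal approximation but only delivers deviations of order one above the mean, whereas your approach handles the $d^*\sqrt{n-\lfloor\mathbb{E}[Y]\rfloor}$ deviation directly. Two small remarks: the ``tension'' you flag between the two regimes is illusory, since $k_0$ only needs to be chosen large enough and any fixed choice keeps $\kappa_2=p_{\min}^{2k_0}e^{-2k_0}$ a positive constant; and your argument nowhere uses the hypothesis $d^*\ge 1/p_{\min}$, so you in fact establish the statement for every positive constant $d^*$.
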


\begin{lemma}\label{lem:no-of-levels}
	For any $n \in \mathbb{N}$, any constant $d \in (0,1]$ independent to $n$ and 
	the sequence $(f_i)_{i \in \mathbb{N}}$ defined according to \eqref{eqn:recur-levels},
	it holds that 
	\begin{description}
		\item[(i)]  $f_i \leq n$ for all $i \in \mathbb{N}$, and $\exists j \in \mathbb{N} \colon f_j = n$,
		\item[(ii)] if $\ell = \min \{i \in \mathbb{N} \mid f_i = n\}$ then 
        $\ell = \Theta(\sqrt{n})$.
	\end{description}
\end{lemma}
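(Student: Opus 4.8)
The plan is to handle parts (i) and (ii) separately, in both cases working with the remaining gap $g_i := n - f_i$, which is a nonnegative integer since $f_0 = 0$ and every increment $\lceil d\sqrt{n-f_i}\rceil$ is an integer.

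For part (i), I would show by induction that $0 \le f_i \le n$ for all $i$ and that $f_{i+1} > f_i$ whenever $f_i < n$. The inductive step rests on two elementary observations. First, if $f_i < n$ then $g_i \ge 1$, so $d\sqrt{g_i} > 0$ and hence $\lceil d\sqrt{g_i}\rceil \ge 1$, giving strict growth. Second, since $d \le 1$ and $g_i$ is a positive integer, $d\sqrt{g_i} \le \sqrt{g_i} \le g_i$, so $\lceil d\sqrt{g_i}\rceil \le g_i$ and therefore $f_{i+1} = f_i + \lceil d\sqrt{g_i}\rceil \le n$. Moreover, once $f_i = n$ the increment becomes $\lceil 0 \rceil = 0$, so the sequence is absorbed at $n$. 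A sequence that strictly increases by integer amounts while below $n$ and never exceeds $n$ must reach $n$ within at most $n$ steps, which establishes the existence of $j$ with $f_j = n$ and completes (i).

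For part (ii), set $\ell := \min\{i \mid f_i = n\}$. For the upper bound, I would track $h_i := \sqrt{g_i}$. Since $g_{i+1} \le g_i$ (hence $\sqrt{g_{i+1}} \le \sqrt{g_i}$) and $g_i - g_{i+1} = \lceil d\sqrt{g_i}\rceil \ge d\sqrt{g_i}$, multiplying by the conjugate surd gives, for every $i$ with $f_i < n$,
\begin{displaymath}
h_i - h_{i+1} = \frac{g_i - g_{i+1}}{\sqrt{g_i} + \sqrt{g_{i+1}}} \ge \frac{d\sqrt{g_i}}{2\sqrt{g_i}} = \frac{d}{2}.
\end{displaymath}
Summing this over $k = 0, \ldots, \ell-1$ (where $f_k < n$ by minimality of $\ell$) yields $h_\ell \le h_0 - \ell d/2 = \sqrt{n} - \ell d/2$; since $h_\ell = \sqrt{n - f_\ell} = 0$, this forces $\ell \le 2\sqrt{n}/d = \bigO{\sqrt{n}}$ as $d$ is a positive constant. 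For the matching lower bound, each increment obeys $\lceil d\sqrt{g_i}\rceil < d\sqrt{g_i} + 1 \le d\sqrt{n} + 1$, so $f_i \le i\,(d\sqrt{n}+1)$ by induction; taking $i = \ell$ and using $f_\ell = n$ gives $\ell \ge n/(d\sqrt{n}+1) = \bigOmega{\sqrt{n}}$, again since $d$ is constant. Combining the two bounds gives $\ell = \bigTheta{\sqrt{n}}$.

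The routine ingredients --- the estimate $\sqrt{m}\le m$ for integers $m\ge 1$ and the conjugate-surd bound --- are not delicate, and I do not expect a real obstacle here. The only point needing a little care is the bookkeeping around the qualifier ``whenever $f_i < n$'': the per-step decrease $h_i - h_{i+1} \ge d/2$ is guaranteed only at indices $i < \ell$, so the telescoped bound must be read off at index $\ell$ using exactly the steps $0, \ldots, \ell-1$, each of which lies below $n$ by minimality of $\ell$. Conceptually the whole argument is the discrete counterpart of solving $g' = -d\sqrt{g}$, whose solution reaches $0$ at time $2\sqrt{n}/d$, so the $\bigTheta{\sqrt{n}}$ bound is exactly what one should expect.
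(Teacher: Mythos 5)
Your proof is correct and, for part (ii), takes essentially the same route as the paper: both telescope the quantity $\sqrt{n-f_i}$ using a per-step decrease of at least $d/2$, and your lower bound on $\ell$ (bounding each increment of $f$ by $d\sqrt{n}+1$) is just a reshuffling of the paper's per-step upper bound $r_i-r_{i+1}<d+1$. For part (i) you argue somewhat differently and more cleanly: a direct induction via $\lceil d\sqrt{g_i}\rceil\leq g_i$ (valid since $g_i$ is a positive integer and $d\leq 1$) replaces the paper's proof by contradiction, which rules out overshooting through an iterated descent on $f_k$; your version avoids that bookkeeping entirely and is preferable.
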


\begin{proof}
	We first prove (i), it is easy to see that $f_i$ 
    are all integer, \ie $f_i \in \mathbb{N}$ for 
    all $i \in \mathbb{N}$. Due to the ceiling 
    function if $f_i<n$, then $f_{i+1} \geq f_i + 1$, 
    in other words starting with $f_0 = 0$, 
    the sequence will increase steadily until 
    it hits $n$ exactly or overshoots it. 
    Assuming the later case of overshooting, that is, 
	$\exists k \geq 0 \colon f_k \leq n-1$ and 
    $f_{k+1} \geq n+1$ (and after that $f_{k+2}, \dots$ are ill-defined).
	By the definition of the sequence, the 
    property $1 + x > \lceil x \rceil$ of the 
    ceiling function and $d\leq 1$, we have 
	$$
	1 + \sqrt{n - f_k} > \lceil \sqrt{n - f_k}\rceil \geq \lceil d \sqrt{n - f_k}\rceil = f_{k+1} - f_k \geq 2,
    $$ 
	this implies $f_k < n - 1$ or $f_k \leq n-2$. 
    Repeating the above argument again gives that 
    $1 + \sqrt{n - f_k}  > 3$, and $f_k < n - 4$, 
    after a finite number of repetitions we will 
    conclude that $f_k<0$ which is a contradiction. 
    Therefore, the sequence must hit $n$ exactly at 
    one point in time then it will remain at that value.
	
	
	To bound $\ell$ in (ii), we pair $(f_i)_{i \in \mathbb{N}}$ 
    with $(r_i := \sqrt{n - f_i})_{i \in \mathbb{N}}$; 
    thus, this sequence starts at $r_0=\sqrt{n}$, 
    then decreases and eventually hits $0$, 
    that is, $\sqrt{n} = r_0 > r_1 > r_2 > \dots >r_{\ell-1} > r_{\ell} = 0$. 
    From \eqref{eqn:recur-levels}, we have
	\begin{align*}
	(r_{i} - r_{i+1})(r_{i} + r_{i+1})
	&= r_{i}^2 - r_{i+1}^2 
	= f_{i+1} - f_{i} 
	= \lceil d r_i \rceil,
	\end{align*}
	note that $1 + d r_i > \lceil d r_i \rceil \geq d r_i$, then for $i \leq \ell - 1$, we can divide both sides by $r_i + r_{i+1} > 0$ to get
	\begin{align*}
	\frac{1 + d r_i}{ r_i + r_{i+1}}
	> r_{i} - r_{i+1} 
	\geq \frac{d r_i}{ r_i + r_{i+1}}.
	\end{align*}
	Always restricted to $i \leq \ell - 1$, we have that $1>r_{i+1}/r_{i}\geq 0$, and therefore $d r_i / (r_i + r_{i+1}) = d/(1 + r_{i+1}/r_{i}) > d/2$. In addition, $f_i \leq n-1$ then $r_i = \sqrt{n - f_i} \geq 1$ or $1/r_i \leq 1$, so $(1 + d r_i) / (r_i + r_{i+1}) = (1/r_i + d)/(1 + r_{i+1}/r_{i}) \leq d + 1$. Therefore, for all $i \leq \ell - 1$
	\begin{align*}
	d+1
	> r_{i} - r_{i+1} 
	> \frac{d}{2}.
	\end{align*}
	Summing all these terms gives that 
	\begin{align*}
	\ell (d+1)
	> \sum_{i=0}^{\ell-1} (r_i - r_{i+1}) 
	= r_0 - r_\ell 
	= \sqrt{n} 
	> \frac{\ell d}{2},
	\end{align*}
	and this implies $2\sqrt{n}/d > \ell > \sqrt{n}/(d+1)$, or $\ell = \Theta(\sqrt{n})$.
\end{proof}

\begin{lemma}\label{lower-bound-on-Y-k}
	Let $Y_1, Y_2, \ldots,Y_k$ be $k$ ($k\geq 1$) 
	independent Bernoulli random variables with success probabilities 
	$p_1, p_2, \ldots, p_k$, where $p_i\geq p_{\min}=1/4$ for each $i\in [k]$. 
	Let $Y_{1,k}:=\sum_{i=1}^k Y_i$. Then we always have
	\begin{displaymath}
	\Pr\left(Y_{1,k}\geq \mathbb{E}\left[Y_{1,k}\right]\right) 
	\geq \Omega(1).
	\end{displaymath}
\end{lemma}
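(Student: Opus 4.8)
The plan is to split on whether the mean $\mu := \expect{Y_{1,k}} = \sum_{i=1}^{k} p_i$ is an integer. If it is, Lemma~\ref{int-expectation} applies verbatim and yields $\prob{Y_{1,k} \geq \mu} \geq 1/2$. So assume henceforth that $\mu$ is not an integer; since $Y_{1,k}$ is integer-valued this means $\prob{Y_{1,k} \geq \mu} = \prob{Y_{1,k} \geq \lceil \mu \rceil}$, and moreover $\mu < k$ (otherwise every $p_i$ would equal $1$ and $\mu$ would be an integer), so that $\lceil \mu \rceil \leq k$.

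In the non-integer case I would distinguish two regimes according to the expected number of sampled zeros $k - \mu = \sum_{i=1}^{k}(1 - p_i)$, measured against a suitable absolute constant $C_0$; one checks that $C_0 = 15$ suffices, using $\eta \approx 0.4688$ from Theorem~\ref{thm:anticoncentration}. If $k - \mu \leq C_0$, I would bound crudely by the probability of sampling the all-ones string: $\prob{Y_{1,k} \geq \lceil \mu \rceil} \geq \prob{Y_{1,k} = k} = \prod_{i=1}^{k} p_i$. As $p_i \geq 1/4$, i.e. $1 - p_i \in [0, 3/4]$, the elementary inequality $\ln(1-q) \geq -2q$ valid on $[0,3/4]$ gives $\prod_{i=1}^{k} p_i \geq e^{-2\sum_{i}(1-p_i)} = e^{-2(k-\mu)} \geq e^{-2C_0}$, a positive constant.

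If instead $k - \mu > C_0$, I would use anti-concentration. From $p_i \geq 1/4$ we get $p_i(1-p_i) \geq \frac{1}{4}(1-p_i)$, hence $\var{Y_{1,k}} = \sum_{i} p_i(1-p_i) \geq \frac{1}{4}(k-\mu) > C_0/4$, so the standard deviation $\sigma_k$ of $Y_{1,k}$ exceeds $\sqrt{C_0}/2$. Theorem~\ref{thm:anticoncentration} then bounds the point mass at the floor of the mean, $\prob{Y_{1,k} = \lfloor \mu \rfloor} \leq \eta/\sigma_k < 2\eta/\sqrt{C_0} \leq 1/4$. On the other hand, decreasing the success probabilities by a total of $\mu - \lfloor \mu \rfloor < 1$ (possible since $\sum_i p_i = \mu \geq \mu - \lfloor\mu\rfloor$) produces a sum $Y''$ of independent Bernoulli variables with integer mean $\lfloor \mu \rfloor$ and with $Y'' \preceq Y_{1,k}$, by coordinatewise coupling together with Lemma~\ref{lem:stocdom} applied inductively over the $k$ summands; thus Lemma~\ref{int-expectation} gives $\prob{Y_{1,k} \geq \lfloor \mu \rfloor} \geq \prob{Y'' \geq \lfloor \mu \rfloor} \geq 1/2$. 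Combining, $\prob{Y_{1,k} \geq \lceil \mu \rceil} = \prob{Y_{1,k} \geq \lfloor \mu \rfloor} - \prob{Y_{1,k} = \lfloor \mu \rfloor} \geq 1/2 - 1/4 = 1/4$.

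The main obstacle is handling the non-integer case, and the key is spotting the right dichotomy: whether the expected number of zeros is bounded. In the bounded regime the all-ones probability is already $\Omega(1)$; in the unbounded regime the variance is forced to be large --- this is precisely where the hypothesis $p_i \geq 1/4$ enters, via $p_i(1-p_i) \geq \frac{1}{4}(1-p_i)$ --- so the Poisson-binomial anti-concentration estimate of Theorem~\ref{thm:anticoncentration} removes the point mass at $\lfloor \mu \rfloor$, leaving only the need to know $\prob{Y_{1,k} \geq \lfloor \mu \rfloor} \geq 1/2$, which follows from Lemma~\ref{int-expectation} after rounding the mean down by stochastic domination. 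Verifying that any $C_0 > 64\eta^2$ (e.g. $C_0 = 15$) makes the numerical inequalities hold is routine and left to the write-up.
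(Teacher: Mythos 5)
Your proof is correct, but it takes a genuinely different route from the one in the paper. The paper handles $k=1,2$ separately and, for larger $k$, invokes a majorisation argument (via Gleser and Marshall--Olkin) to replace the parameter vector $(p_1,\dots,p_k)$ by the extremal vector $(1,\dots,1,q,p_{\min},\dots,p_{\min})$; this reduces the problem to a tail bound for a $\mathrm{Bin}(k-m-1,1/4)$ variable, which is then estimated blockwise using Lemma~\ref{int-expectation} to obtain the explicit constant $\tfrac{1}{2}\cdot 4^{-7}$. You instead split on whether the expected number of zeros $k-\expect{Y_{1,k}}$ exceeds an absolute constant: in the bounded regime the all-ones probability $\prod_i p_i \geq e^{-2(k-\expect{Y_{1,k}})}$ is already $\Omega(1)$, and in the unbounded regime the hypothesis $p_i\geq 1/4$ forces $\var{Y_{1,k}}\geq \tfrac{1}{4}(k-\expect{Y_{1,k}})$ to be large, so Theorem~\ref{thm:anticoncentration} kills the point mass at $\lfloor\expect{Y_{1,k}}\rfloor$ and Lemma~\ref{int-expectation} (applied after rounding the mean down via Lemma~\ref{lem:stocdom}) supplies the median bound --- exactly the ``$1/2-\eta/\sigma$'' pattern the paper itself uses in Case~1 of Theorem~\ref{onemax-small-mu}. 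Your argument has the advantage of using only tools already stated in the paper and no external majorisation result; the trade-off is a worse explicit constant ($e^{-2C_0}$ with $C_0\approx 15$, versus the paper's $\tfrac{1}{2}\cdot 4^{-7}$), which is immaterial for an $\Omega(1)$ claim. All the individual steps check out: $\lceil\expect{Y_{1,k}}\rceil\leq k$ in the non-integer case, $\ln(1-q)\geq -2q$ on $[0,3/4]$, $2\eta/\sqrt{C_0}\leq 1/4$ for $C_0\geq 64\eta^2$, and the coordinatewise reduction of the $p_i$ by a total of $\expect{Y_{1,k}}-\lfloor\expect{Y_{1,k}}\rfloor<\expect{Y_{1,k}}$ is always feasible.
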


\begin{proof}
	We start by considering small values of $k$.
	If $k=1$, then 
	\begin{displaymath}
	\Pr\left(Y_{1,1}\geq \mathbb{E}\left[Y_{1,1}\right]\right)
	=\Pr(Y_{1}=1)=p_1 \geq 1/4.
	\end{displaymath}
	If $k=2$, then 
	$$
	\Pr\left(Y_{1,2}\geq \mathbb{E}[Y_{1,2}]\right) 
	\geq \Pr\left(Y_1=1\right)\cdot \Pr\left(Y_2=1\right)\geq p_1p_2 \geq (1/4)^2.
	$$
	For larger values of $k$, following \cite{bib:Wu2017}
	we introduce another 
	random variable $Z= \left(Z_1, \ldots, Z_k\right)$ with 
	success probabilities $z_1,\ldots,z_k$, where 
	$z_i\geq p_{\min}$ and 
	$\mathbb{E}[Z_{1,k}] = \sum_{i=1}^kz_i=\sum_{i=1}^k p_i=\mathbb{E}\left[Y_{1,k}\right]$.
	However, we shift the total weight $\mathbb{E}\left[Y_{1,k}\right]$ as far as possible to the $Z_i$
	with smaller indices as follows. We define
	$m=\floor{\frac{\mathbb{E}[Y_{1,k}]-kp_{\min}}{1-p_{\min}}}$,
	and let $Z_1,\ldots,Z_m$ all get success probability 1, and $Z_{m+2},\ldots,Z_k$ 
	get $z_i=p_{\min}$, more precisely
	\begin{displaymath}
	z_i = \begin{cases}
	~1 ,& \text{for  }i=1,\ldots,m,\\
	~q,& \text{for }i=m+1,\\
	~p_{\min} , & \text{for }i=m+2,\ldots,k,
	\end{cases}
	\end{displaymath}
	where $q \in [p_{\min},1]$. It is quite clear that 
	$(z_1,\ldots,z_k)$ majorises $\left(p_t(1),\ldots,p_t(k)\right)$. From 
	\cite{marshall1979inequalities,gleser1975}, we now have
	\begin{align*}
	\Pr\left(Y_{1,k}\geq \mathbb{E}\left[Y_{1,k}\right]\right)
	&\geq \Pr\left(Y_{1,k}\geq \mathbb{E}\left[Y_{1,k}\right]+1\right)\\
	&\geq \Pr(Z_{1,k}\geq \mathbb{E}\left[Z_{1,k}\right]+1).
	\end{align*}
	Furthermore, with probability $1$ we can get $m$ ones and 
	$$
	\mathbb{E}[Z_{m+2,k}]=\mathbb{E}\left[Z_{1,k}\right]-m-q \iff \mathbb{E}[Z_{m+2,k}]+q = \mathbb{E}\left[Z_{1,k}\right]-m,
	$$ then
	\begin{align*}
	\Pr(Z_{1,k}&\geq \mathbb{E}\left[Z_{1,k}\right]+1) \\
	&\geq \Pr(Z_{m+1,k} \geq \mathbb{E}\left[Z_{1,k}\right]+1-m)\\
	&\geq \Pr(Z_{m+1}=1)\cdot\Pr(Z_{m+2,k}\geq \mathbb{E}\left[Z_{1,k}\right]-m)\\
	&= q\cdot\Pr(Z_{m+2,k}\geq \mathbb{E}[Z_{m+2,k}]+q)\\
	&\geq p_{\min}\cdot\Pr(Z_{m+2,k}\geq \mathbb{E}[Z_{m+2,k}]+1).
	\end{align*}
	The last inequality follows the fact that $ p_{\min}\leq q\leq 1$. 
	We now need a lower bound on the
	probability $\Pr(Z_{m+2,k}\geq \mathbb{E}[Z_{m+2,k}]+1)$, 
	where 
	\begin{displaymath}
	    Z_{m+2,k} \sim \text{Bin}\left(k-m-1,\frac{1}{4}\right).
	\end{displaymath}
	Now let $k-m-1 = 4t+x = 4(t-1)+x+4$, where $t \in \mathbb{N}$ 
	and $x\in \{0,1,2,3\}$. Then $\mathbb{E}[Z_{m+2,k}]=t+\frac{x}{4}$, and 
	\begin{align*}
	&\Pr(Z_{m+2,k}\geq \mathbb{E}[Z_{m+2,k}]+1) \\
	&= \Pr(Z_{m+2,k}\geq t+\frac{x}{4}+1)\\
	&\geq \Pr\left(Z_{m+2,k}\geq 4(t-1)+x+4\right)\\
	&\geq \Pr(Z_{m+2,m+2+4(t-1)-1}\geq t-1)\\
    &\qquad \qquad \cdot \Pr(Z_{m+2+4(t-1),n}\geq x+4)\\
	&= \Pr(Z_{m+2,m+2+4(t-1)-1}\geq \mathbb{E}[Z_{m+2,m+2+4(t-1)-1}])\\
	&\qquad\qquad \cdot \Pr(Z_{m+2+4(t-1),n}\geq x+4)\\
	&\geq \frac{1}{2}\cdot \left(\frac{1}{4}\right)^{x+4} 
	\geq \frac{1}{2}\cdot \left(\frac{1}{4}\right)^7.
	\end{align*}
	The result follows Lemma~\ref{int-expectation},
	where $\mathbb{E}[Z_{m+2,m+2+4(t-1)-1}]$ is an integer,
	and $x\le 3$. 
\end{proof}

\begin{lemma}\label{inequality-g1}
	For any constant $d\leq 1$ and 
    $\mathbb{E}[Y_{1,n}]\ge f_{j-1}-\ell/n$, it holds that
	\begin{equation}\label{eq:inequality-g1}
	\mathbb{E}\left[Y_{1,n}\right]+d\sqrt{n-\mathbb{E}\left[Y_{1,n}\right]} \geq f_{j-1}+d\sqrt{n-f_{j-1}}.
	\end{equation}
\end{lemma}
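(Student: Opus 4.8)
The plan is to regard \eqref{eq:inequality-g1} as a one‑variable inequality for the function $g\colon[0,n]\to\mathbb{R}$, $g(x):=x+d\sqrt{n-x}$, namely $g(E)\ge g(f)$, where I abbreviate $E:=\mathbb{E}[Y_{1,n}]$ and $f:=f_{j-1}$. Two facts come for free from the context in which the lemma is applied (Step~3 of the proof of Theorem~\ref{onemax-large-mu}, which verifies condition (G1) of Theorem~\ref{thm:levelbasedtheorem}). First, the level index satisfies $j\le m-1$, so $f=f_{j-1}\le f_{m-2}<n$; since the $f_i$ are integers this gives $n-f\ge 1$, hence $\sqrt{n-f}\ge 1\ge d$. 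Second, the hypothesis gives $f-\ell/n\le E\le n$ with $0\le\ell\le n$, so in particular $E\ge f-1$.

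The main manoeuvre is to linearise the square roots using the conjugate identity $\sqrt a-\sqrt b=(a-b)/(\sqrt a+\sqrt b)$ at $a=n-f$, $b=n-E$, which rewrites the difference as
\begin{equation*}
g(E)-g(f)=(E-f)-d\bigl(\sqrt{n-f}-\sqrt{n-E}\bigr)=(E-f)\left(1-\frac{d}{\sqrt{n-f}+\sqrt{n-E}}\right).
\end{equation*}
In the principal case $E\ge f$ --- which is the situation that actually occurs, since the $\mu$ fittest individuals contribute at least $f_{j-1}$ ones in total --- both factors on the right-hand side are non-negative: the first because $E\ge f$, the second because $\sqrt{n-f}+\sqrt{n-E}\ge\sqrt{n-f}\ge 1\ge d$. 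Hence $g(E)\ge g(f)$, which is \eqref{eq:inequality-g1}.

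The remaining sliver $f-\ell/n\le E<f$ is the step I expect to require the most care. There $E-f<0$ while the bracketed factor is still non-negative, so the identity by itself only gives $g(E)\ge g(f)-(f-E)\ge g(f)-\ell/n\ge g(f)-1$. I would close this $O(1)$ gap by observing that it is harmless for the unique use of the lemma: the bound \eqref{eq:inequality-g1} is only ever fed into an invocation of Lemma~\ref{ce-lemma} with a constant $d^*\ge 1/p_{\min}=4>d$, and since $d^*\sqrt{n-\lfloor E\rfloor}-d\sqrt{n-E}\ge(d^*-d)\sqrt{n-E}\ge 3$ (using $\lfloor E\rfloor\le E$, $d^*-d\ge 3$, and $\sqrt{n-E}\ge\sqrt{n-f}\ge 1$), shifting the target threshold by at most $1$ cannot spoil the eventual conclusion $\Pr(Y_{1,n}\ge f_j)\ge\kappa$. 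Equivalently, one may simply state the lemma with the harmless weakening $E+d\sqrt{n-E}\ge f_{j-1}+d\sqrt{n-f_{j-1}}-1$, which the displayed computation already proves unconditionally whenever $d\le 1$, $f_{j-1}\le n-1$ and $E\ge f_{j-1}-1$, and which is exactly what is needed downstream.
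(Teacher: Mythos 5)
Your proposal is correct, and on the one point that matters it is more careful than the paper's own proof. Algebraically the two arguments are the same: your factorisation $g(E)-g(f)=(E-f)\bigl(1-\tfrac{d}{\sqrt{n-f}+\sqrt{n-E}}\bigr)$ is exactly the paper's auxiliary function $g(y,f_{j-1})=y/(\sqrt{n-f_{j-1}}-\sqrt{n-f_{j-1}-y})=\sqrt{n-f_{j-1}}+\sqrt{n-f_{j-1}-y}$ in disguise, and in the regime $E\ge f_{j-1}$ both arguments reduce to $\sqrt{n-f_{j-1}}\ge 1\ge d$. The difference is in the regime $y=E-f_{j-1}<0$, which is reachable because the $\ell$ marginals at the upper border contribute only $1-1/n$ each (so your aside that $E\ge f_{j-1}$ ``actually occurs'' is not guaranteed --- but you cover that case anyway). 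There the paper divides by the \emph{negative} quantity $\sqrt{n-f_{j-1}}-\sqrt{n-f_{j-1}-y}$ without reversing the inequality, and treats the target as $d\le g$ when it should be $d\ge g\ge 1$; consequently its Case~2 argument does not go through, and indeed the lemma as stated is false: for $n=100$, $f_{j-1}=50$, $\ell=50$, $\mathbb{E}[Y_{1,n}]=49.5=f_{j-1}-\ell/n$ and $d=1$ one gets $49.5+\sqrt{50.5}\approx 56.61<57.07\approx 50+\sqrt{50}$. Your repair --- proving the inequality only up to an additive slack of $\ell/n\le 1$ and checking that the margin $d^{*}-d\ge 3$ together with $\sqrt{n-\mathbb{E}[Y_{1,n}]}\ge 1$ in the subsequent invocation of Lemma~\ref{ce-lemma} absorbs a unit shift of the threshold (the case where the shifted threshold exceeds $n$ being handled by the $\min\{\cdot,n\}$ in that lemma and $f_j\le n$) --- is sound, and is what Step~3 of Theorem~\ref{onemax-large-mu} actually needs. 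In short: your proof is valid as a corrected, slightly weakened version of the lemma, and the weakening is harmless downstream.
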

\begin{proof}
	Let us rewrite (\ref{expectation-of-Y-n}) by introducing a variable $x\geq 0$ as follows:
	\begin{equation}\label{expectation}
	\mathbb{E}\left[Y_{1,n}\right]=f_{j-1}-\frac{\ell}{n}+x
	\end{equation}
	We consider two different cases.
	\begin{itemize}
		\item Case 1: If $x=\ell/n$, then $\mathbb{E}\left[Y_{1,n}\right]=f_{j-1}$, and 
		the lemma 
		holds for all values of $d$.
		\item Case 2: If $x\neq \ell/n$, then 
		substituting (\ref{expectation}) into \eqref{eq:inequality-g1} 
		and let $y:=x-\ell/n \in [-\ell/n,0)\cup (0,n-f_{j-1}]$, 
		we have
		\begin{equation*}\label{d-0-expression}
		d \leq g\left(y,f_{j-1}\right) := \frac{y}{\sqrt{n-f_{j-1}}-\sqrt{n-f_{j-1}-y}}
		\end{equation*}
		This always holds if we pick a constant 
		\begin{displaymath}
		        d \leq \min_{y,f_{j-1}}g\left(y,f_{j-1}\right).
		\end{displaymath}
		From $\partial g/\partial y=0$, we obtain $y=0$. Note that when $y=0$,  
		$\partial^2 g/\partial y^2 <0$. This means $g(y,f_{j-1})$ reaches the maximum 
		value when $y=0$ with respect to $f_{j-1}$, and 
		\begin{align*}
		&g\left(y,f_{j-1}\right) \\
		&\geq \min\big\{g\left(-\ell/n, f_{j-1}\right), g\left(n-f_{j-1}, f_{j-1}\right)\big\}\\
		&=\min\big\{\sqrt{n-f_{j-1}}, \sqrt{n-f_{j-1}}+\sqrt{n-f_{j-1}+\ell/n}\big\}\\
		&=\sqrt{n-f_{j-1}} \\
		&\geq \min_{f_{j-1}}\big\{\sqrt{n-f_{j-1}}\big\} \\
		&=1 
		\end{align*}
		due to $f_{j-1}\le n-1$. 
	\end{itemize}
	The lemma is proved by combining results of the two cases.
\end{proof}

\bibliographystyle{plain}     
\bibliography{references}
\end{document}